\title{\textbf{Probabilistic Guarantees of Stochastic Recursive Gradient in Non-Convex Finite Sum Problems}}
\author[1]{Yanjie Zhong}
\author[1]{Jiaqi Li\thanks{Corresponding author: Jiaqi Li, lijiaqi@wustl.edu}}
\author[1]{Soumendra Lahiri}
\affil[1]{Department of Statistics and Data Science, Washington University in St. Louis}
\date{\today}
\begin{document}

\maketitle

\begin{abstract}
This paper develops a new dimension-free Azuma-Hoeffding type bound on summation norm of a martingale difference sequence with random individual bounds. With this novel result, we provide high-probability bounds for the gradient norm estimator in the proposed algorithm Prob-SARAH, which is a modified version of the StochAstic Recursive grAdient algoritHm (SARAH), a state-of-art variance reduced algorithm that achieves optimal computational complexity in expectation for the finite sum problem. The in-probability complexity by Prob-SARAH matches the best in-expectation result up to logarithmic factors. Empirical experiments demonstrate the superior probabilistic performance of Prob-SARAH on real datasets compared to other popular algorithms.

\vspace{1cm}
\noindent\textit{\textbf{Keywords:}} machine learning, variance-reduced method, stochastic gradient descent, non-convex optimization

\end{abstract}

\setlength{\parindent}{15pt}

\newcommand{\theHalgorithm}{\arabic{algorithm}}

\def\bdy{\mathbf y}
\def\bdx{\mathbf x}
\def\bdX{\mathbf X}
\def\bdz{\mathbf z}
\def\bds{\mathbf s}
\def\bdw{\mathbf w}
\def\bdepsilon{\boldsymbol{\epsilon}}
\def\bdnu{\boldsymbol{\nu}}
\def\bdrho{\boldsymbol{\rho}}
\def\bdw{\mathbf w}

\def\EE{\mathbb E}
\def\NN{\mathbb N}
\def\PP{\mathbb P}
\def\RR{\mathbb R}
\def\ZZ{\mathbb Z}

\def\D{\mathcal D}

\def\FFF{\mathcal F}
\def\III{\mathcal I}
\def\JJJ{\mathcal J}
\def\AAA{\mathcal A}
\def\OOO{\mathcal O}
\def\upj{^{(j)}}
\def\upT{^{(T)}}
\def\lp({\left(}
\def\rp){\right)}

\newtheorem{theorem}{Theorem}[section]
\newtheorem{lemma}{Lemma}[section]
\newtheorem{corollary}{Corollary}[section]
\newtheorem{conclusion}{Conclusion}[section]
\newtheorem{proposition}{Proposition}[section]
\newtheorem{definition}{Definition}[section]
\newtheorem{assumption}{Assumption}[section]
\newtheorem{remark}{Remark}[section]

\newpage

\section{Introduction}

We consider the popular non-convex finite sum optimization problem in this work, that is, estimating $\bdx^* \in \D \subseteq \RR^d$ minimizing the following loss function
\begin{equation}
f(\bdx) = \frac{1}{n}\sum\limits_{i=1}\limits^{n}f_i(\bdx),\ \bdx \in \D,
\label{mainproblem}
\end{equation}
where $f_i: \RR^d\mapsto\RR$ is a potentially non-convex function on some compact set $\D$. Such non-convex problems lie at the heart of many applications of statistical learning \textcite{james2013introduction} and machine learning \textcite{goodfellow2016deep}. 

Unlike convex optimization problems, in general, non-convex problems are intractable and the best we can expect is to find a stationary point. Given a target error $\varepsilon$, since $\nabla f(\bdx^*)=0$, we aim to find an estimator $\hat{\bdx}$ such that roughly $\|\nabla f(\hat{\bdx})\|\leq \varepsilon$, where $\nabla f(\cdot)$ denotes the gradient vector the loss function $f$ and $\|\cdot\|$ is the operator norm. With a non-deterministic algorithm, the output $\hat{\bdx}$ is always stochastic, and the most frequently considered measure of error bound is in expectation, i.e.,
\begin{equation}
    \label{eq_expectation_bound}
    \EE \|\nabla f(\hat{\bdx})\|^2\leq \varepsilon^2.
\end{equation}
There has been a substantial amount of work providing upper bounds on computational complexity needed to achieve the in-expectation bound. However, in practice, we only run a stochastic algorithm for once and an in-expectation bound cannot provide a convincing bound in this situation. Instead, a high-probability bound is more appropriate by nature. Given a pair of target errors $(\varepsilon,\delta)$, we want to obtain an estimator $\hat{\bdx}$ such that with probability at least $1-\delta$, $\|\nabla f(\hat{\bdx})\|\leq \varepsilon$, that is
\begin{equation}
    \label{eq_probability_bound}
    \PP\big(\|\nabla f(\hat{\bdx})\|\leq \varepsilon\big) \ge 1-\delta.
\end{equation}
Though the Markov inequality might help, in general, an in-expectation bound cannot be simply converted to an in-probability bound with a desirable dependency on $\delta$. It would be important to prove upper bounds on high-probability complexity, which ideally should be polylogorithmic in $\delta$ and with polynomial terms comparable to the in-expectation complexity bound. 

Gradient-based methods are favored by practitioners due to simplicity and efficiency and have been widely studied by researchers in the non-convex setting (\cite{nesterov2003introductory,ghadimi2013stochastic,allen2016variance,reddi2016stochastic,fang2018spider,wang2019spiderboost}). Among numerous gradient-based methods, the StochAstic Recursive grAdient algoritHm (SARAH) (\cite{nguyen2017sarah,nguyen2017stochastic,wang2019spiderboost}) is the one with the best first-order guarantee as given an in-expectation error target, in both of convex and non-convex finite sum problems. It is worth noticing that \textcite{li2019ssrgd} attempted to show that a modified version of SARAH is able to approximate the second-order stationary point with a high probability. However, we believe that their application of the martingale Azuma-Hoeffding inequality is unjustifiable because the bounds are potentially random and uncontrollable. In this paper, we shall provide a correct dimension-free martingale Azuma-Hoeffding inequality with rigorous proofs and leverage it to show in-probability properties for SARAH-based algorithms in the non-convex setting.

\subsection{Related Works}
\begin{itemize}
    \item \textbf{High-Probability Bounds:} While most works in the literature of optimization provide in-expectation bounds, there is only a small fraction of works discussing bounds in the high probability sense. \textcite{kakade2009generalization} provide a high-probability bound on the excess risk given a bound on the regret. \textcite{jain2019making}, \textcite{harvey2019tight,harvey2019simple} derive some high-probability bounds for SGD in convex online optimization problems. \textcite{zhou2018convergence,li2020high} prove high-probability bounds for several adaptive methods, including AMSGrad, RMSProp and Delayed AdaGrad with momentum. All these works rely on (generalized) Freedman's inequality or the concentration inequality given in Lemma 6 in \textcite{jin2019short}. Different from them, our high-probability results are built on a novel Azuma-Hoeffding type inequality proved in this work and Corollary 8 from \textcite{jin2019short}. In addition, we notice that \textcite{li2019ssrgd} provide some probabilistic bounds on a SARAH-based algorithm. However, we believe their use of the plain martingale Azuma-Hoeffding inequality is not justifiable. \textcite{fang2018spider} show in-probability upper bound for SPIDER. Nevertheless, SPIDER's practical performance is inferior due to its accuracy-dependent small step size \textcite{tran2019hybrid,wang2019spiderboost}.
    \item \textbf{Variance-Reduced Methods in Non-Convex Finite Sum Problems:} Since the invention of the variance-reduction technique in \textcite{roux2012stochastic,johnson2013accelerating,defazio2014saga}, there has been a
large amount of work incorporating this efficient technique to methods targeting the non-convex finite-sum problem. Subsequent methods, including SVRG (\cite{allen2016variance,reddi2016stochastic,li2018simple}), SARAH (\cite{nguyen2017sarah,nguyen2017stochastic}), SCSG (\cite{lei2017less,lei2017non,horvath2020adaptivity}), SNVRG (\cite{zhou2018convergence}), SPIDER (\cite{fang2018spider}), SpiderBoost (\cite{wang2019spiderboost}) and PAGE (\cite{li2021page}), have greatly reduced computational complexity in non-convex problems.
\end{itemize}

\subsection{Our Contributions}
\begin{itemize}
    \item \textbf{Dimension-Free Martingale Azuma-Hoeffding inequality:} To facilitate our probabilistic analysis, we provide a novel Azuma-Hoeffding type bound on the summation norm of a martingale difference sequence. The novelty is two-fold. Firstly, same as the plain martingale Azuma-Hoeffding inequality, it provides a dimension-free bound. In a recent paper, a sub-Gaussian type bound has been developed by \textcite{jin2019short}. However, their results are not dimension-free. Our technique in the proof is built on a classic paper by \textcite{pinelis1992approach} and is completely different from the random matrix technique used in \textcite{jin2019short}. Secondly, our concentration inequality allows random bounds on each element of the martingale difference sequence, which is much tighter than a large deterministic bound. It should be highlighted that our novel concentration result perfectly suits the nature of SARAH-style methods where the increment can be characterized as a martingale difference sequence and it can be further used to analyze other algorithms beyond the current paper.
    \item \textbf{In-probability error bounds of stochastic recursive gradient:} We design a SARAH-based algorithm, named Prob-SARAH, adapted to the high-probability target and provably show its good in-probability properties. Under appropriate parameter setting, the first order complexity needed to achieve the in-probability target is $\tilde{\OOO}\left( \frac{1}{\varepsilon^3}\wedge \frac{\sqrt{n}}{\varepsilon^2}\right)$, which matches the best known in-expectation upper bound up to some logarithmic factors (\cite{zhou2018convergence,wang2019spiderboost,horvath2020adaptivity}). We would like to point out that the parameter setting used to achieve such complexity is semi-adaptive to $\varepsilon$. That is, only the final stopping rule relies on $\varepsilon$ while other key parameters are independent of $\varepsilon$, including step size, mini-batch sizes, and lengths of loops. 
    \item \textbf{Probabilistic analysis of SARAH for non-convex finite sum:} Existing literature on the bounds of SARAH is mostly focusing on the strongly convex or general convex settings. We extend the case to the non-convex scenarios, which can be considered as a complimentary study to the stochastic recursive gradient in probability.
\end{itemize}

\subsection{Notation}
For a sequence of sets $\AAA_1,\AAA_2,\ldots$, we denote the smallest sigma algebra containing $\AAA_i$, $i\ge 1$, by $\sigma\big(\bigcup_{i=1}^{\infty}\AAA_i\big)$. By abuse of notation, for a random variable $\bdX$, we denote the sigma algebra generated by $\bdX$ by $\sigma(\bdX)$. We define constant $C_e = \sum_{i=0}^{\infty}i^{-2}$. For two scalars $a,b\in\RR$, we denote $a\wedge b = \min\{a,b\}$ and $a \vee b =\max\{a,b\}$. When we say a quantity $T$ is $\OOO_{\theta_1,\theta_2}(\theta_3)$ for some $\theta_1,\theta_2,\theta_3\in \RR$, there exists a $g\in \RR$ polylogarithmic in $\theta_1$ and $\theta_2$ such that $T\leq g\cdot \theta_3$, and similarly $\tilde\OOO_{(\cdot)}(\cdot)$ is defined the same but up to a logarithm factor. 

\section{Prob-SARAH Algorithm}
The algorithm Prob-SARAH proposed in our work is a modified version of SpiderBoost (\cite{wang2019spiderboost}) and SARAH (\cite{nguyen2017sarah,nguyen2017stochastic}). Since the key update structure is originated from (\cite{nguyen2017sarah}), we call our modified algorithm Prob-SARAH. In fact, it can also be viewed as a generalization of the SPIDER algorithm introduced in (\cite{fang2018spider}).

We present the Prob-SARAH in Algorithm \ref{algo1}, and here, we provide some explanation of the key steps. Following  other SARAH-based algorithms, we adopt a similar gradient approximation design with nested loops, specifically with a checkpoint gradient estimator $\bdnu^{(j)}_0$ using a large mini-batch size $B_j$ in Line 4 and a recursive gradient estimator $\bdnu_{k}^{(j)}$ updated in Line 9. When the mini-batch size $B_j$ is large, we can regard the checkpoint gradient estimator $\bdnu^{(j)}_0$ as a solid approximation to the true gradient at $ \tilde{\bdx}_{j-1}$. With this checkpoint, we can update the gradient estimator $\bdnu_{k}^{(j)}$ with a small mini-batch size $b_j$ while maintaining a desirable estimation accuracy.

To emphasize, our stopping rules in Line 11 of Algorithm~\ref{algo1} is newly proposed, which ensures a critical enhancement of the performance compared to previous literature. In particular, with this new design, we can control the gradient norm of the output with high probability. For a more intuitive understanding of these stopping rules, we will see in our proof sketch section that the gradient norm of iterates in the $j$-th outer iteration, $\|\nabla f\|$, can be bounded by a linear combination of $\big\{\bdnu_k\upj\big\}_{k=1}^{K_j}$ with a small remainder. The first stopping rule, therefore, strives to control the magnitude of the linear combination of $\big\{\bdnu_k\upj\big\}_{k=1}^{K_j}$, while the second stopping rule is specifically designed to control the size of remainder terms. For this purpose, $\varepsilon_j$ should be set as a credible controller of the remainder term, with an example given in Theorems \ref{thm:main1}. In this way, with small preset constants $\tilde{\varepsilon}$ and $\varepsilon$, we guarantee that the output has a desirably small gradient norm, dependent on $\tilde{\varepsilon}$ and $\varepsilon$, when the designed stopping rules are activated. Indeed, Proposition \ref{prop:stopgua} in Appendix \ref{append:stopgua} offers a guarantee that the stopping rule will be definitively satisfied at some point. More refined quantitative results regarding the number of steps required for stopping will follow in Theorems \ref{thm:main1} and Appendix \ref{append:sketch_stopping}.


\begin{algorithm}[t!]
   \caption{Probabilistic Stochastic Recursive Gradient (Prob-SARAH)}
\begin{algorithmic}[1]
   \STATE {\bfseries Input:} sample size $n$, constraint area $\D$, initial point $\tilde{\bdx}_0 \in \D$, large batch size $\{B_j\}_{j\ge 1}$, mini batch size  $\{b_j\}_{j\ge 1}$, inner loop length $\{K_j\}_{j\ge 1}$, auxiliary error estimator $\{\varepsilon_j\}_{j\ge 1}$, errors $\tilde{\varepsilon}^2,\varepsilon^2$
   \FOR{$j=1,2,\ldots$}
   \STATE Uniformly sample a batch $\III_j\subseteq\{1,\ldots,n\}$ without replacement, $|\III_j|=B_j$;
   
   \STATE $\bdnu^{(j)}_0 \, \leftarrow \, \frac{1}{B_j}\sum_{i\in \III_j}\nabla f_{i}( \tilde{\bdx}_{j-1})$;
   
   \STATE  $ \bdx_0^{(j)}\,\leftarrow\, \tilde{\bdx}_{j-1}$;
   
   \FOR{$k=1,2,\ldots,K_j$}
    
   \STATE $\bdx_{k}^{(j)}\,\leftarrow\,\mathrm{Proj}\big(\bdx_{k-1}^{(j)}-\eta_j\bdnu_{k-1}^{(j)},\D\big)$, project the update back to $\D$;
   
   \STATE Uniformly sample a mini-batch $\III_{k}^{(j)}\subseteq\{1,\ldots,n\}$ with replacement and $|\III_{k}^{(j)}|=b_j$;
   
   \STATE $\bdnu_{k}^{(j)}\,\leftarrow\,\frac{1}{b_j}\sum_{i\in \III_{k}^{(j)}}\nabla f_i(\bdx_{k}^{(j)})-\frac{1}{b_j}\sum_{i\in \III_{k}^{(j)}}\nabla f_i(\bdx_{k-1}^{(j)})+\bdnu_{k-1}^{(j)}$;
   \ENDFOR
   \IF{$\frac{1}{K_j}\sum_{k=0}^{K_j-1}\big\| \bdnu_k\upj\big\|^2 \leq \tilde{\varepsilon}^2$ and $\varepsilon_j \leq \frac{1}{2}\varepsilon^2$}
   
   \STATE $\hat{k}\,\leftarrow\,\mathop{\arg\min}_{0\leq k \leq K_j-1}\big\| \bdnu_k\upj\big\|^2$;
   
   \STATE \textbf{Return} $\hat{\bdx}\,\leftarrow\,\bdx_{\hat{k}}\upj$;
   \ENDIF
   \STATE $\tilde{\bdx}_{j}\,\leftarrow\,\bdx_{K_j}\upj$;
   \ENDFOR
\end{algorithmic}
\label{algo1}
\end{algorithm}

\section{Theoretical Results}
\label{sec:theory}
This section is devoted to the main theoretical result of our proposed algorithm Prob-SARAH. We provide the stop guarantee of the algorithm along with the upper bound of the steps. The high-probability error bound of the estimated gradient is also established. The discussion of the dependence of our algorithm on the parameters is available after we introduce our main theorems.

\subsection{Technical Assumptions} \label{subsec:assumption}
We shall introduce some necessary regularized assumptions. Most assumptions are commonly used in the optimization literature. We have further clarifications in Appendix \ref{append:assumption}.

\begin{assumption}[Existence of achievable minimum]
\label{assump:minimumavailable}
Assume that for each $i=1,2,\ldots,n$, $f_i$ has continuous gradient on $\D$ and $\D$ is a compact subset of $\RR^d$. Then, there exists a constant $\alpha_M <\infty$ such that
\begin{equation}
    \label{eq_bound_alphaM}
    \max\limits_{1\leq i\leq n}\sup\limits_{\bdx\in \D}\|\nabla f_i(\bdx)\|\leq \alpha_M.
\end{equation}
Also, assume that there exists an interior point $\bdx^*$ of the set $\D$ such that
$$f(\bdx^*) = \inf\limits_{\bdx\in \D} f(\bdx).$$
\end{assumption}


\begin{assumption}[$L$-smoothness]\label{assump:smooth}
For each $i=1,2,\ldots,n$, $f_i:\D\rightarrow \RR$ is $L$-smooth for some constant $L>0$, i.e.,
$$\|\nabla f_i(\bdx) - \nabla f_i(\bdx')\| \leq L\|\bdx -\bdx'\|,\ \forall\ \bdx,\bdx'\in \D.$$
\end{assumption}




\begin{assumption}[$L$-smoothness extension]\label{assump:extent}
There exists a $L$-smooth function $\tilde{f}:\D\rightarrow \RR$ such that
$$\tilde{f}(\bdx) = f(\bdx),\ \forall\ \bdx\in \D, \quad \text{and} \quad \tilde{f}(\mathrm{Proj}(\bdx,\D)) \leq \tilde{f}(\bdx),\ \forall\ \bdx\in \RR^d,$$
where $\mathrm{Proj}(\bdx,\D)$ is the Euclidean projection of $\bdx$ on some compact set $\D$.
\end{assumption}

\begin{assumption}
Assume that the following conditions hold.
\begin{enumerate}
    \item $\varepsilon \leq \frac{1}{e}$ and $\alpha_M^2 \ge \frac{1}{10240}$, where $\epsilon$ is the target error bound in (\ref{eq_probability_bound}) and $\alpha_M$ is defined in (\ref{eq_bound_alphaM}).
    \item The diameter of $\D$ is at least 1, i.e. $d_1 \triangleq \max\{\|\bdx-\bdx'\|:\bdx,\bdx'\in \D\}\ge 1$.
\end{enumerate}
\label{assump:technical}
\end{assumption}

Assumption \ref{assump:minimumavailable} also indicates that there exists a positive number $\Delta_f$ such that
$\sup_{\bdx\in \D} \big[f(\bdx) - f(\bdx^*)\big] \leq \Delta_f.$ Assumptions \ref{assump:minimumavailable}--\ref{assump:extent} are commonly used in the optimization literature, and Assumption \ref{assump:technical} can be easily satisfied in practical use as long as the initial points are not too far from the optimum. See more comments on assumptions in Appendix \ref{append:assumption}.

\subsection{Main Results on Complexity}

According to the definition given in \textcite{lei2020adaptivity}, an algorithm is called $\varepsilon$-independent if it can guarantee convergence at all target accuracies $\varepsilon$ in expectation without explicitly using $\varepsilon$ in the algorithm. This is a very favorable property because it means that we no longer need to set the target error beforehand. Here, we introduce a similar property regarding the dependency on $\varepsilon$.
\begin{definition}[$\varepsilon$-semi-independence]
An algorithm is $\varepsilon$-semi-independent, given $\delta$, if it can guarantee convergence at all target accuracies $\varepsilon$ with probability at least $\delta$ and the knowledge of $\varepsilon$ is only needed in the post-processing. That is, the algorithm can iterate without knowing $\varepsilon$ and we can select an appropriate iterate out afterwards.
\label{def:semi}
\end{definition}

The newly introduced property can be perceived as the probabilistic equivalent of $\varepsilon$-independence. As stated in the succeeding theorem, under the given conditions, Prob-SARAH can achieve $\varepsilon$-semi-independence, given $\delta$.

\begin{theorem}
Suppose that Assumptions \ref{assump:minimumavailable}, \ref{assump:smooth}, \ref{assump:extent} and \ref{assump:technical} are valid. Given a pair of errors $(\varepsilon,\delta)$, in Algorithm \ref{algo1} (Prob-SARAH), set hyperparameters
\begin{equation}
    \label{eq_paraset1_copy}
\scalebox{0.92}{$
    \eta_j = \frac{1}{4L}, \quad K_j = \sqrt{B_j}= \sqrt{j^2\wedge n}, \quad  b_j = l_jK_j, \quad 
\varepsilon_j = 8L^2\tau_j + 2q_j, \quad  \tilde{\varepsilon}^2 = \frac{1}{5}\varepsilon^2,
$}
\end{equation}
for $j\ge1$, where
\begin{align*}
\scalebox{0.9}{$
\tau_j = \frac{1}{j^3}, \delta'_j = \frac{\delta}{4C_ej^4}, \quad
l_j = 18\Big( \log (\frac{2}{\delta'_j})+\log\log (\frac{2d_1}{\tau_j})\Big), \quad
q_j = \frac{128\alpha_M^2}{B_j}\log(\frac{3}{\delta'_j})\mathbf{1}\left\{B_j<n\right\}.
$}
\end{align*}
Then,
$$
Comp(\varepsilon,\delta) = \tilde{\OOO}_{L,\Delta_f,\alpha_M}\Big(\frac{1}{\varepsilon^3}\wedge \frac{\sqrt{n}}{\varepsilon^2}\Big),
$$
where $Comp(\varepsilon,\delta)$ represents the number of computations needed to get an output $\hat{\bdx}$ satisfying $\left\|\nabla f(\hat{\bdx})\right\|^2 \leq \varepsilon^2$ with probability at least $1-\delta$.
\label{thm:main1}
\end{theorem}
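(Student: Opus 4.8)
The plan is to express the total cost as a sum of per-outer-iteration costs up to the random stopping index $J$, and to control $J$ by combining the deterministic condition $\varepsilon_j\le\tfrac12\varepsilon^2$ with the probabilistic condition on the running average of $\|\bdnu_k\upj\|^2$. First I would tally the cost of outer iteration $j$: Line~4 spends $B_j$ gradient evaluations and each of the $K_j$ inner steps spends $2b_j$ evaluations in Line~9, so the iteration costs $B_j+2b_jK_j=B_j+2l_jK_j^2=B_j(1+2l_j)$ after substituting $K_j=\sqrt{B_j}$ and $b_j=l_jK_j$. Because $l_j$ is polylogarithmic in $j$ and $\delta$, each outer iteration costs $\tilde\OOO(B_j)$, and the whole problem reduces to bounding $\sum_{j\le J}B_j$.

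To locate $J$ I would run the SARAH descent argument in the projected setting. The $L$-smooth extension of Assumption~\ref{assump:extent} makes the projection in Line~7 non-increasing for $\tilde f$, so $L$-smoothness (Assumption~\ref{assump:smooth}) gives a per-step inequality $\tilde f(\bdx_k\upj)-\tilde f(\bdx_{k-1}\upj)\le -\tfrac{\eta_j}{2}\|\nabla f(\bdx_{k-1}\upj)\|^2-\tfrac{\eta_j}{2}(1-L\eta_j)\|\bdnu_{k-1}\upj\|^2+\tfrac{\eta_j}{2}\|\nabla f(\bdx_{k-1}\upj)-\bdnu_{k-1}\upj\|^2$, where the middle sign is favorable because $\eta_j=\tfrac{1}{4L}$. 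Telescoping over the inner loop and over $j=1,\dots,J$ and using $\sup_\D(f-f(\bdx^*))\le\Delta_f$ bounds both $\sum\|\nabla f\|^2$ and $\sum\|\bdnu\|^2$ by $\tilde\OOO(L\Delta_f)$ plus the accumulated estimation error $\sum_{j,k}\|\nabla f(\bdx_k\upj)-\bdnu_k\upj\|^2$. Dividing by $\sum_{j\le J}K_j$ then forces the running average $\tfrac{1}{K_J}\sum_k\|\bdnu_k^{(J)}\|^2$ below $\tilde\varepsilon^2=\tfrac15\varepsilon^2$ once $\sum_{j\le J}K_j\gtrsim L\Delta_f/\varepsilon^2$.

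The estimation error is the crux. Writing the recursion in Line~9 as $\bdnu_k\upj-\nabla f(\bdx_k\upj)=\big(\bdnu_{k-1}\upj-\nabla f(\bdx_{k-1}\upj)\big)+\bds_k$, the increments $\bds_k$ form a martingale-difference sequence whose individual norms are bounded by a \emph{random} quantity proportional to $L\eta_j\|\bdnu_{k-1}\upj\|/\sqrt{b_j}$ (via $L$-smoothness and the mini-batch sampling). This is exactly the setting the paper's novel dimension-free Azuma--Hoeffding inequality is built for; the plain inequality fails because the bounds are path-dependent, which is the gap attributed to prior work. Applying it with the random bounds, together with a batch-concentration bound for the checkpoint error $\|\bdnu_0\upj-\nabla f(\tilde\bdx_{j-1})\|^2=\tilde\OOO(\alpha_M^2/B_j)$, yields a high-probability bound of order $\varepsilon_j=8L^2\tau_j+2q_j$ on the per-block error at confidence $1-\delta'_j$; the choices $l_j\asymp\log(1/\delta'_j)$ and $q_j\asymp(\alpha_M^2/B_j)\log(1/\delta'_j)$ are calibrated so the tail matches $\varepsilon_j$.

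Given control of $J$, I would read off the cost in two regimes from $\sum_{j\le J}K_j=\sum_{j\le J}\sqrt{j^2\wedge n}$. When $J\le\sqrt n$ (equivalently $\varepsilon\gtrsim n^{-1/2}$), $\sum_{j\le J}K_j\asymp J^2$ forces $J\asymp\varepsilon^{-1}$, so $\sum_{j\le J}B_j\asymp\sum_{j\le J}j^2\asymp J^3=\tilde\OOO(\varepsilon^{-3})$; when $J>\sqrt n$ ($\varepsilon\lesssim n^{-1/2}$), $\sum_{j\le J}K_j\asymp J\sqrt n$ forces $J\asymp(\sqrt n\,\varepsilon^2)^{-1}$, so $\sum_{j\le J}B_j\asymp n^{3/2}+(J-\sqrt n)n\asymp Jn=\tilde\OOO(\sqrt n\,\varepsilon^{-2})$. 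These glue at $\varepsilon\asymp n^{-1/2}$ to give $\mathrm{Comp}(\varepsilon,\delta)=\tilde\OOO_{L,\Delta_f,\alpha_M}(\varepsilon^{-3}\wedge\sqrt n\,\varepsilon^{-2})$. For the accompanying correctness claim, at stopping $\|\nabla f(\hat\bdx)\|^2\le 2\|\bdnu_{\hat k}^{(J)}\|^2+2\|\nabla f(\bdx_{\hat k}^{(J)})-\bdnu_{\hat k}^{(J)}\|^2$, whose first term is at most the triggered average $\tilde\varepsilon^2$ and whose second is $\le\varepsilon_J\le\tfrac12\varepsilon^2$ on the good event, so the constants in \eqref{eq_paraset1_copy} make the sum $\le\varepsilon^2$; a union bound over $j$ keeps the total failure probability at most $\delta$, since $\sum_{j\ge1}\delta'_j\le\tfrac{\delta}{4C_e}\sum_{j\ge1}j^{-4}\le\tfrac\delta4$ and only a constant number of bad events are discarded per iteration. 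The main obstacle is the high-probability control of the path-dependent recursive error through the new inequality and its calibration to $\varepsilon_j$; the two-regime cost sum is then routine bookkeeping.
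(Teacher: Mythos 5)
Your proposal is correct and follows essentially the same route as the paper: per-iteration cost $\tilde\OOO(B_j)$, a descent-plus-telescoping contradiction to bound the stopping time, the new random-bound Azuma--Hoeffding inequality for the recursive martingale error together with a sampling-without-replacement Hoeffding bound for the checkpoint, a union bound over the per-iteration events, and the same two-regime bookkeeping for $\sum_j B_j$. The only loose points are that the union bound must cover all $K_j$ inner indices per outer iteration rather than a constant number (the $j^{-4}$ decay of $\delta'_j$ absorbs this), and that the output error term also carries a $(\tilde{\sigma}_{\hat k}^{(j)})^2$ contribution controlled by the first stopping rule rather than by $\varepsilon_j$ alone; neither affects the conclusion.
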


More detailed results can be found in Appendix \ref{append:complexity}. In appendix \ref{append:complexity}, we also introduce another hyper-parameter setting that can lead to a complexity with better dependency on $\alpha_M^2$, which could be implicitly affected by the choice of constraint region $\D$.



\subsection{Proof Sketch}
In this part, we explain the idea of the proof of Theorem \ref{thm:main1}. Same proofing strategy can be applied to other hyper-parameter settings. First, we bound the difference between $\bdnu_k\upj$ and $\nabla f\big( \bdx_k\upj\big)$ by a linear combination of $\{\|\bdnu_{m}\upj\|\}_{m=0}^{k-1}$ and small remainders, with which we can have a good control on $\|\nabla f\big( \bdx_k\upj\big)\|$ when the stopping rules are met. Second, we bound the number of steps we need to meet the stopping rules. Combining these 2 key components, we can smoothly get the final conclusions.

Let us firstly introduce a novel Azuma-Hoeffding type inequality, which is key to our analysis.
\begin{theorem}[Martingale Azuma-Hoeffding Inequality with Random Bounds]
Suppose $\bdz_1,\ldots,\bdz_K\in \RR^d$ is a martingale difference sequence adapted to $\FFF_0,\ldots,\FFF_K$. Suppose $\{r_k\}_{k=1}^K$ is a sequence of random variables such that $\|\bdz_k\|\leq r_k$ and $r_k$ is measurable with respect to $\FFF_k$, $k=1,\ldots,K$. Then, for any fixed $\delta>0$, and $B>b>0$, with probability at least $1-\delta$, for $1\leq t\leq K$, either
\begin{equation*}
\scalebox{0.9}{
$
\exists 1\leq t\leq K,\ \sum\limits_{k=1}\limits^{t}r_k^2 \ge B \text{ or }
\Big\| \sum\limits_{k=1}\limits^{t}\bdz_k\Big\|^2 \leq 9\max\Big\{ \sum\limits_{k=1}\limits^{t}r_k^2,b\Big\}\Big(\log (\frac{2}{\delta})+\log \log (\frac{B}{b})\Big).
$}
\end{equation*}
\label{thm:martingaleah}
\end{theorem}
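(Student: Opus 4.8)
The plan is to build a single dimension-free exponential supermartingale in the manner of \textcite{pinelis1992approach}, extract from it a time-uniform deviation bound for each fixed tuning parameter $\lambda$ via Ville's maximal inequality, and then run a dyadic \emph{peeling} over the random cumulative bound $A_t\triangleq\sum_{k=1}^{t}r_k^2$ so that $\lambda$ can be chosen adaptively to the (random) scale of $A_t$. The double logarithm $\log\log(B/b)$ is precisely the cost of the union bound across the $N\asymp\log_2(B/b)$ dyadic scales, and the escape alternative $\{\exists t:\sum_k r_k^2\ge B\}$ simply records that the peeling only covers the window $[b,B]$.

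First I would record the one-step inequality underlying the whole argument: for a fixed $x$ in the Hilbert space and an increment $d$ with $\EE[d\mid\FFF_{k-1}]=0$ and $\|d\|\le c$,
\[
\EE\bigl[\cosh(\lambda\|x+d\|)\mid\FFF_{k-1}\bigr]\le\cosh(\lambda\|x\|)\cosh(\lambda c)\le\cosh(\lambda\|x\|)\,e^{\lambda^{2}c^{2}/2}.
\]
This is the step where the geometry of \textcite{pinelis1992approach} makes the bound dimension-free, in contrast to the random-matrix route of \textcite{jin2019short}. Writing $S_t=\sum_{k\le t}\bdz_k$ and taking per-step bounds $c_k$, it follows that $N_t=\cosh(\lambda\|S_t\|)\prod_{k\le t}\cosh(\lambda c_k)^{-1}$ is a nonnegative supermartingale with $N_0=1$. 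Ville's inequality then gives $\PP(\exists t:N_t\ge1/\delta)\le\delta$, and on the complementary event, using $\cosh(u)\ge\tfrac12 e^{u}$, simultaneously for all $t$,
\[
\|S_t\|\le\tfrac1\lambda\log\tfrac2\delta+\tfrac\lambda2\sum_{k\le t}c_k^{2}.
\]

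The right-hand side is minimized by a $\lambda$ that depends on the realized value of $\sum_{k\le t}c_k^2$, which a single supermartingale cannot accommodate; this is resolved by peeling. I would fix the grid $v_i=b\,2^{\,i-1}$ spanning $[b,B]$, set $\lambda_i=\sqrt{2\log(2/\delta_i)/v_i}$, run the previous display at confidence $\delta_i=\delta/N$ for each $i$, and union bound. On the resulting event of probability at least $1-\delta$, for any $t$ with $A_t\le B$ I locate the scale with $A_t\in(v_{i-1},v_i]$ (or $i=1$ when $A_t\le b$), insert $\lambda_i$, and use $v_i\le2\max\{A_t,b\}$ together with $\log(2/\delta_i)=\log(2N/\delta)\le\log\tfrac2\delta+\log\log\tfrac{B}{b}+O(1)$; balancing the two terms yields $\|S_t\|^{2}\le 9\max\{A_t,b\}\bigl(\log\tfrac2\delta+\log\log\tfrac Bb\bigr)$ once the roundings are absorbed into the constant.

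The genuine obstacle, and the point distinguishing this from the classical Azuma--Hoeffding inequality, is the interplay between the \emph{random} per-step bounds and the supermartingale construction. Since $r_k$ bounds $\|\bdz_k\|$ but is revealed only at time $k$, the compensating factor $\cosh(\lambda r_k)^{-1}$ is not $\FFF_{k-1}$-measurable and cannot be pulled outside the one-step conditional expectation; moreover, the naive exponential with cumulative compensator $\sum r_k^2$ is genuinely \emph{not} a supermartingale for asymmetric increments, so the quadratic-variation scaling cannot be obtained for free. I would address this by freezing the bound at each dyadic scale through the stopping time $\sigma_v\triangleq\inf\{t:A_t>v\}$ (a stopping time because $A_t$ is adapted) and working with the stopped martingale, whose cumulative bound is controlled by $v$ up to a single terminal increment; the overshoot of that non-predictable terminal step is exactly what the escape alternative $\{\exists t:\sum_k r_k^2\ge B\}$ and the scale-doubling slack $v_i\le2\max\{A_t,b\}$ are there to absorb. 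Getting this localization to mesh with the peeling union bound while preserving both the constant $9$ and a bound stated in the realized $\sum r_k^2$ rather than a predictable majorant is where I expect essentially all of the technical difficulty to lie.
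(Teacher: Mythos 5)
Your overall architecture coincides with the paper's proof: a dimension-free $\cosh$ supermartingale in the spirit of Pinelis, a time-uniform maximal inequality, and a dyadic peeling over the scale of $\sum_k r_k^2$ exactly as in Corollary~8 of \textcite{jin2019short}, with $\log\log(B/b)$ arising from the union bound over the $\asymp\log_2(B/b)$ scales and the escape alternative recording that the grid only covers $[b,B]$. Two details of your first step deserve attention, though. The intermediate inequality $\EE[\cosh(\lambda\|x+d\|)\mid\FFF_{k-1}]\le\cosh(\lambda\|x\|)\cosh(\lambda c)$ is not the scalar $\cosh$-addition-plus-Hoeffding computation once $x,d$ live in $\RR^d$, and the paper does not prove it; instead it sets $f_k(t)=\EE_{k-1}\cosh(\lambda\|\bds_{k-1}+t\bdz_k\|)$, shows $f_k'(0)=0$ and $f_k''(t)\le 2\lambda^2 r_k^2 f_k(t)$, and invokes Lemma~3 of \textcite{pinelis1992approach} to get $f_k(1)\le f_k(0)e^{\lambda^2 r_k^2}$. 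The resulting supermartingale is $G_k=\cosh(\lambda\|\bds_k\|)\exp(-\lambda^2\sum_{i\le k}r_i^2)$ --- precisely the ``naive exponential with cumulative compensator'' that you assert is ``genuinely not a supermartingale for asymmetric increments.'' That assertion is incorrect (and inconsistent with your own chain $\cosh(\lambda c)\le e^{\lambda^2c^2/2}$, which would hand you the exponentially compensated version); the safe route is to compensate by $e^{-\lambda^2 r_k^2}$ as the paper does and absorb the lost factor of $2$ into the constant $9$.

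The one genuine subtlety you isolate --- $r_k$ is only $\FFF_k$-measurable, so the compensating factor cannot a priori be pulled outside $\EE_{k-1}$ --- is also where your proposed remedy fails. Stopping at $\sigma_v=\inf\{t:\sum_{k\le t}r_k^2>v\}$ controls the cumulative sum of the stopped process, but the supermartingale property is a one-step statement: at every $k\le\sigma_v$ you still need $\EE_{k-1}\big[\cosh(\lambda\|\bds_k\|)e^{-\lambda^2 r_k^2}\big]\le\cosh(\lambda\|\bds_{k-1}\|)$, and localization does nothing to the factorization of $e^{-\lambda^2 r_k^2}$ out of that conditional expectation. The paper makes no such repair; its proof simply writes $\EE_{k-1}[\|\bdz_k\|^2\cosh(\cdot)]\le r_k^2\,\EE_{k-1}[\cosh(\cdot)]$, i.e., it uses $r_k$ as though it were $\FFF_{k-1}$-measurable --- which it is in the downstream application, where $r_k=\tfrac{2L}{b_j}\|\bdx_m\upj-\bdx_{m-1}\upj\|$ is determined before the $k$-th index is drawn. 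So you should either add predictability of $r_k$ as a hypothesis (which is all the proof actually uses) or produce a genuinely new device; the dyadic stopping times are a detour that neither matches the paper's argument nor closes the gap you correctly identified. The peeling itself, as you describe it, is fine and is exactly the step the paper delegates to the proof of Corollary~8 in \textcite{jin2019short}.
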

\vspace{-1cm}
\begin{remark}
    It is noteworthy that this probabilistic bound on large-deviation is dimension-free, which is a nontrivial extension of Theorem 3.5 in \textcite{pinelis1994optimum}. If $r_1,r_2,\ldots,r_K$ are not random, we can let $B=\sum_{k=1}^{K}r_k^2 + \zeta_1$ and $b=\zeta_2B$ with $\zeta_1>0$, $0<\zeta_2<1$. Since $\zeta_1$ can be arbitrarily close to 0 and  $\zeta_2$ can be arbitrarily close to 1, we can recover Theorem 3.5 in \textcite{pinelis1994optimum}. Compared with Corollary 8 in \textcite{jin2019short}, which can be viewed as a sub-Gaussian counterpart of our result, a key feature of our Theorem \ref{thm:martingaleah} is its dimension-independence. We are also working towards improving the bound in Corollary 8 from \textcite{jin2019short} to a dimension-free one.
\end{remark}

The success of Algorithm \ref{algo1} is largely because $\nabla f(\bdx_k\upj)$ is well-approximated by $\bdnu_k\upj$, and meanwhile $\bdnu_k\upj$ can be easily updated. We can observe that $\bdnu_k\upj-\nabla f(\bdx_k\upj)$ is actually sum of a sequence of martingale difference as
{\footnotesize
\begin{align}
    & \bdnu_k\upj-\nabla f(\bdx_k\upj) 
    = \Big[\frac{1}{b_j}\sum_{i\in \III_k\upj}\nabla f_i(\bdx_k\upj)   - \frac{1}{b_j}\sum_{i\in \III_k\upj}\nabla f_i(\bdx_{k-1}\upj) \nonumber  +\nabla f(\bdx_{k-1}\upj)- \nabla f(\bdx_k\upj) \Big] \nonumber \\
    & \quad+ \left[\bdnu_{k-1}\upj - \nabla f(\bdx_{k-1}\upj)\right] = \sum_{m=1}^k \Big[\frac{1}{b_j}\sum_{i\in \III_m\upj}\nabla f_i(\bdx_m\upj)  - \frac{1}{b_j}\sum_{i\in \III_m\upj}\nabla f_i(\bdx_{m-1}\upj) \nonumber \\
    & \quad +\nabla f(\bdx_{m-1}\upj) - \nabla f(\bdx_m\upj) \Big]+ \left[\bdnu_{0}\upj - \nabla f(\bdx_{0}\upj)\right].
\end{align}
}
To be more specific, let $\FFF_0 = \{\emptyset, \Omega\}$, and iteratively define $\FFF_{j,-1} = \FFF_{j-1}$, $\FFF_{j,0} = \sigma\big(\FFF_{j-1}\cup \sigma(\III_j)\big)$, $\FFF_{j,k}=\sigma\big(\FFF_{j,0}\cup \sigma( \III_{k}\upj)\big)$, $\FFF_j=\sigma\big(\bigcup\limits_{k=1}\limits^{\infty}\FFF_{j,k}\big)$, $j\ge 1,k\ge 1$. We also denote $\bdepsilon_0\upj\triangleq \bdnu_0\upj-\nabla f\big(\bdx_0\upj\big)$, $\bdepsilon_m\upj \triangleq \frac{1}{b_j}\sum\limits_{i\in \III_m\upj}\nabla f_i(\bdx_m\upj) - \nabla f(\bdx_m\upj) +\nabla f(\bdx_{m-1}\upj) - \frac{1}{b_j}\sum\limits_{i\in \III_m\upj}\nabla f_i(\bdx_{m-1}\upj)$, $m\ge 1$. Then, we can see that $\{\bdepsilon_m\upj\}_{m=0}^{k}$ is a martingale difference sequence adapted to $\{ \FFF_{j,m}\}_{m=-1}^{k}$. With the help of our new Martingale Azuma-Hoeffding inequality, we can control the difference between $\bdnu_k\upj$ and $\nabla f\big( \bdx_k\upj\big)$ by a linear combination of $\{\|\bdnu_{m}\upj\|\}_{m=0}^{k-1}$ and small remainders, with details given in Appendix \ref{append:sketch_diff}. Then, given the stopping rules in line 11 and selection method specified in line 12 of Algorithm \ref{algo1}, it would be not hard for us to obtain $\left\|\nabla f(\hat{\bdx})\right\|^2 \leq \varepsilon^2$ with a high probability. More details can be found in Appendix \ref{append:sketch_output}.


Another key question needed to be resolved is, when the algorithm can stop? The following analysis can build some intuitions for us. Given a $T\in \ZZ_+$, with the bound given in Proposition \ref{prop:innerloop} in Appendix \ref{append:proof}, with a high probability,
\begin{align}
-\Delta_f &\leq f\left( \tilde{\bdx}_{{2T}}\right) - f\left( \tilde{\bdx}_{{T}}\right) \leq A_T - \frac{1}{16L}\sum\limits_{j=T+1}\limits^{2T}\sum\limits_{k=0}\limits^{K_j-1} \left\| \bdnu_k\upj\right\|^2,
\label{eq:fvaluebound2}
\end{align}
where $A_T$ is upper bounded by a value polylogorithmic in $T$. As for the second summation, if $\varepsilon_j \leq \frac{1}{2}\varepsilon^2$ for $j=T,T+1,\ldots,2T$ (which is obviously true when $T$ is moderately large) and our algorithm doesn't stop in $2T$ outer iterations,
{\small
\begin{align*}
&\quad \frac{1}{16L}\sum\limits_{j=T+1}\limits^{2T} \sum\limits_{k=0}\limits^{K_j-1} \left\| \bdnu_k\upj\right\|^2 \ge \frac{\tilde{\varepsilon}^2}{16L} \sum\limits_{j=T+1}\limits^{2T}K_j\nonumber  \ge \frac{\tilde{\varepsilon}^2}{16L} \sum\limits_{j=T+1}\limits^{2T} \left( T\wedge \sqrt{n} \right) = \frac{\tilde{\varepsilon}^2}{16L}T^2 \wedge (\sqrt{n}T),
\end{align*}
}
which grows at least linear in $T$. Consequently, when $T$ is sufficiently large, the RHS of (\ref{eq:fvaluebound2}) can be smaller than $-\Delta_f$, which leads to a contradiction. Roughly, we can see that the stopping time $T$ cannot exceed the order of $\tilde{\OOO} \big( \frac{1}{\varepsilon} \vee \frac{1}{\sqrt{n}\varepsilon^2}\big)$. More details can be found in Appendix \ref{append:sketch_stopping}.
%


\section{Numerical Experiments}
\label{sec:numerical}

In order to validate our theoretical results and show good probabilistic property for the newly-introduced Prob-SARAH, we conduct some numerical experiments where the objectives are possibly non-convex.

\subsection{Logistic Regression with Non-Convex Regularization}

In this part, we consider to add a non-convex regularization term to the commonly-used logistic regression. Specifically, given a sequence of observations $(\bdw_i,y_i)\in \RR^d \times \{-1,1\}$, $i=1,2,\ldots,n$ and a regularized parameter $\lambda>0$, the objective is
$$
f(\bdx) = \frac{1}{n}\sum\limits_{i=1}\limits^{n} \log \left(1+e^{-y_i\bdw_i^T\bdx}\right) + \frac{\lambda}{2}\sum\limits_{j=1}\limits^{d}\frac{x_j^2}{1+x_j^2}.
$$
\begin{figure*}[t]
     \centering
     \begin{subfigure}[b]{0.24\textwidth}
         \centering
         \includegraphics[width=\textwidth]{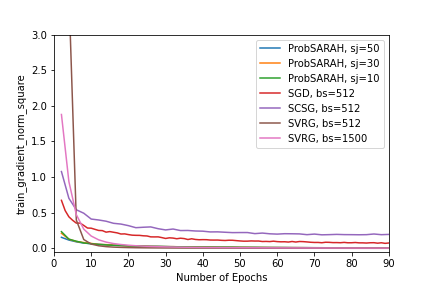}
         \label{fig:g1}
     \end{subfigure}
     \hfill
     \begin{subfigure}[b]{0.24\textwidth}
         \centering
         \includegraphics[width=\textwidth]{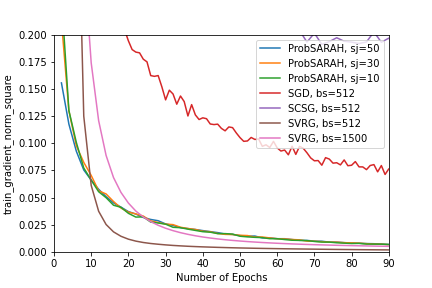}
         \label{fig:g2}
     \end{subfigure}
     \hfill
     \begin{subfigure}[b]{0.24\textwidth}
         \centering
         \includegraphics[width=\textwidth]{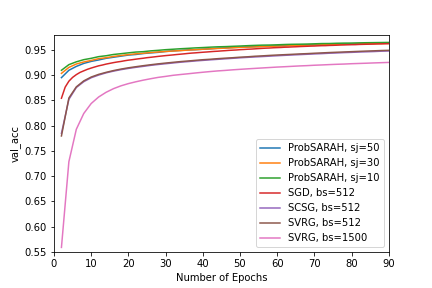}
         \label{fig:g3}
     \end{subfigure}
     \hfill
     \begin{subfigure}[b]{0.24\textwidth}
         \centering
         \includegraphics[width=\textwidth]{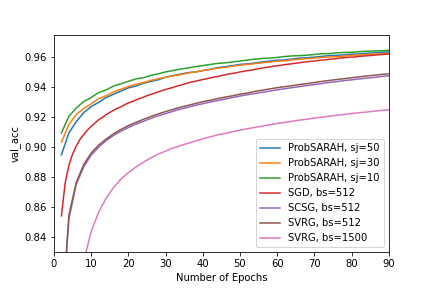}
         \label{fig:g4}
     \end{subfigure}
     
     \begin{subfigure}[b]{0.24\textwidth}
         \centering
         \includegraphics[width=\textwidth]{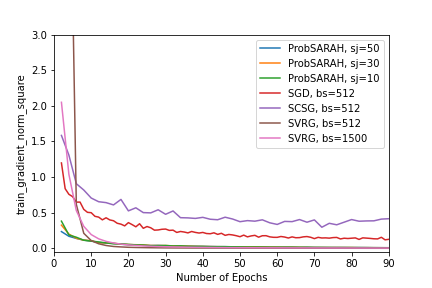}
         \label{fig:g5}
     \end{subfigure}
     \hfill
     \begin{subfigure}[b]{0.24\textwidth}
         \centering
         \includegraphics[width=\textwidth]{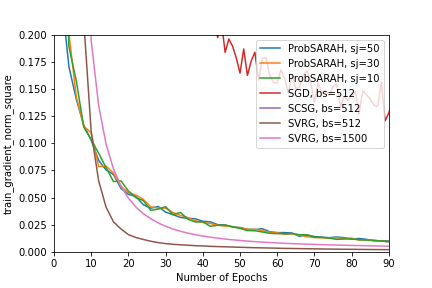}
         \label{fig:g6}
     \end{subfigure}
     \hfill
     \begin{subfigure}[b]{0.24\textwidth}
         \centering
         \includegraphics[width=\textwidth]{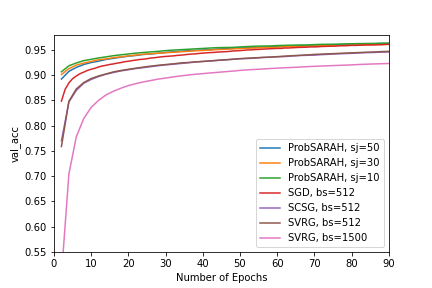}
         \label{fig:g7}
     \end{subfigure}
     \hfill
     \begin{subfigure}[b]{0.24\textwidth}
         \centering
         \includegraphics[width=\textwidth]{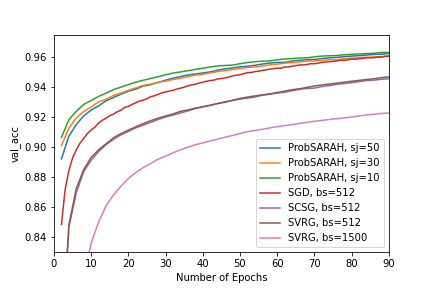}
         \label{fig:g8}
     \end{subfigure}
     \caption{Comparison of convergence with respect to $(1-\delta)$-quantile of square of gradient norm $\left( \|\nabla f\|^2\right)$ and $\delta$-quantile of validation accuracy on the \textbf{MNIST} dataset for $\delta=0.1$ and $\delta=0.01$. The second (fourth) column presents zoom-in figures of those in the first (third) column. Top: $\delta=0.1$. Bottom: $\delta=0.01$. 'bs' stands for batch size. 'sj=x' means that the smallest batch size $\approx x\log x$.}
     \label{fig:mnist1}
\end{figure*}

Such an objective has also been considered in other works like \textcite{horvath2020adaptivity} and \textcite{ji2020history}. Same as other works, we set the regularized parameter $\lambda=0.1$ across all experiments. We compare the newly-introduced Prob-SARAH against three popular methods including SGD (\cite{ghadimi2013stochastic}), SVRG (\cite{reddi2016stochastic}) and SCSG (\cite{lei2017non}). Based on results given in Theorem \ref{thm:main1}, we let the length of the inner loop $K_j \sim j\wedge \sqrt{n}$, the inner loop batch size $b_j \sim \log j \left( j \wedge \sqrt{n} \right)$, the outer loop batch size $B_j \sim j^2 \wedge n$. For fair comparison, we determine the batch size (inner loop batch size) for SGD (SCSG and SVRG) based on the sample size $n$ and the number of epochs needed to have sufficient decrease in gradient norm. For example, for the w7a dataset, the sample size is 24692 and we run 60 epochs in total. In the 20th epoch, the inner loop batch size of Prob-SARAH is approximately $67\log 67 \approx 281$. Thus, we set batch size 256 for SGD, SCSG and SVRG so that they can be roughly matched. In addition, based on the theoretical results from \textcite{reddi2016stochastic}, we also consider a large inner loop batch size comparable to $n^{2/3}$ for SVRG. In addition, we set step size $\eta = 0.01$ for all algorithms across all experiments for simplicity.

Results are displayed in Figure \ref{fig:logistic_gradient}, from which we can see that Prob-SARAH has superior probabilistic guarantee in controlling the gradient norm in all experiments. It is significantly better than SCSG and SVRG under our current setting. Prob-SARAH can achieve a lower gradient norm than SGD at the early stage while SGD has a slight advantage when the number of epochs is large.

\begin{figure*}[htbp!]
     \centering
     \begin{subfigure}[b]{0.32\textwidth}
         \centering
         \includegraphics[width=\textwidth, height = 0.6\textwidth]{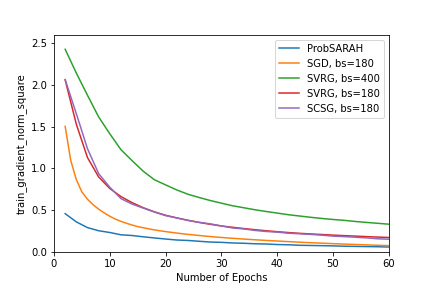}
         \label{fig:m01}
     \end{subfigure}
     \hfill
     \begin{subfigure}[b]{0.32\textwidth}
         \centering
         \includegraphics[width=\textwidth, height = 0.6\textwidth]{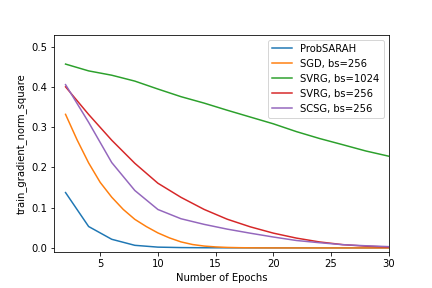}
         \label{fig:i01}
     \end{subfigure}
     \hfill
     \begin{subfigure}[b]{0.32\textwidth}
         \centering
         \includegraphics[width=\textwidth, height = 0.6\textwidth]{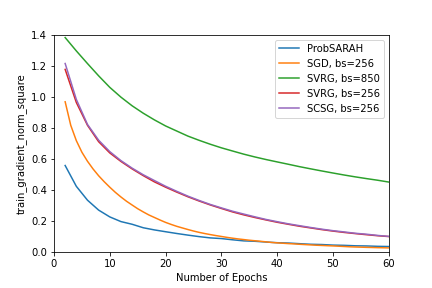}
         \label{fig:w01}
     \end{subfigure}
    \par \vspace{-1.5em}
    \begin{subfigure}[b]{0.32\textwidth}
         \centering
         \includegraphics[width=\textwidth, height = 0.6\textwidth]{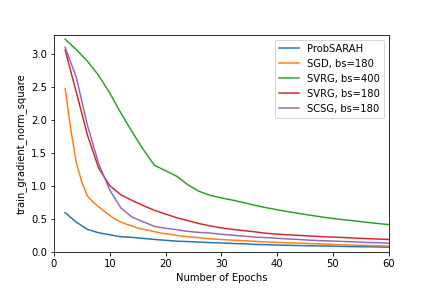}
         \label{fig:m02}
     \end{subfigure}
     \hfill
     \begin{subfigure}[b]{0.32\textwidth}
         \centering
         \includegraphics[width=\textwidth, height = 0.6\textwidth]{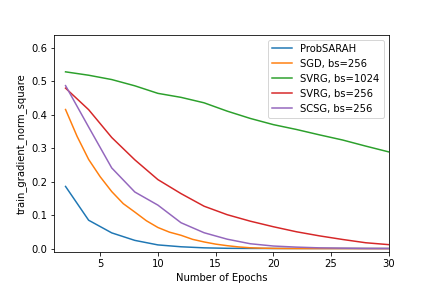}
         \label{fig:i02}
     \end{subfigure}
     \hfill
     \begin{subfigure}[b]{0.32\textwidth}
         \centering
         \includegraphics[width=\textwidth, height = 0.6\textwidth]{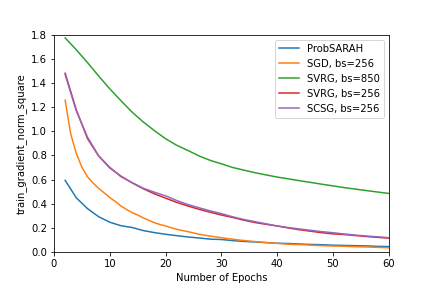}
         \label{fig:w02}
     \end{subfigure}
     \vspace{-1.5em}
     \caption{Comparison of convergence with respect to $(1-\delta)$-quantile of square of gradient norm $\left(\|\nabla f\|^2\right)$ over 3 datasets for $\delta=0.1$ and $\delta=0.01$. Top: $\delta=0.1$. Bottom: $\delta=0.01$. Datasets: \textbf{mushrooms}, \textbf{ijcnn1}, \textbf{w7a} (from left to right). 'bs' stands for batch size.}
        \label{fig:logistic_gradient}
\end{figure*}


\subsection{Two-Layer Neural Network}

We also evaluate the performance of Prob-SARAH, SGD, SVRG and SCSG on the MNIST dataset with a simple 2-layer neural network. The two hidden layers respectively have 128 and 64 neurons. We include a GELU activation layer following each hidden layer. We use the negative log likelihood as our loss function. Under this setting, the objective is possibly non-convex and smooth on any given compact set. The step size is fixed to be 0.01 for all algorithms. For Prob-SARAH, we still have the length of the inner loop $K_j \sim j\wedge \sqrt{n}$, the inner loop batch size $b_j \sim \log j \left( j \wedge \sqrt{n} \right)$, the outer loop batch size $B_j \sim j^2 \wedge n$. But to reduce computational time, we let $j$ start from 10, 30 and 50 respectively. Based on the  same rule described in the previous subsection, we let the batch size (or inner loop batch size) for SGD, SVRG and SCSG be 512.

Results are given in Figure \ref{fig:mnist1}. In terms of gradient norm, Prob-SARAH has the best performance among algorithms considered here when the number of epochs is relatively small. With increasing number of epochs, SVRG tends to be better in finding first-order stationary points. However, based on the 3rd and 4th columns in Figure \ref{fig:mnist1}, SVRG apparently has an inferior performance on the validation set, which indicates that it could be trapped at local minima. In brief, Prob-SARAH achieves the best tradeoff between finding a first-order stationary point and generalization.

We also consider another set of experiments by replacing the GELU activation function with ReLU, resulting in a non-smooth objective. The results are shown in Appendix \ref{append:fig}, which resemble those in Figure \ref{fig:mnist1} and the similar conclusions can be drawn.

\section{Conclusion}
In this paper, we propose a SARAH-based variance reduction algorithm called Prob-SARAH and provide high-probability bounds on gradient norm for estimator resulted from Prob-SARAH. Under appropriate assumptions, the high-probability first order complexity nearly match the one in the in-expectation sense. The main tool used in the theoretical analysis is a novel Azuma-Hoeffding type inequality. We believe that similar probabilistic analysis can be applied to SARAH-based algorithms in other settings.

\begin{appendix}
\section{Remarks and Examples for Assumptions}
\label{append:assumption}

\subsection{More comments on Assumptions \ref{assump:minimumavailable}--\ref{assump:technical}}
\begin{remark}[Convexity and smoothness]
It is worth noticing that Assumption \ref{assump:minimumavailable} is widely used in many non-convex optimization works and can be met for most applications in practice. Assumption \ref{assump:smooth} is also needed in deriving in-expectation bound for many non-convex variance-reduced methods, including state-of-art ones like SPIDER and SpiderBoost. As for Assumption \ref{assump:extent}, it is a byproduct of the compact constraint and can be satisfied with some commonly-seen $f$ and usual choices of $\D$. For more discussions on Assumption \ref{assump:extent}, please see Appendix \ref{sec:assextent}.
\end{remark}

\begin{remark}[Compact set $\D$]
    Compared with other works in the literature of non-convex optimization, the compact constraint region $\D\in \RR^d$ imposed in the finite sum problem (\ref{mainproblem}) may seem somewhat restrictive. In fact, such constraint is largely due to technical convenience and it can be removed with additional condition on gradients. We will elaborate on this point in subsection \ref{subsec:param}. Besides, in many practical applications, it is reasonable to restrict estimators to a compact set when certain prior knowledge is available.
\end{remark}

\subsection{An Example of Assumption \ref{assump:extent}}
\label{sec:assextent}
Let us consider the logistic regression with non-convex regularization where the object function can be characterized as
$$
f(\bdx) = \frac{1}{n}\sum\limits_{i=1}\limits^n \log \left( 1+\text{exp}\left( -y_i\langle \bdw_i,\bdx\rangle \right) \right) + \frac{\lambda}{2}\Phi(\bdx),
$$
where $\Phi(\bdx)=\sum\limits_{j=1}\limits^d (x_j^2)^{\frac{1}{4}}$, $x_j$ is the $j$th element of $\bdx$, $\lambda>0$ is the regularization parameter, $\{y_i\}_{i=1}^n$ are labels and $\{\bdw_i\}_{i=1}^n$ are normalized covariates with norm 1. In fact, for any fixed $\lambda>0$, Assumption \ref{assump:extent} holds with $\tilde{f}=f$ and $\D=\{\bdx:\|\bdx\|\leq R\}$ when $R$ is sufficiently large. Since smoothness is easy to show, we focus on the second part of Assumption \ref{assump:extent}. To show that
$$
f\left( \mathrm{Proj}(\bdx,\D)\right) \leq f(\bdx)
$$
holds for any $\bdx \in \RR^d$, since the projection direction is pointed towards the origin, it suffices to show that for any $\bdnu \in \RR^d$ with $\|\bdnu \|=1$,
$$
\frac{d}{dt}f_i(t\bdnu) = \frac{d}{dt}\Big( \log \big( 1+\text{exp}( -ty_i\langle \bdw_i,\bdnu\rangle ) \big) + \frac{\lambda}{2}\sum\limits_{j=1}\limits^d \sqrt{t} (\nu_j^2)^{\frac{1}{4}} \Big) \ge 0,
$$
when $t\ge R$ for $i=1,2,\ldots,n$, where $\nu_j$ is the $j$th element of $\bdnu$. To see this,
\begin{align*}
&\quad \frac{d}{dt}f_i(t\bdnu)\\
&= \frac{-y_i\langle \bdw_i,\bdnu\rangle \text{exp}\left( -ty_i\langle \bdw_i,\bdnu\rangle \right)}{1+\text{exp}\left( -ty_i\langle \bdw_i,\bdnu\rangle \right)} + \frac{\lambda}{2}\sum\limits_{j=1}\limits^d \frac{(\nu_j^2)^{\frac{1}{4}}}{2\sqrt{t}}\\
&= \frac{-y_i\langle \bdw_i,\bdnu\rangle }{1+\text{exp}\left( ty_i\langle \bdw_i,\bdnu\rangle \right)} + \frac{\lambda}{2}\sum\limits_{j=1}\limits^d \frac{(\nu_j^2)^{\frac{1}{4}}}{2\sqrt{t}}\\
&\ge \frac{-y_i\langle \bdw_i,\bdnu\rangle }{1+\text{exp}\left( ty_i\langle \bdw_i,\bdnu\rangle \right)} + \frac{\lambda}{2} \sum\limits_{j=1}\limits^d \frac{\nu_j^2}{2\sqrt{t}}\\
&= \frac{-y_i\langle \bdw_i,\bdnu\rangle }{1+\text{exp}\left( ty_i\langle \bdw_i,\bdnu\rangle \right)} + \frac{\lambda}{4\sqrt{t}}.
\end{align*}

If $y_i\langle \bdw_i,\bdnu\rangle \leq 0$, we can immediately know that $\frac{d}{dt}f_i(t\bdnu) \ge 0$ for any $t > 0$.

If $y_i\langle \bdw_i,\bdnu\rangle > 0$, let us consider an auxiliary function
$$
g(b) = \frac{-b}{1+e^{tb}}.
$$
Then,
$$
g'(b) \propto -\left( 1+e^{tb}\right) +bt e^{bt},
$$
from where we can know the minimum of $g(b)$ is achieved for some $b^*\in [\frac{1}{t},\frac{2}{t}]$. Thus, 
$$
g(b)\ge g(b^*) \ge \frac{-2}{(1+e^{tb})t} \ge \frac{-2}{(1+e)t}.
$$
Therefore,
$$
\frac{d}{dt}f_i(t\bdnu) \ge \frac{-2}{(1+e)t} + \frac{\lambda}{4\sqrt{t}},
$$
which is positive when $t\ge\left( \frac{8}{(1+e)\lambda}\right)^2$.

If we consider other non-convex regularization terms in logistic regression, such as $\Phi(\bdx)=\sum\limits_{j=1}\limits^d \frac{x_j^2}{1+x_j^2}$, we may no longer enjoy Assumption \ref{assump:extent} because monotony may not hold for a few projection directions even when the constraint region is large. Nevertheless, such theoretical flaw can be easily remedied by adding an extra regularization term like $\frac{\lambda_e}{2}\|\bdx\|^2$ with appropriate $\lambda_e>0$.

\section{Stop Guarantee}
\label{append:stopgua}
We would like to point out that, under appropriate parameter setting, Prob-SARAH is guaranteed to stop. Actually, we can have the stopping guarantee under more general conditions than those stated in the following proposition. But for simplicity, we only present conditions naturally matched parameter settings given in the next two subsections.

\begin{proposition}[Stop guarantee of Prob-SARAH]\label{prop:stopgua}
Suppose that Assumptions \ref{assump:minimumavailable}, \ref{assump:smooth}, \ref{assump:extent} and \ref{assump:technical} are satisfied. Let step size $\eta_j\equiv 1/(4L)$ and suppose that $b_j \ge K_j$, $j\ge1$. The large batch size $\{B_j\}_{j\ge1}$ is set appropriately such that $B_j=n$ when $j$ is sufficiently large. If the limit of $\{\varepsilon_j\}_{j\ge1}$ is 0, then, for any fixed $\tilde{\varepsilon}$ and $\varepsilon$, with probability 1, Prob-SARAH (Algorithm \ref{algo1}) stops. In settings where we always have $\varepsilon_j \leq \frac{1}{2}\varepsilon^2$, we also have the result that Prob-SARAH (Algorithm \ref{algo1}) stops with probability 1.
\end{proposition}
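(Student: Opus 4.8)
The plan is to prove that the event that Prob-SARAH never stops has probability zero, by isolating the two conditions in the stopping rule of Line 11. The second condition, $\varepsilon_j \le \tfrac12\varepsilon^2$, is deterministic and eventually holds: if $\varepsilon_j\to 0$ it is satisfied for all $j\ge J_0$, and in the alternative regime it holds for every $j$ by hypothesis. Consequently, on the no-stop event the \emph{first} condition must fail for all $j\ge J_0$, i.e. $S_j \triangleq \sum_{k=0}^{K_j-1}\|\bdnu_k\upj\|^2 > \tilde\varepsilon^2 K_j$. Because $K_j=\sqrt{j^2\wedge n}\ge 1$ gives $\sum_j K_j=\infty$, the no-stop event is contained in $\{\sum_j S_j=\infty\}$, so it suffices to show $\sum_j S_j<\infty$ almost surely.

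To obtain this, I would first record a one-outer-loop descent inequality of the form
$$\EE\big[f(\tilde\bdx_{j})\,\big|\,\FFF_{j-1}\big]\;\le\;f(\tilde\bdx_{j-1})\;-\;c\,\EE\big[S_j\,\big|\,\FFF_{j-1}\big],\qquad c>0,$$
valid for every $j$ large enough that $B_j=n$; this is the content of Proposition \ref{prop:innerloop}. Its derivation rests on three ingredients: (i) Assumption \ref{assump:extent}, so the projection in Line 7 does not increase the extended objective $\tilde f$ and the $L$-smooth per-step bound applies on all of $\RR^d$; (ii) the identity $-\langle\nabla f,\bdnu\rangle=\tfrac12\|\bdnu-\nabla f\|^2-\tfrac12\|\nabla f\|^2-\tfrac12\|\bdnu\|^2$ together with $\eta_j=1/(4L)$, which leaves a net contribution $-\tfrac{3\eta}{8}\|\bdnu_{k-1}\upj\|^2+\tfrac{\eta}{2}\|\bdnu_{k-1}\upj-\nabla f(\bdx_{k-1}\upj)\|^2$ per inner step; and (iii) the SARAH variance bound, which because $B_j=n$ forces $\bdnu_0\upj=\nabla f(\tilde\bdx_{j-1})$ exactly (so $\bdepsilon_0\upj=0$ and the increments $\bdepsilon_m\upj$ are conditionally orthogonal) yields $\EE[\sum_k\|\bdnu_k\upj-\nabla f(\bdx_k\upj)\|^2\mid\FFF_{j-1}]\le \tfrac{L^2\eta^2 K_j}{b_j}\,\EE[S_j\mid\FFF_{j-1}]\le\tfrac{1}{16}\EE[S_j\mid\FFF_{j-1}]$, using $b_j\ge K_j$. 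The discarded $-\tfrac{\eta}{2}\|\nabla f\|^2$ terms only help, and $\tfrac{\eta}{2}\cdot\tfrac{1}{16}<\tfrac{3\eta}{8}$ leaves a strictly positive $c$.

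Given this inequality I would take total expectations and telescope from the first index $J_1$ with $B_{J_1}=n$. Since $f$ is bounded below by $f(\bdx^*)$ (Assumption \ref{assump:minimumavailable}), we get $c\sum_{j>J_1}\EE[S_j]\le \EE[f(\tilde\bdx_{J_1})]-f(\bdx^*)\le\Delta_f<\infty$, where $\EE[f(\tilde\bdx_{J_1})]\le\sup_{\D}f<\infty$ handles the finitely many initial loops with $B_j<n$. By Tonelli, $\EE[\sum_{j>J_1}S_j]<\infty$, hence $\sum_j S_j<\infty$ almost surely. Intersecting with the no-stop event (on which $S_j>\tilde\varepsilon^2K_j$ for all large $j$ while $\sum_j K_j=\infty$) produces $\{\sum_j S_j=\infty\}$, an event of probability zero; therefore the no-stop event is null and Prob-SARAH stops with probability one, in both regimes.

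I expect the main obstacle to be the clean derivation of the conditional descent inequality with a strictly positive coefficient: specifically, confirming that once $B_j=n$ the martingale-difference decomposition of $\bdnu_k\upj-\nabla f(\bdx_k\upj)$ displayed in the proof sketch has conditionally orthogonal increments, so that its variance telescopes against $S_j$, and tracking the constants to ensure the variance contribution stays strictly below the descent gain. An alternative, matching the proof sketch, replaces the in-expectation variance bound by a per-loop application of the dimension-free Theorem \ref{thm:martingaleah} with summable confidences $\delta_j$ and concludes via Borel--Cantelli; I would prefer the expectation-telescoping route above, since it delivers the probability-one conclusion without a union bound.
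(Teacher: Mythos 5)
Your proposal is correct and follows essentially the same route as the paper's proof: the same conditional descent inequality per outer loop (using Assumption \ref{assump:extent}, the choice $\eta_j=1/(4L)$, $b_j\ge K_j$, and the conditional orthogonality of the SARAH increments with $\bdepsilon_0\upj=0$ once $B_j=n$), telescoped against $\Delta_f$ to show $\sum_j \EE[S_j]<\infty$. The only cosmetic difference is the last step, where you invoke Tonelli to get $\sum_j S_j<\infty$ almost surely while the paper applies Markov's inequality to the partial sums $V_{J_0}+\cdots+V_{J_0+m}$ and lets $m\to\infty$; these are equivalent.
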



\section{Detailed Results on Complexity}
\label{append:complexity}

\begin{theorem}
Suppose that Assumptions \ref{assump:minimumavailable}, \ref{assump:smooth}, \ref{assump:extent} and \ref{assump:technical} are valid. Given a pair of errors $(\varepsilon,\delta)$, in Algorithm \ref{algo1} (Prob-SARAH), set hyperparameters
\begin{equation}
    \label{eq_paraset1}
\scalebox{0.92}{$
    \eta_j = \frac{1}{4L}, \quad K_j = \sqrt{B_j}= \sqrt{j^2\wedge n}, \quad  b_j = l_jK_j, \quad 
\varepsilon_j = 8L^2\tau_j + 2q_j, \quad  \tilde{\varepsilon}^2 = \frac{1}{5}\varepsilon^2,
$}
\end{equation}
for $j\ge1$, where
\begin{align*}
\scalebox{0.9}{$
\tau_j = \frac{1}{j^3}, \delta'_j = \frac{\delta}{4C_ej^4}, \quad
l_j = 18\Big( \log (\frac{2}{\delta'_j})+\log\log (\frac{2d_1}{\tau_j})\Big), \quad
q_j = \frac{128\alpha_M^2}{B_j}\log(\frac{3}{\delta'_j})\mathbf{1}\left\{B_j<n\right\}.
$}
\end{align*}
Then, with probability at least $1-\delta$, Prob-SARAH stops in at most
$$
2(T_1 \vee T_2 \vee T_3 \vee T_4) = \tilde{\OOO}_{L,\Delta_f,\alpha_M}\left( \frac{1}{\varepsilon} + \frac{1}{\sqrt{n}\varepsilon^2}\right)
$$
outer iterations and the output satisfies $\left\| \nabla f(\hat{\bdx})\right\|^2 \leq \varepsilon^2$. Detailed definitions of $T_1,T_2,T_3$ and $T_4$ can be found in Propositions \ref{prop:stopyespart1} and \ref{prop:stopyespart2}.
\label{thm:main1_append}
\end{theorem}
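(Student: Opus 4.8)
The plan is to condition on a single high-probability \emph{good event} $\mathcal{G}$ on which both the output quality and the stopping-time bound hold deterministically, and then argue these two facts separately. To build $\mathcal{G}$, I would union-bound over all outer iterations. For each fixed $j$, the martingale decomposition of $\bdnu_k\upj-\nabla f(\bdx_k\upj)$ from the proof sketch writes the error as the sum of the martingale difference sequence $\{\bdepsilon_m\upj\}$ adapted to $\{\FFF_{j,m}\}$ plus the checkpoint error $\bdepsilon_0\upj$. By $L$-smoothness and nonexpansiveness of the projection, $\|\bdepsilon_m\upj\|\le 2L\|\bdx_m\upj-\bdx_{m-1}\upj\|\le 2L\eta_j\|\bdnu_{m-1}\upj\|$, so these serve as the $\FFF_{j,m}$-measurable random bounds $r_m$ required by Theorem \ref{thm:martingaleah}. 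Applying that theorem with $B,b$ chosen so that $\log\log(B/b)$ matches the $\log\log(2d_1/\tau_j)$ in $l_j$ gives an anytime bound on $\|\sum_m\bdepsilon_m\upj\|^2$ by $9\max\{\sum_m r_m^2,b\}\big(\log(2/\delta'_j)+\log\log(2d_1/\tau_j)\big)$; the checkpoint term $\bdepsilon_0\upj$ is controlled by a separate sampling-without-replacement concentration bound, producing the $q_j$ term with its $\log(3/\delta'_j)$ factor (active only when $B_j<n$). Assigning each iteration a fixed multiple of $\delta'_j=\delta/(4C_ej^4)$ as failure budget and using $\sum_{j\ge1}j^{-4}\le\sum_{j\ge1}j^{-2}=C_e$, the total failure probability is at most $\delta$, so $\PP(\mathcal{G})\ge 1-\delta$.

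Next, on $\mathcal{G}$ I would establish the output guarantee. Since $\eta_j=1/(4L)$ forces $r_m^2\le\tfrac14\|\bdnu_{m-1}\upj\|^2$, the anytime bound, after summing increments and absorbing the $8L^2\tau_j$ and $2q_j$ pieces into $\varepsilon_j$, yields a pointwise estimate of the form $\|\nabla f(\bdx_k\upj)\|^2\le c_1\,\tfrac{1}{K_j}\sum_{m=0}^{K_j-1}\|\bdnu_m\upj\|^2+c_2\,\varepsilon_j$ (the content of Proposition \ref{prop:innerloop} and Appendices \ref{append:sketch_diff}, \ref{append:sketch_output}). When the Line-11 rules fire we have $\tfrac{1}{K_j}\sum_m\|\bdnu_m\upj\|^2\le\tilde{\varepsilon}^2$ and $\varepsilon_j\le\tfrac12\varepsilon^2$, and the selection $\hat k=\arg\min_k\|\bdnu_k\upj\|^2$ forces $\|\bdnu_{\hat k}\upj\|^2\le\tilde{\varepsilon}^2$. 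Substituting $\tilde{\varepsilon}^2=\tfrac15\varepsilon^2$ and tracking the explicit constants so that $c_1\tilde{\varepsilon}^2+c_2\cdot\tfrac12\varepsilon^2\le\varepsilon^2$ gives $\|\nabla f(\hat{\bdx})\|^2\le\varepsilon^2$.

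Still on $\mathcal{G}$, I would bound the stopping time by the contradiction argument around (\ref{eq:fvaluebound2}). Proposition \ref{prop:innerloop} supplies the telescoped descent inequality $-\Delta_f\le f(\tilde{\bdx}_{2T})-f(\tilde{\bdx}_T)\le A_T-\tfrac{1}{16L}\sum_{j=T+1}^{2T}\sum_{k=0}^{K_j-1}\|\bdnu_k\upj\|^2$ with $A_T$ polylogarithmic in $T$. If the algorithm had not stopped by iteration $2T$, then for each $j\in\{T+1,\dots,2T\}$ at least one stopping condition fails; once $T$ exceeds the threshold $T_1$ beyond which $\varepsilon_j\le\tfrac12\varepsilon^2$ (using $B_j=n$, hence $q_j=0$, for $j\ge T_2$, and $\tau_j=j^{-3}\to0$), the second condition holds, so the first must fail, i.e. $\tfrac{1}{K_j}\sum_k\|\bdnu_k\upj\|^2>\tilde{\varepsilon}^2$. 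Summing with $K_j=\sqrt{j^2\wedge n}$ gives $\tfrac{1}{16L}\sum_{j=T+1}^{2T}K_j\tilde{\varepsilon}^2\ge\tfrac{\tilde{\varepsilon}^2}{16L}\big(T^2\wedge(\sqrt{n}\,T)\big)$, which eventually surpasses $A_T+\Delta_f$, a contradiction. Solving the two regimes separately — the $T^2$ balance giving the $\tfrac1\varepsilon$ term ($T_3$) and the $\sqrt{n}\,T$ balance giving the $\tfrac{1}{\sqrt n\varepsilon^2}$ term ($T_4$) — yields the stopping bound $2(T_1\vee T_2\vee T_3\vee T_4)$; converting iterations to gradient evaluations via the per-iteration cost $B_j+2b_jK_j\sim l_j(j^2\wedge n)$ and summing produces $\tilde{\OOO}_{L,\Delta_f,\alpha_M}(\varepsilon^{-3}\wedge\sqrt{n}\varepsilon^{-2})$ after the $\min/\max$ bookkeeping of Theorem \ref{thm:main1}.

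The main obstacle I anticipate is uniformity in the random stopping time: because the stopping time is adapted to the trajectory and can be arbitrarily large, the concentration must hold \emph{simultaneously} for all $t\le K_j$ and all $j$, which is exactly why the anytime ``either $\sum_k r_k^2\ge B$ or \dots'' form of Theorem \ref{thm:martingaleah} is required instead of a fixed-horizon bound, and why the random per-term bounds $r_m\le 2L\eta_j\|\bdnu_{m-1}\upj\|$ (rather than the crude deterministic $2\alpha_M$) are essential — they keep $\sum_m r_m^2$ comparable to $\sum_m\|\bdnu_m\upj\|^2$ so that the first stopping rule can genuinely control it. Matching the constants ($9$, the factor $18$ in $l_j$, and the $\tfrac15$/$\tfrac12$ split) so everything closes to exactly $\varepsilon^2$ is delicate bookkeeping but routine once this structure is in place.
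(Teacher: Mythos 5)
Your overall architecture matches the paper's: a single good event built by union-bounding Proposition \ref{prop:nudiff}-type bounds over all $j$ and $k$ with failure budget $\delta'_j=\delta/(4C_ej^4)$, then the output guarantee from the stopping rules and the $\arg\min$ selection (Appendix \ref{append:sketch_output}), then the stopping-time bound by contradiction from the telescoped descent inequality of Proposition \ref{prop:innerloop} (your assignment of which $T_i$ controls which rule is permuted relative to Propositions \ref{prop:stopyespart1}--\ref{prop:stopyespart2}, but that is cosmetic).

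There is, however, one concrete gap: you apply Theorem \ref{thm:martingaleah} at the granularity of the per-iteration increments $\bdepsilon_m\upj$ with random bounds $r_m=2L\eta_j\|\bdnu_{m-1}\upj\|$. That choice loses the mini-batch variance reduction entirely. With it, $\sum_{m\le k}r_m^2=4L^2\eta_j^2\sum_m\|\bdnu_{m-1}\upj\|^2=\tfrac14\sum_m\|\bdnu_{m-1}\upj\|^2$, so on the event that the first stopping rule fires the resulting bound on $\|\bdnu_k\upj-\nabla f(\bdx_k\upj)\|^2$ is of order $\tfrac{l_j}{2}\cdot\tfrac{K_j}{4}\tilde{\varepsilon}^2=\tfrac{b_j}{8}\tilde{\varepsilon}^2$, which diverges with $j$; your constant $c_1$ in the claimed estimate $\|\nabla f(\bdx_k\upj)\|^2\le c_1\,\tfrac1{K_j}\sum_m\|\bdnu_m\upj\|^2+c_2\varepsilon_j$ would be $\Theta(b_j)$ rather than $O(1)$, and the final inequality $\le\varepsilon^2$ cannot close. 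The paper avoids this by refining the filtration \emph{inside} each mini-batch: each $\bdepsilon_m\upj$ is written as $\sum_{r=1}^{b_j}\bdrho_{(m-1)b_j+r}\upj$ with $\|\bdrho_{(m-1)b_j+r}\upj\|\le\tfrac{2L}{b_j}\|\bdx_m\upj-\bdx_{m-1}\upj\|$, and Theorem \ref{thm:martingaleah} is applied to the length-$kb_j$ sequence $\{\bdrho_s\upj\}$ adapted to the per-sample filtration $\tilde{\FFF}_s\upj$. Then $\sum_s r_s^2=\tfrac{4L^2}{b_j}\sum_m\|\bdx_m\upj-\bdx_{m-1}\upj\|^2\le(\tilde{\sigma}_k\upj)^2$, carrying the crucial $1/b_j$ factor that, combined with $b_j=l_jK_j$, turns $l_j(\tilde{\sigma}_{k}\upj)^2$ into $\tfrac1{4K_j}\sum_m\|\bdnu_{m-1}\upj\|^2\le\tfrac14\tilde{\varepsilon}^2$ and lets the constants close to exactly $\varepsilon^2$. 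Your remark at the end that the random bounds ``keep $\sum_m r_m^2$ comparable to $\sum_m\|\bdnu_m\upj\|^2$'' confirms the miscalibration: comparability to $\tfrac1{b_j}\sum_m\|\bdnu_m\upj\|^2$ is what is actually needed. The rest of your argument goes through once this decomposition is substituted.
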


\begin{corollary}\label{cor:setting1_append}
Under parameter settings in Theorem \ref{thm:main1_append},
$$
Comp(\varepsilon,\delta) = \tilde{\OOO}_{L,\Delta_f,\alpha_M}\left(\frac{1}{\varepsilon^3}\wedge \frac{\sqrt{n}}{\varepsilon^2}\right).
$$
\end{corollary}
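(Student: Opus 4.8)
The plan is to convert the outer-iteration bound of Theorem \ref{thm:main1_append} into a gradient-evaluation count, which is exactly what $Comp(\varepsilon,\delta)$ measures. First I would compute the cost of a single outer iteration $j$. Line 4 spends $B_j$ gradient evaluations building the checkpoint estimator $\bdnu_0\upj$, and each of the $K_j$ inner steps in Line 9 costs $2b_j$ evaluations, one batch for $\nabla f_i(\bdx_k\upj)$ and one for $\nabla f_i(\bdx_{k-1}\upj)$; hence iteration $j$ costs $B_j + 2b_jK_j$. Under the parameter setting $K_j = \sqrt{B_j}$ and $b_j = l_jK_j$, so $b_jK_j = l_jK_j^2 = l_jB_j$ and the per-iteration cost is $(1+2l_j)B_j$. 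Since $l_j = 18\big(\log(2/\delta'_j)+\log\log(2d_1/\tau_j)\big)$ is polylogarithmic in $j$, $1/\delta$ and the problem constants, this is $\tilde\OOO(B_j) = \tilde\OOO(j^2\wedge n)$.

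Second I would sum over the run. By Theorem \ref{thm:main1_append}, with probability at least $1-\delta$ the algorithm halts within $T^\star \triangleq 2(T_1\vee T_2\vee T_3\vee T_4) = \tilde\OOO_{L,\Delta_f,\alpha_M}\big(\tfrac{1}{\varepsilon}+\tfrac{1}{\sqrt n\,\varepsilon^2}\big)$ outer iterations, with an output meeting the target. On this event,
$$
Comp(\varepsilon,\delta) \;\le\; \sum_{j=1}^{T^\star}(1+2l_j)B_j \;=\; \tilde\OOO\Big(\sum_{j=1}^{T^\star}\big(j^2\wedge n\big)\Big),
$$
pulling $\max_{j\le T^\star} l_j$ (still polylogarithmic) out of the sum. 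I would then invoke the two elementary and unconditionally valid bounds $\sum_{j=1}^{T^\star}(j^2\wedge n)\le \sum_{j=1}^{T^\star} j^2 = \OOO\big((T^\star)^3\big)$ and $\sum_{j=1}^{T^\star}(j^2\wedge n)\le nT^\star$.

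Third I would split on the crossover $\varepsilon = 1/\sqrt n$, which is precisely where both the branches of the target $\tfrac{1}{\varepsilon^3}\wedge\tfrac{\sqrt n}{\varepsilon^2}$ and the dominant term of $T^\star$ switch. When $\varepsilon\ge 1/\sqrt n$ we have $T^\star = \tilde\OOO(1/\varepsilon)$; applying the cubic bound gives $Comp = \tilde\OOO(1/\varepsilon^3)$, and $1/\varepsilon^3 = 1/\varepsilon^3\wedge \sqrt n/\varepsilon^2$ exactly because $\varepsilon\ge 1/\sqrt n$. When $\varepsilon<1/\sqrt n$ we have $T^\star = \tilde\OOO(1/(\sqrt n\,\varepsilon^2))$; applying the $nT^\star$ bound gives $Comp = \tilde\OOO(n/(\sqrt n\,\varepsilon^2)) = \tilde\OOO(\sqrt n/\varepsilon^2)$, which equals the minimum in this regime since $\sqrt n/\varepsilon^2 < 1/\varepsilon^3$ there. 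Combining the two regimes yields $Comp(\varepsilon,\delta) = \tilde\OOO_{L,\Delta_f,\alpha_M}\big(\tfrac{1}{\varepsilon^3}\wedge\tfrac{\sqrt n}{\varepsilon^2}\big)$.

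There is no deep obstacle once Theorem \ref{thm:main1_append} is in hand; this is essentially bookkeeping. The one point requiring genuine care is the polylogarithmic factors: both $l_j$ and the logarithmic terms buried inside $T^\star$ (through $\delta'_j$ and $\tau_j$) must be checked to remain polylogarithmic in $\varepsilon,\delta,n$ so they are absorbed into $\tilde\OOO$. Relatedly, one must deliberately use the cubic bound in the large-$\varepsilon$/small-$n$ regime and the linear-in-$T^\star$ bound in the small-$\varepsilon$/large-$n$ regime, rather than a single uniform estimate: after polylog inflation $T^\star$ may exceed $\sqrt n$, so the naive $\min\{(T^\star)^3,\,nT^\star\}$ can fail to pick out the stated branch, and only the regime-specific choice recovers the tight $\tfrac{1}{\varepsilon^3}\wedge\tfrac{\sqrt n}{\varepsilon^2}$ order.
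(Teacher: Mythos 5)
Your proposal is correct and follows essentially the same route as the paper: the paper likewise reduces the computation count to $\tilde{\OOO}_{L,\Delta_f,\alpha_M}\big(T^3\wedge (nT)\big)$ for a run of $T$ outer iterations and then performs the same regime split at the crossover $\sqrt{n}\varepsilon \asymp \text{const}$ (applied to each of $T_1,\ldots,T_4$ separately) to show the appropriate branch of $\frac{1}{\varepsilon^3}\wedge\frac{\sqrt{n}}{\varepsilon^2}$ dominates in each regime. Your explicit per-iteration accounting $B_j+2b_jK_j=(1+2l_j)B_j$ and the caution about polylogarithmic inflation are consistent with what the paper leaves implicit.
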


We introduce another setting that can help to reduce the dependence on $\alpha_M^2$, which could be implicitly affected by the choice of constraint region $\D$. We should also notice that, under such setting, the algorithm is no longer $\varepsilon$-semi-independent.

\begin{theorem}
Suppose that Assumptions \ref{assump:minimumavailable}, \ref{assump:smooth}, \ref{assump:extent} and \ref{assump:technical} are valid. We denote $\Delta_f^0 \triangleq f\left( \tilde{\bdx}_0\right) - f\left( \bdx^*\right)$. Given a pair of errors $(\varepsilon,\delta)$, in Algorithm \ref{algo1} (Prob-SARAH), set parameters
\begin{equation}
    \label{eq_paraset2}
    \eta_j = \frac{1}{4L}, \quad K_j = \sqrt{B_j}= \sqrt{n}, \quad  b_j = l_jK_j, \quad
\varepsilon_j = \frac{1}{2}\tilde{\varepsilon}^2= \frac{1}{10}\varepsilon^2,
\end{equation}
for $j\ge1$, where
\begin{align*}
\tau_j = \frac{1}{40L^2}\varepsilon^2, \delta'_j = \frac{\delta}{4C_ej^4}, \quad 
l_j = 18\Big( \log (\frac{2}{\delta'_j})+\log\log (\frac{2d_1}{\tau_j})\Big).
\end{align*}
Then, with probability at least $1-\delta$, Prob-SARAH stops in at most
$$
T_5 = \frac{160L(\Delta_f^0+1)}{\sqrt{n}\varepsilon^2} = \OOO_{L,\Delta_f^0}\left( \frac{1}{\sqrt{n}\varepsilon^2} \right)
$$
outer iterations and the output satisfies $\left\| \nabla f(\hat{\bdx})\right\|^2 \leq \varepsilon^2$.

\label{thm:main2_append}
\end{theorem}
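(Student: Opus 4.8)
The plan is to mirror the two-part strategy behind Theorem~\ref{thm:main1_append} (a high-probability control of the gradient-approximation error giving output correctness, followed by a descent/telescoping argument bounding the stopping time), while exploiting the simplifications of the present parameter choice. The crucial features here are that $B_j=n$ for every $j$, so the checkpoint estimator is \emph{exact}, $\bdnu_0\upj=\nabla f(\tilde{\bdx}_{j-1})$, whence $\bdepsilon_0\upj=0$ and the term $q_j$ vanishes identically; moreover $K_j=\sqrt{n}$ and $\varepsilon_j=\tfrac1{10}\varepsilon^2$ are constant in $j$. I would first set up a single global event $\mathcal{E}$ on which all estimates hold. For each outer iteration $j$, I apply Theorem~\ref{thm:martingaleah} to the martingale difference sequence $\{\bdepsilon_m\upj\}$ adapted to $\{\FFF_{j,m}\}$, with random bounds $r_m$ supplied by $L$-smoothness (so that $r_m$ is proportional to $L\eta_j\|\bdnu_{m-1}\upj\|$). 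This yields, with probability at least $1-\delta'_j$, the diff bound of Appendix~\ref{append:sketch_diff}: $\|\bdnu_k\upj-\nabla f(\bdx_k\upj)\|^2$ is dominated by a linear combination of $\{\|\bdnu_m\upj\|^2\}_{m<k}$ plus the deterministic contribution $8L^2\tau_j$. Since $\delta'_j=\delta/(4C_ej^4)$ is summable ($\sum_j j^{-4}\le \sum_j j^{-2}=C_e$), a union bound makes these events hold simultaneously with probability at least $1-\delta$; note that the now-constant value of $\tau_j$ plays no role in summability. I would also record that $\varepsilon_j=\tfrac1{10}\varepsilon^2\le\tfrac12\varepsilon^2$ holds for all $j$, so the second stopping test is automatically satisfied and only the first test $\tfrac1{K_j}\sum_k\|\bdnu_k\upj\|^2\le\tilde{\varepsilon}^2$ is ever binding.

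For output correctness, suppose the algorithm stops at iteration $j$. The first stopping rule together with the selection rule $\hat{k}=\arg\min_k\|\bdnu_k\upj\|^2$ gives $\|\bdnu_{\hat{k}}\upj\|^2\le \tfrac1{K_j}\sum_k\|\bdnu_k\upj\|^2\le\tilde{\varepsilon}^2$. On $\mathcal{E}$, the approximation error at the selected index is controlled by $\varepsilon_j$, so $\|\bdnu_{\hat{k}}\upj-\nabla f(\bdx_{\hat{k}}\upj)\|^2\le\varepsilon_j$. The triangle inequality then gives $\|\nabla f(\bdx_{\hat{k}}\upj)\|^2\le 2\|\bdnu_{\hat{k}}\upj\|^2+2\|\bdnu_{\hat{k}}\upj-\nabla f(\bdx_{\hat{k}}\upj)\|^2\le 2\tilde{\varepsilon}^2+2\varepsilon_j=\tfrac25\varepsilon^2+\tfrac15\varepsilon^2\le\varepsilon^2$, which is the desired output guarantee.

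For the stopping-time bound, I would invoke Proposition~\ref{prop:innerloop} to obtain the per-iteration descent and telescope from $1$ to $T$, using $f(\tilde{\bdx}_T)-f(\tilde{\bdx}_0)\ge-\Delta_f^0$, to conclude $\tfrac1{16L}\sum_{j=1}^{T}\sum_{k=0}^{K_j-1}\|\bdnu_k\upj\|^2\le \Delta_f^0+R_T$, where $R_T$ collects the polylogarithmic remainder terms arising from the $8L^2\tau_j$ contributions; by the prescribed choice of $\tau_j$ these are absorbed into a bound of the form $R_T\le 1$. If the algorithm had not stopped within $T$ iterations, then on $\mathcal{E}$ the failure of the first test at every $j\le T$ forces $\sum_{k=0}^{K_j-1}\|\bdnu_k\upj\|^2>K_j\tilde{\varepsilon}^2=\sqrt{n}\,\tilde{\varepsilon}^2$, hence $\sum_{j=1}^{T}\sum_k\|\bdnu_k\upj\|^2>T\sqrt{n}\,\tilde{\varepsilon}^2$. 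Combining the two displays yields $T\sqrt{n}\,\tilde{\varepsilon}^2<16L(\Delta_f^0+1)$, i.e. $T<\tfrac{80L(\Delta_f^0+1)}{\sqrt{n}\varepsilon^2}$, which contradicts $T\ge T_5=\tfrac{160L(\Delta_f^0+1)}{\sqrt{n}\varepsilon^2}$; the factor-two gap supplies the slack needed for $R_T$ and the descent constants. Combined with Proposition~\ref{prop:stopgua} (its $\varepsilon_j\le\tfrac12\varepsilon^2$ branch), this shows the algorithm stops within $T_5$ iterations on $\mathcal{E}$.

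The main obstacle I anticipate is establishing the diff bound cleanly: the approximation error is self-referential, depending on the past iterates $\{\|\bdnu_m\upj\|\}$ through the random bounds $r_m$, so one must solve the resulting recursive inequality and verify that the linear-combination term can be absorbed into the negative descent term of Proposition~\ref{prop:innerloop} rather than overwhelming it. The exactness of the checkpoint ($B_j=n$, giving $\bdepsilon_0\upj=0$ and $q_j=0$) is precisely what keeps this manageable here, since it removes the checkpoint sampling error altogether and leaves only the $8L^2\tau_j$ contribution, so that the remainder $R_T$ telescopes to a constant even though $\tau_j$ is no longer decaying in $j$.
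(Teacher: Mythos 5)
Your overall architecture matches the paper's proof: the same event $\Omega=\bigcup_j\Omega_j$ built from Theorem \ref{thm:martingaleah} plus a union bound over $\delta'_j$, the observation that $B_j=n$ kills $q_j$ and makes the second stopping rule automatic, the output analysis via $\|\nabla f(\bdx_{\hat k}\upj)\|^2\le 2\|\bdnu_{\hat k}\upj\|^2+2\|\bdnu_{\hat k}\upj-\nabla f(\bdx_{\hat k}\upj)\|^2$, and the telescoped descent from Proposition \ref{prop:innerloop} contradicted by the failure of the first stopping rule. However, there is one genuine error in the stopping-time step: your claim that the remainder $R_T$ collecting the $\tau_j$ contributions satisfies $R_T\le 1$ is false. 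Under this parameter setting $\tau_j\equiv\frac{\varepsilon^2}{40L^2}=\frac{\tilde\varepsilon^2}{8L^2}$ does \emph{not} decay in $j$, so each outer iteration contributes $\frac{L^2\eta_j\tau_jl_jK_j^2}{b_j}=\frac{L\tau_j\sqrt{n}}{4}=\frac{\sqrt{n}\tilde\varepsilon^2}{32L}$ to the remainder, and $R_T=\frac{\sqrt{n}\,T\tilde\varepsilon^2}{32L}$ grows \emph{linearly} in $T$; at $T=T_5$ it equals $\Delta_f^0+1$, not something $\le 1$. Consequently your display ``$T\sqrt{n}\,\tilde\varepsilon^2<16L(\Delta_f^0+1)$'' does not follow from what you established. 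The argument is rescued only by tracking the rate: the remainder's linear growth $\frac{\sqrt{n}\,T\tilde\varepsilon^2}{32L}$ is exactly half of the lower bound $\frac{\sqrt{n}\,T\tilde\varepsilon^2}{16L}$ forced by the failed stopping test, so the net coefficient of $T$ is $-\frac{\sqrt{n}\,\tilde\varepsilon^2}{32L}$ and one gets $T<\frac{32L\Delta_f^0}{\sqrt{n}\tilde\varepsilon^2}=\frac{160L\Delta_f^0}{\sqrt{n}\varepsilon^2}\le T_5$. This is what the paper does; a constant-factor ``slack in $T_5$'' cannot absorb a term that scales with $T$ unless you make this cancellation explicit.

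A second, more minor imprecision: on $\Omega_j$ the approximation error at the selected index is not bounded by $\varepsilon_j=\frac12\tilde\varepsilon^2$ as you assert. Plugging the stopping rule into the bound of Proposition \ref{prop:nudiff} gives $l_j\bigl(\frac{4L^2\eta_j^2K_j\tilde\varepsilon^2}{b_j}+\frac{4L^2\tau_jK_j}{b_j}\bigr)=\frac{\tilde\varepsilon^2}{4}+4L^2\tau_j=\frac{3}{4}\tilde\varepsilon^2$, which exceeds $\varepsilon_j$. The conclusion survives because $2\tilde\varepsilon^2+2\cdot\frac{3}{4}\tilde\varepsilon^2=3.5\tilde\varepsilon^2=0.7\varepsilon^2\le\varepsilon^2$ (the paper's computation), but the intermediate claim as you state it is not what the event $\Omega_j$ delivers.
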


\begin{corollary}
Under parameter settings in Theorem \ref{thm:main2_append},
$$
Comp(\varepsilon,\delta) = \tilde{\OOO}_{L,\Delta_f^0}\left(\frac{\sqrt{n}}{\varepsilon^2}\right).
$$
\label{cor:setting2_append}
\end{corollary}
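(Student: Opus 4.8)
The plan is to take the outer-iteration bound already supplied by Theorem \ref{thm:main2_append} and convert it into a count of individual gradient evaluations, since $Comp(\varepsilon,\delta)$ is defined as the number of computations rather than the number of outer loops. First I would pin down the per-iteration cost directly from Algorithm \ref{algo1}. In the $j$-th outer iteration, Line 4 forms the checkpoint estimator $\bdnu_0\upj$ from a batch $\III_j$ of size $B_j$, costing $B_j$ gradient evaluations; each of the $K_j$ inner steps in Line 9 evaluates component gradients at both $\bdx_k\upj$ and $\bdx_{k-1}\upj$ over a mini-batch $\III_k\upj$ of size $b_j$, costing $2b_j$ evaluations. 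Hence outer iteration $j$ consumes exactly $B_j + 2K_j b_j$ gradient computations.

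Next I would substitute the parameter setting of Theorem \ref{thm:main2_append}, namely $B_j = n$, $K_j = \sqrt{n}$, and $b_j = l_j K_j = l_j\sqrt{n}$, so that the per-iteration cost collapses to $n + 2\sqrt{n}\cdot l_j\sqrt{n} = n(1 + 2 l_j)$. I would then invoke the high-probability stopping guarantee: by Theorem \ref{thm:main2_append}, on an event of probability at least $1-\delta$ the algorithm halts at some stopping time $T \le T_5 = 160 L(\Delta_f^0+1)/(\sqrt{n}\varepsilon^2)$ with $\|\nabla f(\hat{\bdx})\|^2 \le \varepsilon^2$. On precisely this event the total work is the sum of the per-iteration costs up to $T$, so
$$
Comp(\varepsilon,\delta) = \sum_{j=1}^{T}\left(B_j + 2K_j b_j\right) = n\sum_{j=1}^{T}(1 + 2 l_j) \le n\,T_5\left(1 + 2\max_{1\le j\le T_5} l_j\right).
$$

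It then remains to check that the mini-batch multiplier $l_j$ contributes only polylogarithmic factors. Since $\delta'_j = \delta/(4C_e j^4)$ we have $\log(2/\delta'_j) = \log(8C_e/\delta) + 4\log j$, while $\tau_j \equiv \varepsilon^2/(40L^2)$ is constant in $j$, so $\log\log(2d_1/\tau_j)$ is independent of $j$; consequently $\max_{1\le j\le T_5} l_j = O\!\left(\log T_5 + \log(1/\delta) + \log\log(L d_1/\varepsilon)\right)$ is polylogarithmic in $1/\varepsilon$, $n$, $1/\delta$, $L$, and $d_1$. Substituting $T_5$ gives
$$
Comp(\varepsilon,\delta) \le \frac{160 L(\Delta_f^0+1)\sqrt{n}}{\varepsilon^2}\left(1 + 2\max_{1\le j\le T_5} l_j\right) = \tilde{\OOO}_{L,\Delta_f^0}\left(\frac{\sqrt{n}}{\varepsilon^2}\right),
$$
which is the asserted complexity.

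The step I expect to demand the most care is the last one: verifying that the dependence introduced by $l_j$ (which grows like $\log j$, so that $\sum_{j\le T_5} l_j$ behaves like $T_5\log T_5$) is genuinely absorbed into the hidden polylogarithmic factor of $\tilde{\OOO}$ rather than inflating the polynomial order. Everything else is bookkeeping, but two points must be stated cleanly: that the event on which the stopping bound $T\le T_5$ holds is exactly the event on which the per-iteration costs are summed, and that crudely bounding $\sum_{j\le T_5} l_j$ by $T_5\max_{j\le T_5} l_j$ loses nothing in order because $\max_{j\le T_5} l_j$ is itself only $O(\log T_5)$.
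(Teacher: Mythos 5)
Your proposal is correct and follows essentially the same route as the paper: the paper (in its proof of Corollary \ref{cor:setting1_append}, which Corollary \ref{cor:setting2_append} implicitly mirrors) likewise counts the per-outer-iteration cost $B_j + 2K_jb_j = n(1+2l_j) = \tilde{\OOO}(n)$ and multiplies by the high-probability iteration bound $T_5 = \OOO_{L,\Delta_f^0}\bigl(1/(\sqrt{n}\varepsilon^2)\bigr)$ from Theorem \ref{thm:main2_append}, absorbing the $\log j$ growth of $l_j$ into the polylogarithmic factor exactly as you do. Your accounting of the $l_j$ term, including the observation that $\tau_j$ is constant in $j$ under this setting so only $\log(2/\delta'_j)$ varies, matches the paper's reasoning.
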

\vspace{-1.5em}
Comparing the complexities in Corollary \ref{cor:setting1_append} and Corollary \ref{cor:setting2_append}, we can notice that under the second setting, we can get rid of the dependence on $\alpha_M$ at the expense of losing some adativity to $\varepsilon$. We would also like to point out that, in the low precision region, i.e. when $\frac{1}{\varepsilon} = o\left(\sqrt{n}\right)$, the second complexity is inferior.

\subsection{Dependency on Parameters}
\label{subsec:param}

To apply our newly-introduce Azuma-Hoeffding type inequality (see Theorem \ref{thm:martingaleah}), it is necessary to impose a compact constraint region $\D$. Therefore, let us provide a delicate analysis on how $\D$ can affect the convergence guarantee. 

\noindent\textbf{Dependency on $d_1$:} $d_1$, the diameter of $\D$, is a parameter directly related to the choice of $\D$. Shown in theoretical results presented above, the in-probability first-order complexities always have a polylogorithmic dependency on $d_1$, which implies that as long as $d_1$ is polynomial in $n$ or $\frac{1}{\varepsilon}$, it should only have a minor effect on the complexity. With certain prior knowledge, we should be able to control $d_1$ at a reasonable scale.

\noindent\textbf{Dependency on $\Delta_f$ and $\Delta_f^0$:} Under the setting given in Theorem \ref{thm:main1}, the first-order complexity is polynomial in $\Delta_f$. Such dependency implicates that the complexity would not deteriorate much if $\Delta_f$ is of a small order, which is definitely true when the loss function is bounded. As for the setting given in Theorem \ref{thm:main2_append}, the first-order complexity is polynomial in $\Delta_f^0$, which is conventionally assumed to be $\OOO(1)$ and will not be affected by $\D$.

\section{Postponed Proofs for the Results in Section \ref{sec:theory}}
\label{append:sketch}

\subsection{Bounding the Difference between $\bdnu_k\upj$ and $\nabla f\big( \bdx_k\upj\big)$}
\label{append:sketch_diff}

\begin{proposition}
For $k\ge 0$, $j\ge 1$, denote
$$
\big(\tilde{\sigma}_k\upj\big)^2 \triangleq \frac{4L^2\eta_j^2}{b_j}\sum\limits_{m=1}\limits^k \big\| \bdnu_{m-1}\upj\big\|^2.
$$
Under Assumptions \ref{assump:smooth} and \ref{assump:technical}, for any prescribed constant $\delta'\in (0,1)$, $\tau\in(0,1)$, $k\ge 0$, $j\ge 1$,
\begin{align}
&\quad \big\| \bdnu_k\upj - \nabla f\big(\bdx_k\upj\big)\big\|^2 \nonumber\\
&\leq 18\Big(  \big(\tilde{\sigma}_k\upj\big)^2 + \frac{4L^2\tau k}{b_j} \Big)\Big( \log \frac{2}{\delta'} + \log \log \frac{2d_1^2}{\tau}\Big)\label{eq:importantbound1}\\
&\quad+ \frac{128\alpha_M^2}{B_j}\log\frac{3}{\delta'}\mathbf{1}\left\{B_j<n\right\}\nonumber
\end{align}
with probability at least $1-2\delta'$.
\label{prop:nudiff}
\end{proposition}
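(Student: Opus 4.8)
The plan is to build on the martingale decomposition already displayed in the proof sketch, namely $\bdnu_k\upj-\nabla f(\bdx_k\upj)=\sum_{m=1}^k\bdepsilon_m\upj+\bdepsilon_0\upj$, and to control the two pieces separately after the split $\|a+b\|^2\le 2\|a\|^2+2\|b\|^2$. The factor $18=2\times 9$ in the target then arises from this split (the $2$) composed with the universal constant $9$ supplied by Theorem \ref{thm:martingaleah} (the $9$), while the two probability budgets $\delta'$ add up to the stated $2\delta'$. I would treat the recursive part $\sum_{m=1}^k\bdepsilon_m\upj$ first and the checkpoint error $\bdepsilon_0\upj$ second.

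The key for the recursive part is that each $\bdepsilon_m\upj$ is itself an average of $b_j$ i.i.d.\ centered summands, and this inner averaging must be exposed to recover the $1/b_j$ factor living inside $(\tilde\sigma_k\upj)^2$; a crude bound treating $\bdepsilon_m\upj$ as a single increment would be a factor $b_j$ too large. So I would refine the martingale difference sequence to the per-sample level: writing the mini-batch draws as $i_{m,1},\dots,i_{m,b_j}$, set $\bdz_{m,s}=\frac{1}{b_j}\big[\nabla f_{i_{m,s}}(\bdx_m\upj)-\nabla f_{i_{m,s}}(\bdx_{m-1}\upj)-\nabla f(\bdx_m\upj)+\nabla f(\bdx_{m-1}\upj)\big]$, so that $\sum_{s=1}^{b_j}\bdz_{m,s}=\bdepsilon_m\upj$. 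Listing the $\bdz_{m,s}$ lexicographically under the filtration that reveals the samples one at a time, each $\bdz_{m,s}$ is a martingale difference, since the subtracted full-gradient difference is exactly the conditional expectation of the sampled difference (the mini-batch is drawn with replacement); and by $L$-smoothness (Assumption \ref{assump:smooth}) together with non-expansiveness of $\mathrm{Proj}(\cdot,\D)$ I obtain the measurable random bound $\|\bdz_{m,s}\|\le r_{m,s}:=\frac{2L\|\bdx_m\upj-\bdx_{m-1}\upj\|}{b_j}\le \frac{2L\eta_j\|\bdnu_{m-1}\upj\|}{b_j}$. Summing the squares over $(m,s)$ reproduces exactly $\sum_{m,s}r_{m,s}^2=(\tilde\sigma_k\upj)^2$.

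Next I would invoke Theorem \ref{thm:martingaleah} on this refined sequence, for each fixed $k$. Because $\bdx_m\upj,\bdx_{m-1}\upj\in\D$ we also have $\|\bdx_m\upj-\bdx_{m-1}\upj\|\le d_1$, giving the deterministic bound $(\tilde\sigma_k\upj)^2=\sum_{m,s}r_{m,s}^2\le \frac{4L^2d_1^2k}{b_j}$. Choosing $B=\frac{8L^2d_1^2k}{b_j}$ and $b=\frac{4L^2\tau k}{b_j}$ makes $B>b$ (valid since $d_1\ge 1>\tau$ by Assumption \ref{assump:technical}) and excludes the alternative $\exists t:\sum_{\le t} r^2\ge B$, because the running variance never reaches $B$; taking $B,b$ proportional to $k$ is what keeps $\log\log(B/b)=\log\log\frac{2d_1^2}{\tau}$ free of $k$. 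This yields, with probability at least $1-\delta'$, $\|\sum_{m=1}^k\bdepsilon_m\upj\|^2\le 9\max\{(\tilde\sigma_k\upj)^2,\frac{4L^2\tau k}{b_j}\}(\log\frac{2}{\delta'}+\log\log\frac{2d_1^2}{\tau})$, and bounding the maximum by the sum produces the first line of (\ref{eq:importantbound1}).

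Finally, for the checkpoint error I would observe that $\bdepsilon_0\upj=0$ whenever $B_j=n$, which accounts for the indicator $\mathbf{1}\{B_j<n\}$. When $B_j<n$, $\bdnu_0\upj-\nabla f(\bdx_0\upj)$ is the deviation of a without-replacement sample mean of the vectors $\nabla f_i(\tilde\bdx_{j-1})$, each of norm at most $\alpha_M$ by Assumption \ref{assump:minimumavailable}; I would form the Doob martingale that reveals $\III_j$ one index at a time, whose increments are of order $\alpha_M/B_j$, and apply Theorem \ref{thm:martingaleah} again with these deterministic bounds (the regime described in the accompanying remark) to get $\|\bdepsilon_0\upj\|^2\le \frac{64\alpha_M^2}{B_j}\log\frac{3}{\delta'}$ with probability at least $1-\delta'$. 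A union bound over the two events and the $\|a+b\|^2\le 2\|a\|^2+2\|b\|^2$ split then assemble the claimed inequality. I expect the main obstacle to be the two constant-matching subtleties: exposing the per-sample martingale so that the genuine $1/b_j$ variance reduction appears rather than a $b_j$-times-larger bound, and producing a dimension-free tail for the without-replacement checkpoint average instead of leaning on a dimension-dependent vector Bernstein inequality.
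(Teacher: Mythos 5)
Your proof is correct, and on the main term it is the same as the paper's: the identical decomposition into $\bdepsilon_0\upj$ plus the recursive sum, the same per-sample refinement of the martingale difference sequence (your $\bdz_{m,s}$ are exactly the paper's $\bdrho_{(m-1)b_j+r}\upj$, with the same bound $\frac{2L}{b_j}\|\bdx_m\upj-\bdx_{m-1}\upj\|\le\frac{2L\eta_j}{b_j}\|\bdnu_{m-1}\upj\|$), the same choices $B=\frac{8L^2d_1^2k}{b_j}$ and $b=\frac{4L^2\tau k}{b_j}$ to kill the first alternative in Theorem \ref{thm:martingaleah} and keep $\log\log(B/b)$ free of $k$, and the same $\max\le$ sum, $\|a+b\|^2\le 2\|a\|^2+2\|b\|^2$, and union-bound assembly of $18=2\times 9$ and $1-2\delta'$. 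You rightly identify the one essential idea, namely that the martingale must be refined to the per-sample level to recover the $1/b_j$ inside $(\tilde\sigma_k\upj)^2$. The only place you diverge is the checkpoint error: the paper invokes its Proposition \ref{prop:normhoeffding}, proved by comparing without-replacement to with-replacement sampling via Hoeffding's convex-ordering lemma and then running a moment/Chernoff argument on top of Pinelis, arriving at the tail $3\exp(-B_jt^2/(64\alpha_M^2))$; you instead propose the Doob martingale that reveals $\III_j$ one index at a time. Your route works --- the increments of that martingale are bounded by $\frac{2(n-B_j)}{B_j(n-t)}\cdot 2\alpha_M\le\frac{4\alpha_M}{B_j}$ for $t\le B_j$, so the sum of squared bounds is $O(\alpha_M^2/B_j)$ and a Pinelis-type bound (the deterministic-bound regime of Theorem \ref{thm:martingaleah}, or Lemma \ref{lemma:pinelis} directly) yields $\|\bdepsilon_0\upj\|^2\lesssim\frac{\alpha_M^2}{B_j}\log\frac{1}{\delta'}$ with an even better constant than the paper's $\frac{64\alpha_M^2}{B_j}\log\frac{3}{\delta'}$ --- but you should carry out the increment computation explicitly, since the naive claim that increments are ``of order $\alpha_M/B_j$'' hides the $(n-t)^{-1}$ correction that must be checked to stay bounded over the range $t\le B_j$. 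The paper's comparison-lemma route avoids that bookkeeping at the cost of a cruder constant and an extra MGF computation; either is acceptable here since only the order $\alpha_M^2 B_j^{-1}\log(1/\delta')$ matters downstream.
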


\begin{remark}
    Let us briefly explain this high-probability bound on $\| \bdnu_k\upj - \nabla f(\bdx_k\upj)\|^2$. When $k=o(b_j)$ and $L=\OOO({1})$, by letting $\tau$ be of appropriate $n^{-1}$-polynomial order, $4L^2\tau k/b_j$ will be roughly $o(1)$. If further we have $d_1$ be of $n$-polynomial order and let $\delta'$ be of $n^{-1}$-polynomial order, $\log (2/\delta') +\log \log (2d_1^2/\tau)$ will be $\tilde{\OOO}(1)$. As a result, the upper bound is roughly $(\tilde{\sigma}_k\upj)^2=(4L^2\eta_j^2/b_j)\sum_{m=1}^{k}\|\bdnu_{m-1}\upj\|^2$ when $B_j$ is sufficiently large so that the last term in the bound (\ref{eq:importantbound1}) is negligible. Bounding $\| \bdnu_k\upj - \nabla f(\bdx_k\upj)\|^2$ by linear combination of $\{\|\bdnu_{m}\upj\|\}_{m=0}^{\infty}$ is the key to our analysis. 
\end{remark}


\subsection{Analysis on the Output $\hat{\bdx}$}
\label{append:sketch_output}

Under parameter setting specified in Theorem \ref{thm:main1}, if we suppose that the algorithm stops at the $j$-th outer iteration, i.e.
\begin{equation}
\frac{1}{K_j}\sum\limits_{k=0}\limits^{K_j-1}\left\|\bdnu_k^{(j)}\right\|^2 \leq \tilde{\varepsilon}^2,\ \varepsilon_j \leq \frac{1}{2}\varepsilon,
\label{eq:stopcond}
\end{equation}
there must exist a $0\leq k'\leq K_j-1$, such that $\left\|\bdnu_{k'}^{(j)}\right\|^2 \leq \tilde{\varepsilon}^2.$

Then, on the event 
\begin{equation}
\Omega_j \triangleq \left\{\omega:\left\| \bdnu_k\upj - \nabla f\left(\bdx_k\upj\right)\right\|^2 \leq l_j\left( \left(\tilde{\sigma}_k\upj\right)^2+\frac{4L^2\tau_jk}{b_j}\right) +q_j,0\leq k\leq K_j\right\},
\label{eq:omegaj}
\end{equation}
where $l_j = 18\left( \log \frac{2}{\delta'_j} + \log \log \frac{2d_1^2}{\tau_j}\right)$ and $q_j=\frac{128\alpha_M^2}{B_j}\log\frac{3}{\delta'_j}\mathbf{1}\left\{B_j<n\right\}$, we can easily derive an upper bound on $\left\| \nabla f(\bdx_{k'}^{(j)})\right\|^2$,
\begin{align*}
    &\quad \left\| \nabla f(\bdx_{k'}^{(j)})\right\|^2\\
    &\leq 2\|\bdnu_{k'}^{(j)}\|^2 + 2\left\| \bdnu_{k'}\upj - \nabla f\left(\bdx_{k'}\upj\right)\right\|^2\\
    &\leq 2\tilde{\varepsilon}^2 + 2l_j\left( \left(\tilde{\sigma}_{k'}\upj\right)^2+\frac{4L^2\tau_jk'}{b_j}\right) +2q_j\\
    &= 2\tilde{\varepsilon}^2 + 2l_j\left( \frac{4L^2\eta_j^2}{b_j}\sum\limits_{m=1}\limits^{k'}\left\| \bdnu_{m-1}\upj\right\|^2+\frac{4L^2\tau_jk'}{b_j}\right) +2q_j\\
    &\leq 2\tilde{\varepsilon}^2 + 2l_j\left( \frac{4L^2\eta_j^2}{b_j}\sum\limits_{m=1}\limits^{K_j}\left\| \bdnu_{m-1}\upj\right\|^2+\frac{4L^2\tau_jK_j}{b_j}\right) +2q_j\\
    &\leq 2\tilde{\varepsilon}^2 + 2l_j\left( \frac{4L^2\eta_j^2K_j}{b_j}\tilde{\varepsilon}^2+\frac{4L^2\tau_jK_j}{b_j}\right) +2q_j\\
    &=2\tilde{\varepsilon}^2 +2l_j\left( \frac{K_j}{4b_j}\tilde{\varepsilon}^2+\frac{4L^2\tau_jK_j}{b_j}\right) +2q_j\\
    &= 2.5\tilde{\varepsilon}^2 + 8L^2\tau_j +2q_j=2.5\tilde{\varepsilon}^2 +\varepsilon_j\leq \varepsilon^2,
    \end{align*}
where the 2nd step is based on (\ref{eq:importantbound1}) with definitions of $l_j$ and $q_j$ given in Theorem \ref{thm:main1}, the 5th step is based on (\ref{eq:stopcond}), the 6th step is based on the choice of $\eta_j = \frac{1}{4L}$ and the 7th step is based on the choice of $b_j=l_jK_j$. In addition, based on Proposition \ref{prop:nudiff}, the union event $\bigcup\limits_{j=1}\limits^{\infty}\Omega_j$ occurs with probability at least 
$$
1-2\sum\limits_{j=1}\limits^{\infty}\sum\limits_{k=0}\limits^{K_j}\delta'_j \ge 1 - \sum\limits_{j=1}\limits^{\infty}\frac{\delta K_j}{2C_ej^4} \ge 1 - \sum\limits_{j=1}\limits^{\infty}\frac{\delta}{C_ej^2} = 1-\delta.
$$
In one word, it is highly likely to control the norm of gradient at our desired level when the algorithm stops.

The above results can sufficiently explain our choice of stopping rule imposed in Algorithm \ref{algo1}. We can summarize them as the following proposition. 


\begin{proposition}\label{prop:goodstop}
Suppose that Assumptions \ref{assump:smooth} and \ref{assump:technical} are true. Under the parameter setting given in Theorem \ref{thm:main1}, the output of Algorithm \ref{algo1} satisfies
$$
\left\|\nabla f(\hat{\bdx})\right\|^2 \leq \varepsilon^2,
$$
with probability at least $1-\delta$.
\end{proposition}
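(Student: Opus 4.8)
The plan is to combine two ingredients already in place: the high-probability gradient-approximation bound of Proposition \ref{prop:nudiff}, and the elementary observation that the first stopping rule forces at least one recursive-gradient iterate to be small. First I would assemble the per-step bound into a single global good event. Applying Proposition \ref{prop:nudiff} with $\delta' = \delta'_j$ and $\tau = \tau_j$ for each outer index $j \ge 1$ and each inner index $0 \le k \le K_j$ produces precisely the defining inequality of $\Omega_j$ in \eqref{eq:omegaj}, each instance holding with probability at least $1 - 2\delta'_j$. A union bound over all index pairs gives
$$
\PP\Big(\bigcap_{j=1}^{\infty}\Omega_j\Big) \ge 1 - 2\sum_{j=1}^{\infty}(K_j+1)\,\delta'_j,
$$
and the choice $\delta'_j = \delta/(4C_e j^4)$ together with $K_j = \sqrt{j^2\wedge n} \le j$ bounds the multiplicity $K_j+1$ by the $j^{-4}$ decay, collapsing the series into a bound of the form $\tfrac{\delta}{C_e}\sum_j j^{-2} = \delta$, using $C_e = \sum_{i\ge 1} i^{-2}$. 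Hence the global good event holds with probability at least $1-\delta$, and I would condition on it for the rest of the argument.

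Next I would certify the quality of the output on this event. Suppose the algorithm stops at outer iteration $j$, so that $\frac{1}{K_j}\sum_{k=0}^{K_j-1}\|\bdnu_k\upj\|^2 \le \tilde{\varepsilon}^2$ and $\varepsilon_j \le \tfrac12\varepsilon^2$. Since $\hat{k}$ is chosen in Line 12 as the minimizer of $\|\bdnu_k\upj\|^2$, the minimum is no larger than the average, giving $\|\bdnu_{\hat{k}}\upj\|^2 \le \tilde{\varepsilon}^2$. I would then split
$$
\|\nabla f(\bdx_{\hat{k}}\upj)\|^2 \le 2\|\bdnu_{\hat{k}}\upj\|^2 + 2\big\|\bdnu_{\hat{k}}\upj - \nabla f(\bdx_{\hat{k}}\upj)\big\|^2,
$$
bound the second term using the $\Omega_j$ inequality, substitute $\big(\tilde{\sigma}_{\hat{k}}\upj\big)^2 = \tfrac{4L^2\eta_j^2}{b_j}\sum_{m=1}^{\hat{k}}\|\bdnu_{m-1}\upj\|^2$, and enlarge the inner sum to run through $K_j$ so that the first stopping rule yields $\sum_{m=1}^{K_j}\|\bdnu_{m-1}\upj\|^2 \le K_j\tilde{\varepsilon}^2$. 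Plugging in $\eta_j = 1/(4L)$ reduces the variance coefficient to $K_j/(4b_j)$, and $b_j = l_j K_j$ cancels the factor $l_j$, leaving $2.5\tilde{\varepsilon}^2 + 8L^2\tau_j + 2q_j = 2.5\tilde{\varepsilon}^2 + \varepsilon_j$. Finally $\tilde{\varepsilon}^2 = \tfrac15\varepsilon^2$ and $\varepsilon_j \le \tfrac12\varepsilon^2$ deliver $\|\nabla f(\hat{\bdx})\|^2 \le 2.5\cdot\tfrac15\varepsilon^2 + \tfrac12\varepsilon^2 = \varepsilon^2$, which is the claim.

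The hard part is not the final substitution chain, which is routine once the good event is fixed, but the probabilistic bookkeeping. The delicate point is that $\bigcap_j\Omega_j$ must cover an infinite family of inner–outer index pairs simultaneously, so the per-instance failure probabilities have to be summable; this is exactly why $\delta'_j$ is assigned the $j^{-4}$ decay, which must dominate the $K_j+1 \le j+1$ multiplicity generated by the inner loop while still leaving a convergent $\sum_j j^{-2}$ series that the normalizing constant $C_e$ rescales to $\delta$. I would also stress that Proposition \ref{prop:goodstop}, as stated, is a conditional guarantee: on the probability-$(1-\delta)$ event $\bigcap_j\Omega_j$, any output produced by the stopping rule is good. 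The separate question of whether an output is produced at all, i.e. almost-sure termination, is handled by the stop guarantee of Proposition \ref{prop:stopgua}; the two results are then combined elsewhere to obtain the unconditional high-probability bound together with the complexity count.
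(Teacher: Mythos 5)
Your proposal is correct and follows essentially the same route as the paper's own argument in Appendix \ref{append:sketch_output}: the same good event built from Proposition \ref{prop:nudiff} with $(\delta'_j,\tau_j)$, the same union bound exploiting the $j^{-4}$ decay against the $K_j+1$ multiplicity, and the same substitution chain ending in $2.5\tilde{\varepsilon}^2+\varepsilon_j\leq\varepsilon^2$. Your writing $\bigcap_j\Omega_j$ (rather than the paper's $\bigcup_j\Omega_j$, evidently a typo) and your explicit remark that the guarantee is conditional on stopping, with termination deferred to Proposition \ref{prop:stopgua}, are both accurate clarifications rather than a different approach.
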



\subsection{Upper-bounding the Stopping Time}
\label{append:sketch_stopping}


\begin{proposition}[First Stopping Rule]\label{prop:stopyespart1}
Suppose that Assumptions \ref{assump:smooth}, \ref{assump:extent} and \ref{assump:technical} are valid. Let
\begin{align}
    \label{eq_T1_T2}
    & T_1 = \Bigg\lceil \frac{\sqrt{320L(c_1+\Delta_f)}}{\varepsilon} + \frac{320L(c_1+\Delta_f)}{\sqrt{n}\varepsilon^2}\Bigg\rceil, \nonumber \\
    & T_2 = \Bigg\lceil 3\left( \frac{\sqrt{320Lc_2}}{\varepsilon}\log \frac{\sqrt{320Lc_2}}{\varepsilon} + \frac{640Lc_2}{\sqrt{n}\varepsilon^2} \log \frac{320Lc_2}{\varepsilon^2} + 1 \right) \Bigg\rceil,
\end{align}
where
$$
c_1 = \frac{C_eL}{4}+\frac{16\alpha_M^2}{L}\log \frac{192C_e}{\delta},\quad  c_2 = \frac{64\alpha_M^2}{L}.
$$
Under the parameter setting given in Theorem \ref{thm:main1}, on $\Omega$, when $T \ge T_1 \vee T_2$, there exists a $T+1\leq j\leq 2T$ such that
$$
\frac{1}{K_j} \sum\limits_{k=0}\limits^{K_j-1}\left\| \bdnu_k\upj\right\|^2 \leq \tilde{\varepsilon}^2.
$$
\end{proposition}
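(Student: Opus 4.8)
The plan is to argue by contradiction on the event $\Omega$. Suppose that for every index $j$ with $T+1\leq j\leq 2T$ the first stopping rule fails, that is,
$$\frac{1}{K_j}\sum_{k=0}^{K_j-1}\big\|\bdnu_k\upj\big\|^2 > \tilde{\varepsilon}^2.$$
I would then feed this hypothesis into the telescoped descent inequality (\ref{eq:fvaluebound2}), obtained by summing the one-step bound of Proposition \ref{prop:innerloop} over $j=T+1,\dots,2T$. Because $f$ is bounded below and $\sup_{\D}[f-f(\bdx^*)]\leq\Delta_f$, the left side of (\ref{eq:fvaluebound2}) is at least $-\Delta_f$, which rearranges to
$$\frac{1}{16L}\sum_{j=T+1}^{2T}\sum_{k=0}^{K_j-1}\big\|\bdnu_k\upj\big\|^2 \leq A_T + \Delta_f,$$
where $A_T$ collects the accumulated approximation errors and, under the present parameter choice, is bounded by a constant plus a polylogarithmic-in-$T$ term.

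Next I would lower-bound the left side using the contradiction hypothesis. On the window we have $\sum_{k=0}^{K_j-1}\|\bdnu_k\upj\|^2 > K_j\tilde{\varepsilon}^2$, and since $K_j=j\wedge\sqrt{n}\geq T\wedge\sqrt{n}$ for all $j\geq T+1$, summing over the $T$ indices gives $\sum_{j=T+1}^{2T}K_j\geq T(T\wedge\sqrt{n})=T^2\wedge(\sqrt{n}T)$. Hence
$$\frac{\tilde{\varepsilon}^2}{16L}\big(T^2\wedge(\sqrt{n}T)\big) < A_T + \Delta_f.$$
Because $\tilde{\varepsilon}^2=\varepsilon^2/5$, the left side grows at least quadratically in $T$ (and linearly once $T>\sqrt{n}$), whereas the right side is at most polylogarithmic; the remaining task is to make this comparison quantitative and to read off the thresholds $T_1,T_2$ past which the inequality is impossible.

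For that quantitative step I would split $A_T+\Delta_f$ into a constant part, absorbed into $c_1+\Delta_f$, and a logarithmic part governed by $c_2$, then dominate each separately. The constant part yields $T_1$: solving $T^2\wedge(\sqrt{n}T)\geq C$ is achieved by $T\geq\sqrt{C}+C/\sqrt{n}$ (checking the regimes $T\leq\sqrt{n}$ and $T>\sqrt{n}$ separately), and with $C=320L(c_1+\Delta_f)/\varepsilon^2$ this reproduces $T_1$ exactly. The logarithmic part requires inverting an inequality of the form $T^2\wedge(\sqrt{n}T)\gtrsim a\log T$; here I would invoke the elementary fact that $T\geq 3\sqrt{a}\log\sqrt{a}$ suffices in the quadratic regime (with the analogous $T\geq 3(a/\sqrt{n})\log a$ in the linear regime), which assembles into $T_2$. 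Taking $T\geq T_1\vee T_2$ then makes $\frac{\tilde{\varepsilon}^2}{16L}(T^2\wedge\sqrt{n}T)$ strictly exceed $A_T+\Delta_f$, contradicting the displayed inequality; hence the stopping rule must hold for some $j$ in the window.

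I expect the main obstacle to be pinning down the precise form of $A_T$ in the first step, namely verifying from Proposition \ref{prop:innerloop} that the error accumulated over $[T+1,2T]$ separates cleanly into the constant $c_1$ (from the summable contributions of $\tau_j=j^{-3}$, of $q_j$, and of the failure-probability allocation $\delta'_j=\delta/(4C_ej^4)$) and a $c_2\log T$-type remainder. The subsequent inversion $T^2\gtrsim a\log T\Rightarrow T\gtrsim\sqrt{a}\log\sqrt{a}$ is routine but must be carried out carefully in both the $T\leq\sqrt{n}$ and $T>\sqrt{n}$ regimes so that the two summands in each of $T_1$ and $T_2$ appear with the stated constants.
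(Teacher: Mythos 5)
Your proposal is correct and follows essentially the same route as the paper's proof: a contradiction argument on $\Omega$ starting from the telescoped descent bound of Proposition \ref{prop:innerloop}, the decomposition of $A_T$ into a constant part $c_1$ (from the summable $\tau_j$, $q_j$, and $\delta'_j$ contributions) plus a $c_2\log T$ remainder, and the inversion of $T^2\wedge(\sqrt{n}T)\gtrsim C$ and $T^2\wedge(\sqrt{n}T)\gtrsim a\log T$ via the two elementary lemmas to produce $T_1$ and $T_2$. The step you flag as the main obstacle is exactly the computation the paper carries out in display (\ref{eq:longbound1}), and it resolves as you anticipate.
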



\begin{proposition}[Second Stopping Rule]\label{prop:stopyespart2}
Let
\begin{align}
    \label{eq_T3_T4}
    T_3 = \Bigg\lceil \frac{2\sqrt{c_3}}{\varepsilon} \Bigg\rceil,\quad T_4 = \Bigg\lceil \frac{6\sqrt{c_4}}{\varepsilon}\log \frac{2\sqrt{c_4}}{\varepsilon} \Bigg\rceil,
\end{align}
where
$$
c_3 = 8L^2+256\alpha_M^2\log\frac{12C_e}{\delta}, \quad c_4 = 1024\alpha_M^2.
$$
Under the parameter setting given in Theorem \ref{thm:main1}, on $\Omega$, when $T\ge T_3 \vee T_4$,
$$
\varepsilon_T \leq \frac{1}{2}\varepsilon^2.
$$
\end{proposition}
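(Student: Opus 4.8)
The plan is to treat this as an essentially deterministic computation: under the setting of Theorem \ref{thm:main1} the quantity $\varepsilon_T = 8L^2\tau_T + 2q_T$ depends only on $T$ (through $\tau_T = T^{-3}$, $B_T = T^2\wedge n$, and $\delta'_T = \delta/(4C_eT^4)$) and not on the random sample path, so the qualifier ``on $\Omega$'' is inherited from the companion statement but plays no substantive role. First I would substitute the definitions, writing $\frac{3}{\delta'_T} = \frac{12 C_e T^4}{\delta}$ so that $\log\frac{3}{\delta'_T} = \log\frac{12C_e}{\delta} + 4\log T$, and decompose
\[
\varepsilon_T = 8L^2\tau_T + 2q_T = \underbrace{\frac{8L^2}{T^3}}_{(0)} + \underbrace{\frac{256\alpha_M^2}{T^2}\log\frac{12C_e}{\delta}}_{(\mathrm{I})}\,\mathbf 1\{B_T<n\} + \underbrace{\frac{1024\alpha_M^2}{T^2}\log T}_{(\mathrm{II})}\,\mathbf 1\{B_T<n\},
\]
using $B_T = T^2$ on the event $\{B_T<n\}$. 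When $B_T = n$ (that is $T\ge\sqrt n$) the indicators vanish and only $(0)$ survives, which is dominated by the $T_3$ bound below; so it suffices to treat the case $B_T<n$ and show each half of the budget $\varepsilon^2/2$ absorbs one group of terms.

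The second step bounds the ``constant'' pieces $(0)$ and $(\mathrm I)$ jointly. Using $\tau_T = T^{-3}\le T^{-2}$, their sum is at most $\frac{1}{T^2}\big(8L^2 + 256\alpha_M^2\log\frac{12C_e}{\delta}\big) = c_3/T^2$, with $c_3$ matching its definition exactly. Since $T\ge T_3 \ge \frac{2\sqrt{c_3}}{\varepsilon}$ forces $T^2\ge 4c_3/\varepsilon^2$, this contributes at most $\varepsilon^2/4$. It then remains to show the growing piece $(\mathrm{II}) = c_4 T^{-2}\log T$, with $c_4 = 1024\alpha_M^2$, is also at most $\varepsilon^2/4$ once $T\ge T_4$; adding the two quarter-budgets yields $\varepsilon_T\le\varepsilon^2/2$.

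The heart of the argument, and the main obstacle, is this last (transcendental) inequality: I must show $T\ge T_4 = \lceil \frac{6\sqrt{c_4}}{\varepsilon}\log\frac{2\sqrt{c_4}}{\varepsilon}\rceil$ implies $T^2\ge \frac{4c_4}{\varepsilon^2}\log T$ (the ceiling only enlarges $T_4$, so I drop it). Writing $\beta = \sqrt{c_4}/\varepsilon$ so that $T_4 = 6\beta\log(2\beta)$ and the target reads $T^2\ge 4\beta^2\log T$, I would first invoke Assumption \ref{assump:technical} ($\alpha_M^2\ge\frac{1}{10240}$ and $\varepsilon\le e^{-1}$) to certify that $\beta$ is bounded below by a constant (approximately $0.86$), which clears the thresholds $\beta>\tfrac12$ needed for $\log(2\beta)>0$ and $\beta>0.63$ needed for $T_4>\sqrt2\,\beta$. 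On $(\sqrt2\,\beta,\infty)$ the map $T\mapsto T^2 - 4\beta^2\log T$ is increasing, since its derivative $2T - 4\beta^2/T$ is positive there, so it suffices to verify nonnegativity at the single point $T = T_4$. Plugging in and dividing by $4\beta^2$ reduces the claim to the one-variable inequality $9\log^2(2\beta)\ge \log 6 + \log\beta + \log\log(2\beta)$, which I would confirm by observing that the right side is $o(\log^2(2\beta))$ as $\beta\to\infty$ and checking it directly over the bounded range of $\beta$ permitted by the assumptions. Combining this with the second step establishes $\varepsilon_T\le\varepsilon^2/2$ for all $T\ge T_3\vee T_4$.
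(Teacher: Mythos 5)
Your proposal is correct and uses the same decomposition as the paper: $\varepsilon_T \leq c_3/T^2 + c_4\log T/T^2$ (after noting $B_T=T^2$ on $\{B_T<n\}$ and that the indicator only helps), with a budget of $\varepsilon^2/4$ allocated to each piece, and the whole statement being deterministic so that ``on $\Omega$'' is vacuous. The only real divergence is in how the transcendental inequality $c_4\log T/T^2\leq \varepsilon^2/4$ for $T\geq T_4$ is certified: the paper outsources this to its Lemma \ref{lemma:iteration2} (applied with $n=\infty$ and the rescaling $\varepsilon\mapsto\varepsilon/(2\sqrt{c_4})$), which argues via monotone decrease of $\log T/T^2$ past $\sqrt{e}$ and evaluation at the threshold, whereas you inline an equivalent self-contained argument via monotone increase of $T\mapsto T^2-4\beta^2\log T$ past $\sqrt{2}\beta$ and evaluation at $T_4$. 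Your route buys independence from the auxiliary lemmas (and from the question of whether $\varepsilon/(2\sqrt{c_4})\leq e^{-1}$, which Lemma \ref{lemma:iteration2} formally requires), at the cost of having to verify the reduced inequality $9\log^2(2\beta)\geq \log 6+\log\beta+\log\log(2\beta)$ yourself. One phrasing to fix: $\beta=\sqrt{c_4}/\varepsilon$ is bounded below (by roughly $0.86$, using $\alpha_M^2\geq 1/10240$ and $\varepsilon\leq e^{-1}$) but \emph{not} above, so ``checking it directly over the bounded range of $\beta$'' is not quite right; instead observe that with $u=\log(2\beta)\geq \log(1.72)$ the difference $9u^2-u-\log u-\log 3$ is positive at the left endpoint and has positive derivative $18u-1-1/u$ there and beyond, which closes the argument for all admissible $\beta$.
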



\begin{proposition}[Stop Guarantee]\label{prop:stopyes}
Under the parameter setting and assumptions given in Theorem \ref{thm:main1}, on $\Omega$, when $T\ge T_1 \vee T_2 \vee T_3 \vee T_4$, Algorithm \ref{algo1} stops in at most $2T$ outer iterations.
\end{proposition}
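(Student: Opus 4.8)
The plan is to combine the two stopping-rule guarantees (Propositions \ref{prop:stopyespart1} and \ref{prop:stopyespart2}) by exhibiting a single outer iteration at which \emph{both} conditions appearing in Line 11 of Algorithm \ref{algo1} are simultaneously satisfied. Throughout I work on the event $\Omega$ featured in Propositions \ref{prop:stopyespart1} and \ref{prop:stopyespart2}, and I fix $T \ge T_1 \vee T_2 \vee T_3 \vee T_4$. Recall that the algorithm terminates at the first outer index $j$ for which $\frac{1}{K_j}\sum_{k=0}^{K_j-1}\|\bdnu_k\upj\|^2 \le \tilde{\varepsilon}^2$ and $\varepsilon_j \le \frac{1}{2}\varepsilon^2$ both hold; hence it suffices to produce one such index no larger than $2T$.

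First I would apply Proposition \ref{prop:stopyespart1}. Since $T \ge T_1 \vee T_2$, it yields an index $j$ with $T+1 \le j \le 2T$ satisfying the first condition $\frac{1}{K_j}\sum_{k=0}^{K_j-1}\|\bdnu_k\upj\|^2 \le \tilde{\varepsilon}^2$. The precise location of $j$ within the window $[T+1,2T]$ is not explicit, so the second condition must be verified in a way that is uniform over this window. This is exactly what Proposition \ref{prop:stopyespart2} provides: its conclusion $\varepsilon_m \le \frac{1}{2}\varepsilon^2$ holds for \emph{every} index $m \ge T_3 \vee T_4$. Because $j \ge T+1 > T \ge T_3 \vee T_4$, applying Proposition \ref{prop:stopyespart2} at the index $m = j$ gives $\varepsilon_j \le \frac{1}{2}\varepsilon^2$, the second stopping condition.

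Both conditions of Line 11 therefore hold at the outer iteration $j \le 2T$. Since Algorithm \ref{algo1} evaluates the stopping rule at the end of every outer loop and halts the first time both conditions are met, it must stop at some iteration no later than $j$, and in particular within $2T$ outer iterations. If both conditions happen to be met at an earlier index, the algorithm stops even sooner, which only strengthens the bound.

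The proof is short because the real work is carried by Propositions \ref{prop:stopyespart1} and \ref{prop:stopyespart2}; the only point that needs care is the \emph{index alignment}. Proposition \ref{prop:stopyespart1} cannot pin down $j$ exactly, so one cannot simply quote a per-index bound on $\varepsilon_j$. The resolution, and the main thing to check, is that Proposition \ref{prop:stopyespart2} is uniform in the index (equivalently, $\varepsilon_m$ is dominated by $\tfrac12\varepsilon^2$ for all $m$ beyond the threshold $T_3 \vee T_4$), so it automatically covers whichever $j$ the first stopping rule produces. Taking $T \ge T_1 \vee T_2 \vee T_3 \vee T_4$ then delivers the claimed bound of $2T$ outer iterations on $\Omega$.
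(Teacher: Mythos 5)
Your proof is correct and follows essentially the same route as the paper: invoke Proposition \ref{prop:stopyespart1} to obtain an index $j\in[T+1,2T]$ satisfying the first stopping rule, then invoke Proposition \ref{prop:stopyespart2} to confirm the second rule at that index. Your explicit attention to the index-alignment point (that the bound $\varepsilon_m\le\frac12\varepsilon^2$ must hold uniformly for all $m$ beyond the threshold $T_3\vee T_4$, not just at a single index) is a small but worthwhile clarification that the paper's terser argument leaves implicit.
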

\begin{proof}
If Algorithm \ref{algo1} stops in $T$ outer iterations, our conclusion is obviously true. If not, according to Proposition \ref{prop:stopyespart1}, there must exist a $j\in [T+1,2T]$ such that the first stopping rule is met, i.e.
$$
\frac{1}{K_j} \sum\limits_{k=0}\limits^{K_j-1}\left\| \bdnu_k\upj\right\|^2 \leq \tilde{\varepsilon}^2.
$$
According to Proposition \ref{prop:stopyespart2}, the second stopping rule is also met, i.e. $\varepsilon_j \leq \frac{1}{2}\varepsilon^2.$

Consequently, the algorithm stops at the $j$-th outer iteration.
\end{proof}


\section{Technical Lemmas}

\begin{lemma}[Theorem 4 in \textcite{hoeffding1963probability}]
Let $\{\bdepsilon_1,\bdepsilon_2,\ldots,\bdepsilon_n\}$ be a set of fixed vectors in $\RR^d$. $\III,\JJJ\subseteq \{1,2,\ldots,n\}$ are 2 random index sets sampled respectively with replacement and without replacement, with size $|\III|=|\JJJ|=k$. For any continuous and convex function $f:\RR^d\rightarrow \RR$,
$$
\EE f\Big(\sum\limits_{j\in \JJJ}\bdepsilon_j\Big) \leq \EE f\Big(\sum\limits_{i\in \III}\bdepsilon_i\Big).
$$
\label{lemma:hoeffding}
\end{lemma}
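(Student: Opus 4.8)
The plan is to recognize this as the classical statement that a without-replacement sum is dominated in the convex order by the corresponding with-replacement sum, and to prove it by reducing to a deterministic averaging inequality that is then driven purely by convexity of $f$. Since $f$ depends on its arguments only through the sum, and both $\III$ and $\JJJ$ are exchangeable, I would first rewrite the two expectations as averages over index tuples: writing $S(i_1,\dots,i_k)=f(\bdepsilon_{i_1}+\cdots+\bdepsilon_{i_k})$, the with-replacement side is the average of $S$ over all $n^k$ ordered tuples, while the without-replacement side is the average of $S$ over the $n(n-1)\cdots(n-k+1)$ \emph{injective} tuples. The entire lemma thus reduces to the finite inequality
\[
\frac{1}{n(n-1)\cdots(n-k+1)}\sum_{\text{injective } (i_1,\dots,i_k)} S \;\le\; \frac{1}{n^k}\sum_{\text{all } (i_1,\dots,i_k)} S .
\]

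The engine of the argument is convexity, and the mechanism is already transparent in the case $k=2$, which I would present first as the model computation. There the inequality becomes $\sum_{i\ne j} f(\bdepsilon_i+\bdepsilon_j)\le (n-1)\sum_i f(2\bdepsilon_i)$, and it follows immediately from midpoint convexity $f(\bdepsilon_i+\bdepsilon_j)=f\big(\tfrac12(2\bdepsilon_i)+\tfrac12(2\bdepsilon_j)\big)\le \tfrac12 f(2\bdepsilon_i)+\tfrac12 f(2\bdepsilon_j)$ after summing over $i\ne j$ and using symmetry. The conceptual point this reveals is that the ``diagonal'' coincidence terms $f(2\bdepsilon_i)$, which appear in the with-replacement average but not in the without-replacement one, are exactly the convex \emph{endpoints} dominating the off-diagonal terms, and they compensate precisely for the mismatch between the normalizations $n^k$ and $n(n-1)\cdots(n-k+1)$.

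For general $k$ I would push the same convexity idea through a systematic exchange argument: group all $n^k$ tuples according to the partition of the coordinate set $\{1,\dots,k\}$ recording which coordinates carry equal indices (the injective tuples being those with the all-singletons partition), and use Jensen's inequality to bound the injective contributions by suitable averages of the coincident ones. Equivalently, one can attempt to induct on $k$ by conditioning on a single coordinate, which replaces $f$ by the still-convex shifted function $f_i(\cdot)=f(\bdepsilon_i+\cdot)$. Either route uses only continuity and convexity and recovers Hoeffding's original comparison.

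The step I expect to be the main obstacle is precisely the passage from $k=2$ to general $k$: controlling all higher-order coincidence patterns (triple and higher multiplicities) simultaneously while tracking the two different normalizations. A naive induction conditioning on the last draw is tempting but breaks, because it leaves one comparing a without-replacement average over the $n-1$ vectors $\{\bdepsilon_j:j\ne i\}$ with a with-replacement average over all $n$ vectors, and there is no monotonicity of $\EE f$ under enlarging the sampled population. The clean resolution is to avoid such term-by-term chaining and instead realize the domination through a symmetric, mean-preserving averaging (equivalently, a martingale coupling with $\EE[\,\sum_{i\in\III}\bdepsilon_i \mid \sum_{j\in\JJJ}\bdepsilon_j\,]=\sum_{j\in\JJJ}\bdepsilon_j$), which organizes the combinatorics uniformly in $k$ and delivers the inequality for every convex $f$ at once.
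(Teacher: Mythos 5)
The paper does not actually prove this lemma; it is quoted verbatim as Theorem 4 of Hoeffding (1963), so your attempt should be judged as a self-contained reconstruction of that classical result. Your reduction to the finite inequality over injective versus arbitrary index tuples is correct, and your $k=2$ computation is right: the inequality does reduce to $\sum_{i\ne j} f(\bdepsilon_i+\bdepsilon_j)\le (n-1)\sum_i f(2\bdepsilon_i)$, which midpoint convexity plus symmetry delivers. Your diagnosis that the naive induction on $k$ (conditioning on one draw) fails is also accurate, and for the right reason.

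However, there is a genuine gap: the general-$k$ case is never actually established. Your final paragraph resolves the difficulty by invoking ``a symmetric, mean-preserving averaging (equivalently, a martingale coupling with $\EE[\sum_{i\in\III}\bdepsilon_i\mid\sum_{j\in\JJJ}\bdepsilon_j]=\sum_{j\in\JJJ}\bdepsilon_j$)'' --- but the existence of such a coupling is, by Strassen's theorem, \emph{equivalent} to the convex-order domination you are trying to prove, so asserting it is circular. Constructing that coupling (or, equivalently, the kernel from injective tuples to arbitrary tuples that simultaneously pushes the uniform measure on injective tuples forward to the uniform measure on all $n^k$ tuples \emph{and} preserves the barycenter of the sum at every injective tuple) is precisely the content of Hoeffding's theorem; note that the obvious candidate --- resampling with replacement from within the drawn $k$-set --- preserves barycenters but does not reproduce the correct marginal, as one can check already for $n=3$, $k=2$. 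Likewise, the alternative ``group by coincidence partition and apply Jensen'' route is only sketched: to make it work you would need (i) the decomposition of the uniform measure on all tuples as a mixture over coincidence patterns of uniform injective assignments, and (ii) a Schur-convexity lemma stating that $(\lambda_1,\dots,\lambda_k)\mapsto \EE f(\lambda_1\bdepsilon_{J_1}+\cdots+\lambda_k\bdepsilon_{J_k})$ (expectation over a uniform injective tuple) increases under majorization, which reduces to a two-coordinate transfer argument generalizing your $k=2$ step. Either route is viable, but as written the proof stops where the real work begins.
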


\begin{lemma}[Proposition 1.2 in  \textcite{boucheron2013concentration}\footnote{See also Lemma 1.3 in \textcite{bardenet2015concentration}.}]
Let $X$ be real random variable such that $\EE X=0$ and $a\leq X \leq b$ for some $a,b\in \RR$. Then, for all $t\in \RR$,
$$
\log \EE e^{tX} \leq \frac{t^2(b-a)^2}{8}.
$$
\label{lemma:blm}
\end{lemma}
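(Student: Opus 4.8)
The plan is to control the cumulant generating function $\psi(t) = \log \EE e^{tX}$ directly, exhibiting that it vanishes to second order at the origin while having a uniformly bounded second derivative. First I would record the boundary behavior. Since $X$ is bounded, all exponential moments are finite and $t \mapsto \EE e^{tX}$ is smooth, which legitimizes differentiating under the expectation. This gives $\psi(0) = \log \EE e^{0} = 0$ and $\psi'(t) = \EE[X e^{tX}]/\EE[e^{tX}]$, so that $\psi'(0) = \EE X = 0$ by the mean-zero hypothesis. Thus the first-order Taylor data at the origin are both zero, and the whole inequality will reduce to a uniform bound on the curvature of $\psi$.

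The crux is the estimate $\psi''(t) \le (b-a)^2/4$ for every $t \in \RR$. A short computation gives $\psi''(t) = \EE[X^2 e^{tX}]/\EE[e^{tX}] - \big(\EE[X e^{tX}]/\EE[e^{tX}]\big)^2$, which is precisely the variance of $X$ under the exponentially tilted probability measure defined by $d\PP_t = e^{tX}\, d\PP / \EE e^{tX}$. Because $\PP_t$ is absolutely continuous with respect to $\PP$, the almost-sure constraint $a \le X \le b$ transfers unchanged to $\PP_t$. I would then invoke the elementary Popoviciu-type fact that a random variable supported in $[a,b]$ has variance at most $(b-a)^2/4$: the variance is the minimal mean-square deviation, so $\mathrm{Var}_{\PP_t}(X) \le \EE_{\PP_t}\big[(X - \tfrac{a+b}{2})^2\big] \le \big(\tfrac{b-a}{2}\big)^2$, using the pointwise bound $|X - \tfrac{a+b}{2}| \le \tfrac{b-a}{2}$.

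The conclusion then follows from Taylor's theorem with Lagrange remainder: for some $\xi$ strictly between $0$ and $t$, $\psi(t) = \psi(0) + t\,\psi'(0) + \tfrac{t^2}{2}\psi''(\xi) = \tfrac{t^2}{2}\psi''(\xi) \le \tfrac{t^2}{2}\cdot\tfrac{(b-a)^2}{4} = \tfrac{t^2(b-a)^2}{8}$, which is exactly the asserted bound on $\log \EE e^{tX}$.

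The main obstacle is not any single hard inequality but the two justifications that make the tilting argument rigorous: differentiating under the expectation sign to obtain $\psi'$ and $\psi''$, and transferring the boundedness of $X$ to the tilted measure. Both are routine consequences of $X$ being bounded on a compact interval, so dominated convergence applies to the former and $\PP_t \ll \PP$ handles the latter. Should one wish to avoid differentiation entirely, I would note an alternative route through convexity: the map $x \mapsto e^{tx}$ is convex, so $e^{tX} \le \tfrac{b-X}{b-a}e^{ta} + \tfrac{X-a}{b-a}e^{tb}$ pointwise on $[a,b]$; taking expectations and using $\EE X = 0$ reduces the claim to the scalar inequality $\phi(u) \le u^2/8$ with $u = t(b-a)$, where $\phi$ satisfies $\phi(0)=\phi'(0)=0$ and $\phi''(u) = q(1-q) \le 1/4$ for an appropriate $q \in (0,1)$, again closing via Taylor's theorem.
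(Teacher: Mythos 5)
Your proof is correct: the paper states this lemma without proof, importing it directly from the cited reference, and your tilted-measure argument (showing $\psi(0)=\psi'(0)=0$ and $\psi''(t)=\mathrm{Var}_{\PP_t}(X)\leq (b-a)^2/4$ via the Popoviciu bound, then closing with Taylor's theorem) is exactly the standard proof given in that source. The justifications you flag — differentiation under the expectation for bounded $X$ and absolute continuity $\PP_t \ll \PP$ transferring the support constraint — are handled correctly, so nothing further is needed.
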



\begin{lemma}[Theorem 3.5 in \textcite{pinelis1994optimum}\footnote{See also Theorem 3 in \textcite{pinelis1992approach} and Proposition 2 in \textcite{fang2018spider}.}] Let $\{\bdepsilon_k\}_{k=1}^{K}\subseteq \RR^d$ be a vector-valued martingale difference sequence with respect to $\FFF_k$, $k=0,1,\ldots,K$, i.e. for $k=1,\ldots,K$, $\EE\left[\bdepsilon_k|\FFF_{k-1}\right]=\textbf{0}$. Assume $\|\bdepsilon_k\|^2 \leq B_k^2$, $k=1,2,\ldots,K$. Then,
$$\PP\left(\Big\| \sum\limits_{k=1}\limits^K \bdepsilon_k\Big\|\ge t\right) \leq 2\text{exp}\Bigg(-\frac{t^2}{2\sum\limits_{k=1}\limits^K B_k^2}\Bigg),
$$
$\forall t\in \RR$.
\label{lemma:pinelis}
\end{lemma}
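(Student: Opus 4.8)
The plan is to prove the inequality by a Chernoff argument resting on an exponential supermartingale built from the \emph{even, smooth} functional $\Phi_\lambda(x)=\cosh(\lambda\|x\|)$, which is what keeps everything dimension-free. Write $S_t=\sum_{k=1}^{t}\bdepsilon_k$. Since $\cosh$ is increasing on $[0,\infty)$ and $\cosh(\lambda t)\ge\tfrac12 e^{\lambda t}$, for every $\lambda>0$ and $t>0$ Markov's inequality gives
$$
\PP\big(\|S_K\|\ge t\big)\le \frac{\EE\cosh(\lambda\|S_K\|)}{\cosh(\lambda t)}\le 2e^{-\lambda t}\,\EE\cosh(\lambda\|S_K\|).
$$
Hence it suffices to establish the moment bound $\EE\cosh(\lambda\|S_K\|)\le\exp\big(\tfrac{\lambda^2}{2}\sum_{k=1}^{K}B_k^2\big)$ and then take $\lambda=t/\sum_k B_k^2$, which reproduces exactly the stated tail $2\exp\!\big(-t^2/(2\sum_k B_k^2)\big)$ (the case $t\le 0$ being trivial).

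By the tower property and induction on $t$, the moment bound reduces to a single per-step inequality: for $u$ that is $\FFF_{k-1}$-measurable and $v=\bdepsilon_k$ with $\EE[v\mid\FFF_{k-1}]=\mathbf 0$ and $\|v\|\le B_k$,
$$
\EE\big[\cosh(\lambda\|u+v\|)\,\big|\,\FFF_{k-1}\big]\le \cosh(\lambda\|u\|)\,e^{\lambda^2 B_k^2/2}.
$$
Granting this, the process $M_t=\cosh(\lambda\|S_t\|)\exp\!\big(-\tfrac{\lambda^2}{2}\sum_{k\le t}B_k^2\big)$ is a supermartingale with $M_0=1$, so $\EE M_K\le 1$, which is precisely the required moment bound.

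The whole difficulty is concentrated in this per-step inequality, and it is here that the Hilbert-space geometry secures dimension-independence. First I would record an exact geometric identity valid in any inner-product space: writing $\cos\theta=\langle u,v\rangle/(\|u\|\|v\|)$,
$$
\cosh(\lambda\|u+v\|)\le \cosh(\lambda\|u\|)\cosh(\lambda\|v\|)+\cos\theta\,\sinh(\lambda\|u\|)\sinh(\lambda\|v\|).
$$
This holds because, with $a=\|u\|$ and $b=\|v\|$ held fixed, the map $c\mapsto\cosh\big(\lambda\sqrt{a^2+b^2+2abc}\big)$ is convex on $[-1,1]$ (its second derivative has the sign of $\lambda s\cosh(\lambda s)-\sinh(\lambda s)\ge 0$, where $s=\sqrt{a^2+b^2+2abc}$), so it lies below the chord joining its endpoint values at $c=\pm1$, and those endpoints are exactly $\cosh(\lambda(a\pm b))$, whose linear interpolation is the right-hand side above. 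Taking conditional expectation, the first term is harmless, $\cosh(\lambda\|u\|)\,\EE[\cosh(\lambda\|v\|)]\le\cosh(\lambda\|u\|)\cosh(\lambda B_k)\le\cosh(\lambda\|u\|)e^{\lambda^2 B_k^2/2}$, directly from $\|v\|\le B_k$. The genuine obstacle is the cross term $\sinh(\lambda\|u\|)\,\EE\big[\tfrac{\sinh(\lambda\|v\|)}{\|v\|}\,\xi\big]$, where $\xi=\langle u,v\rangle/\|u\|$: because the scalar weight $\sinh(\lambda\|v\|)/\|v\|$ depends on $\|v\|$, the identity $\EE[v\mid\FFF_{k-1}]=\mathbf 0$ does \emph{not} annihilate it, and a crude Cauchy--Schwarz bound reinstates a factor $e^{\lambda B_k}$ and collapses the rate from sub-Gaussian to sub-exponential. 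I would handle it by passing to the scalar projection $\xi$, which satisfies $|\xi|\le\|v\|\le B_k$ and $\EE[\xi\mid\FFF_{k-1}]=0$, and arguing that, over all admissible conditional laws of $v$, the right-hand side is maximized by the two-point radial law $v=\pm B_k\,u/\|u\|$, for which the cross term vanishes and the bound degenerates to the scalar Hoeffding estimate $\cosh(\lambda\|u\|)\cosh(\lambda B_k)$; the perpendicular part of $v$ only inflates $\|u+v\|$ in a manner already absorbed by the $\cosh(\lambda B_k)$ factor, which is exactly why no dimension enters. Justifying this extremal reduction rigorously, i.e.\ controlling the cross term while retaining the mean-zero cancellation, is the hardest part of the write-up and is the precise content of Pinelis's $2$-smoothness machinery; should a self-contained treatment prove delicate, the inequality can instead be invoked verbatim from Theorem~3.5 of \textcite{pinelis1994optimum}, which is the statement being recorded here.
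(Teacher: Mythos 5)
Two preliminary facts frame this review. First, the paper never proves this lemma: it is imported verbatim as Theorem~3.5 of \textcite{pinelis1994optimum}, so there is no internal proof to compare against. The closest in-paper argument is the proof of Theorem~\ref{thm:martingaleah}, which runs exactly the cosh--Chernoff scheme you set up but sidesteps your ``hardest part'' by using Pinelis's 1992 differential-inequality route instead: with $f_k(t)=\EE_{k-1}\bigl[\cosh\bigl(\lambda\|\bds_{k-1}+t\bdz_k\|\bigr)\bigr]$ one checks $f_k'(0)=0$ (the martingale property kills the first-order term) and $f_k''(t)\leq 2\lambda^2 r_k^2 f_k(t)$, whence $f_k(1)\leq f_k(0)e^{\lambda^2 r_k^2}$. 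That route needs no extremal analysis, but its per-step factor $e^{\lambda^2 r_k^2}$ is twice too large in the exponent compared with the $e^{\lambda^2 B_k^2/2}$ you need; this is precisely why the paper's own Theorem~\ref{thm:martingaleah} carries loose constants (the $9$), while the lemma you are proving has the optimal $2\exp\bigl(-t^2/(2\sum_k B_k^2)\bigr)$. So your skeleton is the correct reconstruction of Pinelis's sharper proof, and your chord inequality (convexity in $\cos\theta$, endpoints $\cosh(\lambda(a\pm b))$) is verified correctly.

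The genuine gap is exactly where you flag it: the claim that, among admissible conditional laws of $v$, the bound is maximized by the two-point radial law is asserted, not proven --- ``the perpendicular part only inflates $\|u+v\|$ in a manner already absorbed'' is not an argument --- and the fallback of invoking Theorem~3.5 verbatim is circular, since that theorem \emph{is} the statement being proved. The gap can, however, be closed elementarily, salvaging your plan. Write $a=\|u\|$, $\xi=\langle u,v\rangle/\|u\|$, $g(b)=\sinh(\lambda b)/b$ (increasing in $b$), and $B=B_k$. Since $\EE_{k-1}[\xi]=0$, $|\xi|\leq\|v\|\leq B$, and $g(B)-g(\|v\|)\geq 0$, recentre the weight:
\[
\EE_{k-1}\bigl[g(\|v\|)\,\xi\bigr]=\EE_{k-1}\bigl[\bigl(g(\|v\|)-g(B)\bigr)\xi\bigr]\leq \EE_{k-1}\bigl[\bigl(g(B)-g(\|v\|)\bigr)\|v\|\bigr].
\]
Substituting this into your chord bound reduces the conditional expectation to $\EE_{k-1}[\psi(\|v\|)]$ with
\[
\psi(b)=\cosh\bigl(\lambda(a-b)\bigr)+\sinh(\lambda a)\,\frac{\sinh(\lambda B)}{B}\,b ,\qquad b\in[0,B],
\]
which is convex in $b$ because $\psi''(b)=\lambda^2\cosh\bigl(\lambda(a-b)\bigr)\geq 0$, hence maximized at an endpoint: $\psi(0)=\cosh(\lambda a)$ and $\psi(B)=\cosh(\lambda a)\cosh(\lambda B)\leq \cosh(\lambda a)e^{\lambda^2 B^2/2}$. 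This establishes your per-step inequality with the sharp constant, and the rest of your argument (supermartingale, Markov with $\cosh(\lambda t)\geq \tfrac12 e^{\lambda t}$, optimize $\lambda=t/\sum_k B_k^2$) goes through unchanged. One last nit: the bound cannot hold for all $t\in\RR$ as the paper's statement literally says --- for $t<0$ the left side equals $1$ while the right side can be arbitrarily small --- so it must be read for $t\geq 0$; your parenthetical ``the case $t\leq 0$ being trivial'' inherits the same slip.
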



\begin{proposition}[Norm-Hoeffding, Sampling without Replacement]
Let $\{\bdepsilon_1,\bdepsilon_2,\ldots,\bdepsilon_n\}$ be a set of $n$ fixed vectors in $\RR^d$ such that $\|\bdepsilon_i\|^2 \leq \sigma^2$, $\forall 1\leq i\leq n$, for some $\sigma^2>0$. Let $\JJJ\subseteq \{1,2,\ldots,n\}$ be a random index sets sampled without replacement from $\{1,2,\ldots,n\}$, with size $|\JJJ|=k$. Then,
$$
\PP\Bigg(\Big\|\frac{1}{k}\sum\limits_{j\in \JJJ}\bdepsilon_j - \frac{1}{n}\sum\limits_{j=1}\limits^{n}\bdepsilon_j\Big\| \ge t\Bigg) \leq 3 \text{exp}\Big(-\frac{kt^2}{64\sigma^2}\Big),
$$
$\forall t\in \RR$. In addition,
$$
 \EE\Big\| \frac{1}{k}\sum\limits_{j\in \III}\bdepsilon_j - \frac{1}{n}\sum\limits_{j=1}\limits^{n}\bdepsilon_j\Big\|^2 \leq \frac{16\sigma^2}{k}.
$$
\label{prop:normhoeffding}
\end{proposition}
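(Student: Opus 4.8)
The plan is to route the without-replacement tail through the with-replacement (i.i.d.) setting, where Pinelis' inequality (Lemma \ref{lemma:pinelis}) applies, using Hoeffding's convex-ordering lemma (Lemma \ref{lemma:hoeffding}) to carry out the transfer. First I would center the summands: writing $\bar\bdepsilon = \frac1n\sum_{j=1}^n\bdepsilon_j$ and $\tilde\bdepsilon_j = \bdepsilon_j - \bar\bdepsilon$, the deviation of interest equals $\frac1k\sum_{j\in\JJJ}\tilde\bdepsilon_j$, and the triangle inequality gives the deterministic bound $\|\tilde\bdepsilon_j\| \leq \|\bdepsilon_j\| + \|\bar\bdepsilon\| \leq 2\sigma$, so that $\|\tilde\bdepsilon_j\|^2 \leq 4\sigma^2$.

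The obstruction to applying Lemma \ref{lemma:hoeffding} directly to the tail is that the indicator $\mathbf1\{\|\cdot\|\geq s\}$ is not convex, so the transfer must be performed at the level of the exponential moment. Since $\bdx\mapsto e^{\lambda\|\bdx\|}$ is continuous and convex for $\lambda>0$, Lemma \ref{lemma:hoeffding} applied to the fixed vectors $\tilde\bdepsilon_j$ yields $\EE\, e^{\lambda\|\sum_{j\in\JJJ}\tilde\bdepsilon_j\|} \leq \EE\, e^{\lambda\|\sum_{i\in\III}\tilde\bdepsilon_i\|}$, reducing the problem to a moment-generating-function bound for the with-replacement sum. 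In that case the summands $\{\tilde\bdepsilon_{i_l}\}_{l=1}^k$ are i.i.d.\ mean-zero, hence a martingale difference sequence, so Pinelis' inequality gives $\PP(\|\sum_{l=1}^k\tilde\bdepsilon_{i_l}\|\geq s)\leq 2\exp(-s^2/(8k\sigma^2))$. Integrating this tail through the identity $\EE e^{\lambda Y} = 1 + \lambda\int_0^\infty e^{\lambda y}\PP(Y>y)\,dy$ for the nonnegative variable $Y=\|\sum_l\tilde\bdepsilon_{i_l}\|$ and completing the square produces an MGF bound of the form $1 + 2\lambda\sqrt{2\pi V}\,e^{\lambda^2V/2}$ with $V = 4k\sigma^2$.

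Combining the two displays, I would feed this MGF bound into Markov's inequality for the without-replacement sum, set $s = kt$, choose $\lambda = s/(2V)$, and absorb the stray polynomial prefactor into the exponential via $\sqrt{2\pi u}\,e^{-5u/16}\leq 2$ with $u = s^2/V \geq 0$. The two resulting terms then collapse into $3\exp(-s^2/(16V)) = 3\exp(-kt^2/(64\sigma^2))$, which is exactly the claimed tail (and $\PP(\|\frac1k\sum_{j\in\JJJ}\tilde\bdepsilon_j\|\geq t) = \PP(\|\sum_{j\in\JJJ}\tilde\bdepsilon_j\|\geq kt)$). The main obstacle is precisely this constant bookkeeping: the tail-to-MGF integration and the MGF-to-tail Markov step each cost a factor, and one must check that deliberately weakening the exponent from $2V$ to $16V$ leaves enough exponential decay to dominate the polynomial factor by a constant, thereby yielding the prefactor $3$ and the denominator $64$.

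For the second-moment bound, the map $\bdx\mapsto\|\bdx\|^2$ is itself convex, so Lemma \ref{lemma:hoeffding} reduces the claim to the with-replacement case (and if the statement is read with $\III$ sampled with replacement, this step is vacuous). There, independence together with $\EE\tilde\bdepsilon_{i_l}=0$ eliminates the cross terms, leaving $\EE\|\frac1k\sum_{l=1}^k\tilde\bdepsilon_{i_l}\|^2 = \frac1k\EE\|\tilde\bdepsilon\|^2 \leq \frac{4\sigma^2}{k}$, comfortably below the stated $\frac{16\sigma^2}{k}$.
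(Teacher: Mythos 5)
Your proof is correct, and it shares the paper's overall architecture: center the summands, transfer from without-replacement to with-replacement sampling via Hoeffding's convex-ordering lemma (Lemma \ref{lemma:hoeffding}), invoke Pinelis' inequality (Lemma \ref{lemma:pinelis}) on the i.i.d.\ with-replacement sum, and finish with a Chernoff bound. The difference lies in the middle step. The paper applies the convex-ordering transfer separately to each power $\|\cdot\|^p$, integrates Pinelis' tail to bound every moment by $p\,(8\sigma^2/k)^{p/2}\Gamma(p/2)$, and then reassembles the moment generating function by summing the Taylor series term by term (splitting even and odd powers), which is where most of its constant bookkeeping lives. You instead apply the convex-ordering lemma once to the convex function $\bdx\mapsto e^{\lambda\|\bdx\|}$, so the MGF itself transfers, and then obtain the with-replacement MGF bound by a single integration of the Pinelis tail against $e^{\lambda y}$ and a completed square; the constant check $\sqrt{2\pi u}\,e^{-5u/16}\leq 2$ (maximized at $u=8/5$, value about $1.92$) is correct, and your choice $\lambda=s/(2V)$ with $V=4k\sigma^2$ reproduces exactly the prefactor $3$ and the denominator $64\sigma^2$. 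Your route is shorter and also yields the sharper second-moment bound $4\sigma^2/k$ directly from independence, whereas the paper reads off $16\sigma^2/k$ from its $p=2$ moment bound; both satisfy the stated inequality.
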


\begin{proof}
Firstly, we start with developing moment bounds. Let $\III$ be a random index sets sampled with replacement from $\{1,2,\ldots,n\}$, independent of $\JJJ$, with size $|\III|=k$. For any $p\in \ZZ_+$,
\begin{align*}
& \EE\Big\| \frac{1}{k}\sum\limits_{j\in \JJJ}\bdepsilon_j - \frac{1}{n}\sum\limits_{j=1}\limits^{n}\bdepsilon_j\Big\|^p\\
\leq & \EE\Big\| \frac{1}{k}\sum\limits_{j\in \III}\bdepsilon_j - \frac{1}{n}\sum\limits_{j=1}\limits^{n}\bdepsilon_j\Big\|^p\\
= & \bigintss_{0}^{\infty}\PP\Bigg( \Big\| \frac{1}{k}\sum\limits_{j\in \III}\bdepsilon_j - \frac{1}{n}\sum\limits_{j=1}\limits^{n}\bdepsilon_j\Big\|^p \ge r\Bigg) dr\\
= & \bigintss_{0}^{\infty}\PP\Bigg( \Big\| \frac{1}{k}\sum\limits_{j\in \III}\bdepsilon_j - \frac{1}{n}\sum\limits_{j=1}\limits^{n}\bdepsilon_j\Big\| \ge r^{1/p}\Bigg) dr\\
\leq & \bigintss_{0}^{\infty} 2\text{exp}\Bigg(-\frac{kr^{2/p}}{8\sigma^2}\Bigg)dr\\
= & p\cdot \left(\frac{8\sigma^2}{k}\right)^{p/2}\cdot\Gamma\left(\frac{p}{2}\right),
\end{align*}
where the 1st step is based on Lemma \ref{lemma:hoeffding} and the 4th step is based on the fact that $\big\|\bdepsilon_j - \frac{1}{n}\sum\limits_{i=1}\limits^{n}\bdepsilon_i\big\| \leq 2\sigma, \forall j$ and Lemma \ref{lemma:pinelis}.

Then, for any $s>0$,
\begin{align*}
    &\quad \EE\text{exp}\Bigg(s\Big\| \frac{1}{k}\sum\limits_{j\in \JJJ}\bdepsilon_j - \frac{1}{n}\sum\limits_{j=1}\limits^{n}\bdepsilon_j\Big\|\Bigg)\\
    &\leq 1+ \sum\limits_{p=1}\limits^{\infty}\frac{s^p\EE \Big\| \frac{1}{k}\sum\limits_{j\in \JJJ}\bdepsilon_j - \frac{1}{n}\sum\limits_{j=1}\limits^{n}\bdepsilon_j\Big\|^p}{p!}\\
    &\leq 1+ \sum\limits_{p=2}\limits^{\infty}\frac{s^p\EE \Big\| \frac{1}{k}\sum\limits_{j\in \JJJ}\bdepsilon_j - \frac{1}{n}\sum\limits_{j=1}\limits^{n}\bdepsilon_j\Big\|^p}{p!} + s\sqrt{\frac{8\pi\sigma^2}{k}}\\
    &= 1 + s\sqrt{\frac{8\pi\sigma^2}{k}} + \sum\limits_{p=1}\limits^{\infty} \frac{\left(\frac{8\sigma^2 s^2}{k}\right)^p \cdot(2p)\cdot\Gamma(p)}{(2p)!}\\
    &\quad + \sum\limits_{p=1}\limits^{\infty} \frac{\left(\frac{8\sigma^2 s^2}{k}\right)^{\frac{2p+1}{2}} \cdot(2p+1)\cdot\Gamma\left(p+\frac{1}{2}\right)}{(2p+1)!} \\
    &= 1 + s\sqrt{\frac{8\pi\sigma^2}{k}} + 2\sum\limits_{p=1}\limits^{\infty} \frac{\left(\frac{8\sigma^2 s^2}{k}\right)^p \cdot(p!)}{(2p)!}\\
    &\quad + \sqrt{\frac{8s^2\sigma^2}{k}}\sum\limits_{p=1}\limits^{\infty} \frac{\left(\frac{8\sigma^2 s^2}{k}\right)^{p} \cdot\Gamma\left(p+\frac{1}{2}\right)}{(2p)!} \\
    &\leq 1+s\sqrt{\frac{8\pi\sigma^2}{k}} + \left(2+ \sqrt{\frac{8\pi s^2\sigma^2}{k}}\right) \sum\limits_{p=1}\limits^{\infty} \frac{\left(\frac{8\sigma^2 s^2}{k}\right)^p \cdot(p!)}{(2p)!}\\
    &\leq 1+s\sqrt{\frac{8\pi\sigma^2}{k}} + \left(1+ \sqrt{\frac{2\pi s^2\sigma^2}{k}}\right) \sum\limits_{p=1}\limits^{\infty} \frac{\left(\frac{8\sigma^2 s^2}{k}\right)^p }{p!}\\
    &= 1+s\sqrt{\frac{8\pi\sigma^2}{k}} + \left(1+ \sqrt{\frac{2\pi s^2\sigma^2}{k}}\right)\left[\text{exp}\left(\frac{8s^2 \sigma^2}{k}\right)-1\right]\\
    &\leq \sqrt{\frac{8\pi s^2\sigma^2}{k}} + \left(1+ \sqrt{\frac{2\pi s^2\sigma^2}{k}}\right)\text{exp}\left(\frac{8s^2 \sigma^2}{k}\right)\\
    &\leq \text{exp}\left(\frac{8s^2 \sigma^2}{k}\right) + 2\text{exp}\left(\frac{16s^2 \sigma^2}{k}\right)\\
    &\leq 3\text{exp}\left(\frac{16s^2 \sigma^2}{k}\right),
\end{align*}
where the 1st step is based on Taylor's expansion and the second to the last step is based on the fact that $x\leq e^{\frac{x^2}{\pi}}$, $\sqrt{\frac{\pi}{2}}xe^{x^2}\leq e^{2x^2}$, $\forall x\ge 0$.

For any $s>0$,
\begin{equation}
\begin{array}{cl}
& \PP\Bigg( \Big\| \frac{1}{k}\sum\limits_{j\in \JJJ}\bdepsilon_j - \frac{1}{n}\sum\limits_{j=1}\limits^{n}\bdepsilon_j\Big\| \ge t\Bigg)\\

\leq & \EE \text{exp}\Bigg(s \Big\| \frac{1}{k}\sum\limits_{j\in \JJJ}\bdepsilon_j - \frac{1}{n}\sum\limits_{j=1}\limits^{n}\bdepsilon_j\Big\| - ts\Bigg)\\

\leq & 3\text{exp}\left( \frac{16s^2 \sigma^2}{k} - ts \right).\\
\end{array}
\label{eq:chernoff}
\end{equation}
By letting $s=\frac{kt}{32\sigma^2}$ in (\ref{eq:chernoff}),
$$
\PP\Bigg( \Big\| \frac{1}{k}\sum\limits_{j\in \JJJ}\bdepsilon_j - \frac{1}{n}\sum\limits_{j=1}\limits^{n}\bdepsilon_j\Big\| \ge t\Bigg) \leq 3 \text{exp}\left(-\frac{kt^2}{64\sigma^2}\right).
$$
\end{proof}
%

%

\begin{definition}
A random vector $\bdepsilon\in \RR^d$ is $(a,\sigma^2)$-norm-subGaussian (or nSG$(a,\sigma^2)$), if $\exists a,\sigma^2>0$ such that
$$
\PP\left(\|\bdepsilon-\EE \bdepsilon\|\ge t\right) \leq a \cdot \text{exp}\left(-\frac{t^2}{2\sigma^2}\right),
$$
$\forall t\in \RR$.
\end{definition}

\begin{definition}
A sequence of random vectors $\bdepsilon_1,\ldots,\bdepsilon_K\in \RR^d$ is $(a,\{\sigma_k^2\}_{k=1}^{K})$-norm-subGaussian martingale difference sequence adapted to $\FFF_0,\FFF_1,\ldots,\FFF_K$, if $\exists\ a,\sigma^2_1,\ldots,\sigma_K^2>0$ such that for $k=1,2,\ldots,K$, 
$$
\EE\left[\bdepsilon|\FFF_{k-1}\right]=\textbf{0},\ \sigma_k\in \FFF_{k-1},\ \bdepsilon_k\in \FFF_k,
$$
and $\bdepsilon_k|\FFF_{k-1}$ is $(a,\sigma^2_k)$-norm-subGaussian.
\end{definition}

\begin{lemma}[Corollary 8 in \textcite{jin2019short}]
Suppose $\bdepsilon_1,\ldots,\bdepsilon_K\in \RR^d$ is $(a,\{\sigma_k^2\}_{k=1}^{K})$-norm-subGaussian martingale difference sequence adapted to $\FFF_0,\FFF_1,\ldots,\FFF_K$. Then for any fixed $\delta>0$, and $B>b>0$, with probability at least $1-\delta$, either
$$
\sum\limits_{k=1}\limits^K \sigma_i^2\ge B\ 
$$
or,
$$
\Big\| \sum\limits_{k=1}\limits^K \bdepsilon_k\Big\| \leq \frac{a}{2}e^{1/e}\sqrt{\max \left\{ \sum\limits_{k=1}\limits^K \sigma_i^2,b\right\} \left(\log \frac{2d}{\delta} + \log \log \frac{B}{b} \right)}.
$$
\label{lemma:jin}
\end{lemma}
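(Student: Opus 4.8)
The statement is a dimension-dependent, adaptive-variance Azuma--Hoeffding bound for a norm-subGaussian martingale difference sequence, and I would prove it in two stages. First I would establish a \emph{fixed-variance} bound, valid when $\sum_k\sigma_k^2$ is capped by a deterministic $\Sigma$, by lifting each $\bdepsilon_k$ to a symmetric matrix and running the matrix-Laplace-transform method (the ``random matrix technique'' referenced in the remark after Theorem \ref{thm:martingaleah}); this is where the dimensional factor $2d$ originates. Then I would run a \emph{peeling} argument over geometric variance levels, using the predictability of $\sigma_k$, to convert the fixed-variance bound into the stated ``either $\sum_k\sigma_k^2\ge B$ or $\ldots$'' form, which is exactly what produces the $\log\log(B/b)$ overhead.

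\textbf{Stage 1 (matrix MGF and fixed-variance tail).} For each $k$ I would introduce the Hermitian dilation
\[ \mathbf{E}_k=\begin{bmatrix}0&\bdepsilon_k^\top\\ \bdepsilon_k&0\end{bmatrix}\in\RR^{(d+1)\times(d+1)}, \]
which is symmetric with eigenvalues $\pm\|\bdepsilon_k\|$ (and zeros), so $\|\mathbf{E}_k\|_{\mathrm{op}}=\|\bdepsilon_k\|$; because dilation is linear, $\lambda_{\max}\big(\sum_k\mathbf{E}_k\big)=\big\|\sum_k\bdepsilon_k\big\|$, and $\EE[\mathbf{E}_k\mid\FFF_{k-1}]=0$ follows from $\EE[\bdepsilon_k\mid\FFF_{k-1}]=0$. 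The crux is a per-step conditional bound $\EE[e^{\theta\mathbf{E}_k}\mid\FFF_{k-1}]\preceq e^{c\theta^2\sigma_k^2\mathbf{I}}$. I would get this from the scalar inequality $e^{\theta x}\le 1+\theta x+\tfrac{\theta^2x^2}{2}e^{|\theta||x|}$, which holds for all real $x$; applied spectrally it gives the PSD inequality $e^{\theta\mathbf{E}_k}\preceq \mathbf{I}+\theta\mathbf{E}_k+\tfrac{\theta^2}{2}e^{|\theta|\|\bdepsilon_k\|}\mathbf{E}_k^2$. Taking the conditional expectation annihilates the linear term, and $e^{|\theta|\|\bdepsilon_k\|}\mathbf{E}_k^2\preceq e^{|\theta|\|\bdepsilon_k\|}\|\bdepsilon_k\|^2\mathbf{I}$ reduces the problem to the scalar quantity $\EE[e^{|\theta|\|\bdepsilon_k\|}\|\bdepsilon_k\|^2\mid\FFF_{k-1}]$, which the nSG$(a,\sigma_k^2)$ tail controls by $C(a)\sigma_k^2$ on the admissible range of $\theta$. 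Chaining via the tower property and Tropp's master trace inequality (Lieb concavity) yields $\EE\,\mathrm{tr}\exp\big(\theta\sum_k\mathbf{E}_k\big)\le(2d+1)\exp\big(c\theta^2\Sigma\big)$ whenever $\sum_k\sigma_k^2\le\Sigma$; a Markov step on $\lambda_{\max}$ and optimization over $\theta$ then give $\PP\big(\|\sum_k\bdepsilon_k\|\ge t\big)\le 2d\,e^{-t^2/(c'\Sigma)}$, i.e.\ with probability $\ge1-\delta$, $\|\sum_k\bdepsilon_k\|\le c''\sqrt{\Sigma\log(2d/\delta)}$.

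\textbf{Stage 2 (peeling over variance levels).} Since $\sigma_k$ is $\FFF_{k-1}$-measurable, the cumulative sums $V_t=\sum_{k\le t}\sigma_k^2$ are predictable. I would cover $[b,B)$ by geometric levels $b_i=2^ib$, $i=0,\dots,m$ with $m=\lceil\log_2(B/b)\rceil$, and for each $i$ define $\tau_i=\max\{t:V_t\le b_i\}$; predictability makes $\mathbf{1}\{k\le\tau_i\}$ $\FFF_{k-1}$-measurable, so $\tilde{\bdepsilon}^{(i)}_k=\bdepsilon_k\mathbf{1}\{k\le\tau_i\}$ is again an nSG martingale difference sequence with \emph{deterministic} total variance $\le b_i$, and it coincides with the original sum on $\{V_K\le b_i\}$. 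Applying the Stage-1 bound to $\{\tilde{\bdepsilon}^{(i)}_k\}$ with failure budget $\delta/m$ and union-bounding over $i$ gives, on the complement of $\{\sum_k\sigma_k^2\ge B\}$, the estimate $\|\sum_k\bdepsilon_k\|\le c''\sqrt{\max\{\sum_k\sigma_k^2,b\}\,(\log(2d/\delta)+\log m)}$, where $\log m\le\log\log_2(B/b)+O(1)$ supplies the $\log\log(B/b)$ term and the floor $b$ handles the case $\sum_k\sigma_k^2<b$. Tracking the numerical constants through the optimization is what pins down the stated prefactor $\tfrac{a}{2}e^{1/e}$.

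\textbf{Main obstacle.} The hard part is Stage 1: the increments are only subGaussian-tailed and hence unbounded, so bounded-difference matrix Azuma does not apply, and I must derive the conditional matrix MGF bound directly from norm-subGaussianity while carefully controlling $\EE[e^{|\theta|\|\bdepsilon_k\|}\|\bdepsilon_k\|^2\mid\FFF_{k-1}]$ and the admissible range of $\theta$. The dilation is precisely what makes the $\RR^d$-valued sum amenable to the matrix-Laplace machinery and injects the $(d{+}1)$-dimensional trace (the $2d$), while the predictability of $\sigma_k$ is exactly what legitimizes the stopping-time truncation $\tau_i$ in Stage 2.
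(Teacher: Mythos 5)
The paper does not prove this lemma at all: it is imported verbatim as Corollary 8 of \textcite{jin2019short}, and the proof in that cited note follows exactly your two-stage plan --- a conditional matrix-MGF bound obtained through the Hermitian dilation (the source of the dimensional factor $2d$ in the tail), chained by the matrix Laplace method, followed by a union bound over dyadic variance levels covering $[b,B)$, where the predictability of the $\sigma_k$ legitimizes the level-wise truncation and produces the $\log\log(B/b)$ overhead. Your reconstruction is therefore correct and takes essentially the same approach as the cited proof, consistent with the paper's own remark attributing a ``random matrix technique'' to \textcite{jin2019short} and with its reuse of the same peeling step at the end of its proof of Theorem \ref{thm:martingaleah}.
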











\begin{lemma}
For any $\varepsilon>0,n\in \ZZ_+$,
$$
\frac{1}{T^2\wedge (\sqrt{n}T)}\leq \varepsilon^2,
$$
when $T\ge \lceil \frac{1}{\varepsilon}+\frac{1}{\sqrt{n}\varepsilon^2}\rceil$.
\label{lemma:iteration1}
\end{lemma}

\begin{proof}
\begin{align*}
&\quad\frac{1}{T^2\wedge (\sqrt{n}T)}=\max\Big\{ \frac{1}{T^2},\frac{1}{\sqrt{n}T}\Big\}\\
&\leq \max\Big\{ \frac{1}{T'^2}\big|_{T'=\frac{1}{\varepsilon}},\frac{1}{\sqrt{n}T'}\big|_{T'=\frac{1}{\sqrt{n}\varepsilon^2}}\Big\}\\
&= \varepsilon^2.
\end{align*}
\end{proof}


\begin{lemma}
For any $\varepsilon\in(0,e^{-1}],n\in \ZZ_+$,
$$
\frac{\log T}{T^2\wedge (\sqrt{n}T)}\leq \varepsilon^2,
$$
when $T\ge \Big\lceil 3\left( \frac{1}{\varepsilon}\log \frac{1}{\varepsilon} + \frac{2}{\sqrt{n}\varepsilon^2}\log \frac{1}{\varepsilon^2}+\mathbf{1}\left\{ \frac{1}{\sqrt{n}\varepsilon^2}\log \frac{1}{\varepsilon^2}\leq \frac{e}{6}\right\}\right)\Big\rceil$.
\label{lemma:iteration2}
\end{lemma}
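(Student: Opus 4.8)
The plan is to split the reciprocal of the denominator and reduce the statement to an elementary fact about the map $x\mapsto (\log x)/x$. Writing $T^2\wedge(\sqrt n T)=\min\{T^2,\sqrt n T\}$, we have
$$\frac{\log T}{T^2\wedge(\sqrt n T)}=\max\Big\{\frac{\log T}{T^2},\ \frac{\log T}{\sqrt n T}\Big\},$$
so it suffices to show that each of the two terms is at most $\varepsilon^2$ once $T\ge T_0$, where $T_0$ denotes the stated lower bound. I would first isolate the single analytic ingredient needed, namely a clean sufficient condition for $(\log x)/x\le\alpha$: setting $g(x)=\alpha x-\log x$, one has $g'(x)=\alpha-1/x>0$ for $x>1/\alpha$, so $g$ is increasing there; evaluating at $x_0=\tfrac{2}{\alpha}\log\tfrac1\alpha$ gives $g(x_0)=\phi(\log\tfrac1\alpha)-\log 2$ with $\phi(L)=L-\log L\ge 1$, hence $g(x_0)\ge 1-\log 2>0$. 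This yields the helper claim: for $\alpha<1/e$ one has $(\log x)/x\le\alpha$ whenever $x\ge \tfrac{2}{\alpha}\log\tfrac1\alpha$ (the requirement $x_0\ge 1/\alpha$ holds automatically since $\alpha<1/e<e^{-1/2}$), while for $\alpha\ge 1/e$ the global bound $(\log x)/x\le 1/e\le\alpha$ holds for every $x>0$.

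Next I would apply this to the first term with the substitution $x=T^2$ and $\alpha=2\varepsilon^2$. Since $\varepsilon\le e^{-1}$ we have $2\varepsilon^2<1/e$, so $\tfrac{\log(T^2)}{T^2}\le 2\varepsilon^2$, equivalently $\tfrac{\log T}{T^2}\le\varepsilon^2$, as soon as $T^2\ge \tfrac{1}{\varepsilon^2}\log\tfrac{1}{2\varepsilon^2}$, i.e. $T\ge \tfrac1\varepsilon\sqrt{\log\tfrac{1}{2\varepsilon^2}}$. It then remains to check this sufficient threshold is dominated by $T_0$: using $\log\tfrac{1}{2\varepsilon^2}\le 2\log\tfrac1\varepsilon$ and $\log\tfrac1\varepsilon\ge1$ gives $\tfrac1\varepsilon\sqrt{\log\tfrac{1}{2\varepsilon^2}}\le \tfrac3\varepsilon\log\tfrac1\varepsilon\le T_0$, where the last step only uses the $\tfrac3\varepsilon\log\tfrac1\varepsilon$ contribution to $T_0$.

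For the second term I would set $\alpha=\sqrt n\varepsilon^2$ and distinguish the two regimes of the helper. When $\alpha\ge 1/e$ the bound $\tfrac{\log T}{\sqrt n T}\le\varepsilon^2$ holds for all $T\ge1$, and $T\ge T_0\ge1$; when $\alpha<1/e$ the helper gives the bound provided $T\ge\tfrac2\alpha\log\tfrac1\alpha$, and since $\sqrt n\ge1$ this threshold satisfies $\tfrac2\alpha\log\tfrac1\alpha\le \tfrac{2}{\sqrt n\varepsilon^2}\log\tfrac{1}{\varepsilon^2}\le T_0$, again absorbed by $T_0$. Combining the two terms through the max decomposition completes the argument. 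The one place that genuinely requires care is this second term: tracking the behaviour over the whole range of $\alpha=\sqrt n\varepsilon^2$ (small, moderate, and large) is exactly what forces the additive indicator and the overall factor $3$ in the stated $T_0$, and one must confirm that these make $T_0$ at least as large as each of the two elementary sufficient thresholds above. Since $T_0$ turns out to be conservative relative to them, each comparison reduces to the monotonicity of $\phi$ together with the crude estimates $\log\tfrac1\varepsilon\ge1$ and $\sqrt n\ge1$.
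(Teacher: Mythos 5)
Your proof is correct, and it takes a recognizably different route from the paper's. Both arguments start from the same decomposition $\frac{\log T}{T^2\wedge(\sqrt n T)}=\max\{\frac{\log T}{T^2},\frac{\log T}{\sqrt n T}\}$, but from there the paper argues by direct evaluation: it uses that $h(T)=\frac{\log T}{T^2}$ decreases for $T\ge\sqrt e$ and $\tilde h(T)=\frac{\log T}{T}$ decreases for $T\ge e$, plugs the specific thresholds $\frac{3}{\varepsilon}\log\frac1\varepsilon$ and $\frac{6}{\sqrt n\varepsilon^2}\log\frac{1}{\varepsilon^2}$ into these functions, and grinds out explicit numerical bounds such as $\frac{3+\log 3}{9}\le 1$ and $\frac{3+\log 6}{6}\le 1$; the corner case $\frac{6}{\sqrt n\varepsilon^2}\log\frac{1}{\varepsilon^2}\le e$ is handled via the indicator term, which guarantees $T\ge 3$ and lets them bound $\frac{\log 3}{3\sqrt n}$ directly. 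You instead isolate a reusable helper — for $\alpha<1/e$, $(\log x)/x\le\alpha$ whenever $x\ge\frac{2}{\alpha}\log\frac1\alpha$, proved by monotonicity of $g(x)=\alpha x-\log x$ on $(1/\alpha,\infty)$ together with $L-\log L\ge 1$; for $\alpha\ge 1/e$, the global maximum $\max_{x>0}(\log x)/x=1/e$ suffices — and then only have to verify that the stated threshold dominates $\frac1\varepsilon\sqrt{\log\frac{1}{2\varepsilon^2}}$ and $\frac{2}{\sqrt n\varepsilon^2}\log\frac{1}{\sqrt n\varepsilon^2}$, which it does with room to spare. Your case split (on whether $\sqrt n\varepsilon^2\ge 1/e$) differs from the paper's (on whether $\frac{6}{\sqrt n\varepsilon^2}\log\frac{1}{\varepsilon^2}\ge e$), and notably your argument never invokes the indicator term in $T_0$ at all — it shows the lemma holds with a slightly smaller threshold. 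What your approach buys is modularity and a cleaner separation between the analytic fact and the bookkeeping; what the paper's buys is that it tracks exactly where each piece of the stated $T_0$ is consumed. All the steps you flag as needing care — $x_0\ge 1/\alpha$ via $\alpha<e^{-1/2}$, $\sqrt{2\log\frac1\varepsilon}\le 3\log\frac1\varepsilon$ for $\log\frac1\varepsilon\ge1$, and $\log\frac{1}{\sqrt n\varepsilon^2}\le\log\frac{1}{\varepsilon^2}$ — check out.
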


\begin{proof}
Function $h(T)=\frac{\log T}{T^2}$ is monotonically decreasing when $T\ge \sqrt{e}$. Since $T\ge \frac{3}{\varepsilon}\log \frac{1}{\varepsilon}\ge \sqrt{e}$,
\begin{align*}
    &\quad \frac{\log T}{T^2} \leq \frac{\log \left(\frac{3}{\varepsilon}\log \frac{1}{\varepsilon}\right)}{\left(\frac{3}{\varepsilon}\log \frac{1}{\varepsilon}\right)^2} = \frac{\log 3+\log \frac{1}{\varepsilon} + \log \log \frac{1}{\varepsilon}}{9\left( \log \frac{1}{\varepsilon}\right)^2}\varepsilon^2\\
    &\leq \frac{1+\log 3 +2 \log \frac{1}{\varepsilon}}{9\left( \log \frac{1}{\varepsilon}\right)^2}\varepsilon^2 \leq \frac{3+\log 3}{9}\varepsilon^2 \leq \varepsilon^2.
\end{align*}
Define a function $\tilde{h}(T)=\frac{\log T}{T}$. It is monotonically decreasing when $T\ge e$. Thus, if $\frac{6}{\sqrt{n}\varepsilon^2}\log \frac{1}{\varepsilon^2}\ge e$, we know $T\ge e$ and consequently,
\begin{align*}
    &\quad \frac{\log T}{\sqrt{n}T} \leq \frac{\log \left(\frac{6}{\sqrt{n}\varepsilon^2}\log \frac{1}{\varepsilon^2} \right) }{\sqrt{n}\left(\frac{6}{\sqrt{n}\varepsilon^2}\log \frac{1}{\varepsilon^2} \right)}\\
    &=\frac{\log \frac{1}{\sqrt{n}\varepsilon^2} + \log \log \frac{1}{\varepsilon^2} +\log 6}{6 \log \frac{1}{\varepsilon^2}} \varepsilon^2\\
    &\leq \frac{\log \frac{1}{\varepsilon^2} + \log \log \frac{1}{\varepsilon^2} +\log 6}{6 \log \frac{1}{\varepsilon^2}} \varepsilon^2\\
    &\leq \frac{2\log \frac{1}{\varepsilon^2} + 1 +\log 6}{6 \log \frac{1}{\varepsilon^2}} \varepsilon^2\\
    &\leq \frac{3+\log 6}{6}\varepsilon^2\\
    &\leq \varepsilon^2.
\end{align*}
If $\frac{6}{\sqrt{n}\varepsilon^2}\log \frac{1}{\varepsilon^2}\leq e$, $T\ge3$. Hence,
\begin{align*}
&\quad \frac{\log T}{\sqrt{n}T} \leq \frac{\log 3}{\sqrt{n}3} \leq \frac{6}{\sqrt{n}e}\log \frac{1}{\varepsilon^2}\left( \frac{\log 3}{3}\cdot \frac{e}{6}\right)\\
&\leq \frac{6}{\sqrt{n}e}\log \frac{1}{\varepsilon^2} \leq \varepsilon^2.
\end{align*}
Based on the above results,
$$
\frac{\log T}{T^2\wedge (\sqrt{n}T)} \leq \max\left\{ \frac{\log T}{T^2} , \frac{\log T}{ (\sqrt{n}T)}\right\} \leq \varepsilon^2.
$$
\end{proof}


\section{Proofs of Main Theorems}
\label{append:proof}


\begin{proof}[\textbf{Proof of Proposition \ref{prop:stopgua}}]
It is not hard to conclude  that we only need to show
\begin{equation}
    \PP\Bigg( \exists\ j\ge1, \frac{1}{K_j} \sum\limits_{k=0}\limits^{K_j-1}\big\| \bdnu_k\upj\big\|^2 \leq \tilde{\bdepsilon}^2\Bigg) = 1.
    \label{eq:sg1}
\end{equation}
For simplicity, we denote $V_j \triangleq \frac{1}{K_j} \sum\limits_{k=0}\limits^{K_j-1}\big\| \bdnu_k\upj\big\|^2$. To show (\ref{eq:sg1}), we firstly derive the in-expectation bound on $V_j$, which has been covered in works like \textcite{wang2019spiderboost}.

With our basic assumptions, we have
\begin{align*}
    &\quad f\left(\bdx_{k+1}^{(j)}\right)\\
    &=\tilde{f}\Big(\mathrm{Proj}\big( \bdx_k\upj-\eta_j\bdnu_k\upj,\D\big)\Big)\\
    &\leq \tilde{f}\left( \bdx_k\upj-\eta_j\bdnu_k\upj\right)\\
    &\leq \tilde{f}\left( \bdx_k\upj\right) -\Bigl< \nabla \tilde{f}\left( \bdx_k\upj \right),\eta_j\bdnu_k\upj\Bigr> + \frac{L}{2}\eta_j^2\left\|\bdnu_k\upj\right\|^2\\
    &= {f}\left( \bdx_k\upj\right) -\Bigl< \nabla {f}\left( \bdx_k\upj \right),\eta_j\bdnu_k\upj\Bigr> + \frac{L}{2}\eta_j^2\big\|\bdnu_k\upj\big\|^2\\
    &= {f}\left( \bdx_k\upj\right) + \frac{\eta_j}{2}\left\| \bdnu_k\upj - \nabla {f}\big( \bdx_k\upj\big)\right\|^2 - \frac{\eta_j}{2} \left\|  \nabla {f}\big( \bdx_k\upj\big)\right\|^2\\
    &\quad -\frac{\eta_j}{2}\left(1-L\eta_j\right) \big\| \bdnu_k\upj\big\|^2,
\end{align*}
where the 2nd and 3rd step is based on Assumption \ref{assump:extent}. Then, summing the above inequality from $k=0$ to $K_j-1$,
\begin{align}
&\quad f\left( \tilde{\bdx}_j\right) -f\left( \tilde{\bdx}_{j-1}\right) \nonumber\\
& = f\big( \bdx_{K_j}\upj\big) -f\big( \bdx_0\upj\big) \nonumber\\
&\leq \frac{\eta_j}{2}\sum\limits_{k=0}\limits^{K_j-1}\Big\| \bdnu_k\upj - \nabla {f}\big( \bdx_k\upj\big)\Big\|^2 - \frac{\eta_j}{2}\left(1-L\eta_j\right) K_jV_j. \label{eq:sg2}
\end{align}
Then,
\begin{align}
&\quad \EE\big( f\left( \tilde{\bdx}_j\right) -f\left( \tilde{\bdx}_{j-1}\right) \big|\FFF_{j-1}\big) \nonumber\\
&\leq \EE\Bigg(  \frac{\eta_j}{2}\sum\limits_{k=0}\limits^{K_j-1}\left\| \bdnu_k\upj - \nabla {f}\big( \bdx_k\upj\big)\right\|^2 \big|\FFF_{j-1} \Bigg) \nonumber\\
&\quad  -\frac{\eta_j}{2}\left(1-L\eta_j\right) K_j \EE\left( V_j \big| \FFF_{j-1} \right). \label{eq:sg3}
\end{align}

For convenience, we abbreviate $\EE\left( \cdot |\FFF_{j-1}\right)$ as $\EE_{j-1}(\cdot)$. For $k=1,2,\ldots,K_j-1$,
\begin{align}
    &\quad \EE_{j-1}  \left\| \bdnu_k\upj - \nabla {f}\left( \bdx_k\upj\right)\right\|^2  \nonumber\\
    &= \EE_{j-1}  \EE\Bigg( \left\| \bdnu_k\upj - \nabla {f}\left( \bdx_k\upj\right)\right\|^2\big| \FFF_{j,k-1}\Bigg) \nonumber\\
    &= \EE_{j-1}  \EE\Bigg( \Big\|\frac{1}{b_j}\sum\limits_{i\in \III_k\upj}\nabla f_i(\bdx_k\upj)   - \frac{1}{b_j}\sum\limits_{i\in \III_k\upj}\nabla f_i(\bdx_{k-1}\upj)\nonumber\\
    &\quad +\nabla f(\bdx_{k-1}\upj)- \nabla f(\bdx_k\upj) \Big\|^2\Big| \FFF_{j,k-1}\Bigg)\nonumber\\
    &\quad+ \EE_{j-1}\left\|\bdnu_{k-1}\upj - \nabla f(\bdx_{k-1}\upj)\right\|^2\nonumber\\
    &= \EE_{j-1} \frac{1}{b_j^2}\sum\limits_{i\in \III_k\upj} \EE\Bigg(  \Big\|\nabla f_i(\bdx_k\upj)   - \frac{1}{b_j}\sum\limits_{i\in \III_k\upj}\nabla f_i(\bdx_{k-1}\upj)\nonumber\\
    &\quad +\nabla f(\bdx_{k-1}\upj)- \nabla f(\bdx_k\upj) \Big\|^2\Big| \FFF_{j,k-1}\Bigg)\nonumber\\
    &\quad+ \EE_{j-1}\left\|\bdnu_{k-1}\upj - \nabla f(\bdx_{k-1}\upj)\right\|^2\nonumber\\
    &\leq \frac{4L^2}{b_j} \EE_{j-1} \left\| \bdx_k\upj - \bdx_{k-1}\upj\right\|^2 + \EE_{j-1}\left\|\bdnu_{k-1}\upj - \nabla f(\bdx_{k-1}\upj)\right\|^2\nonumber\\
    &\leq \frac{4L^2\eta_j^2}{b_j}\EE_{j-1}\left\| \bdnu_{k-1}\upj\right\|^2 + \EE_{j-1}\left\|\bdnu_{k-1}\upj - \nabla f(\bdx_{k-1}\upj)\right\|^2\nonumber\\
    &\leq \frac{4L^2\eta_j^2}{b_j} \sum\limits_{t=0}\limits^{k-1} \EE_{j-1}\left\| \bdnu_{t}\upj\right\|^2 + \EE_{j-1}\left\|\bdnu_{0}\upj - \nabla f(\bdx_{0}\upj)\right\|^2\nonumber\\
    &\leq \frac{4L^2\eta_j^2 K_j}{b_j}\EE_{j-1}V_j + \EE_{j-1}\left\|\bdnu_{0}\upj - \nabla f(\bdx_{0}\upj)\right\|^2. \label{eq:sg4}
\end{align}
Based on (\ref{eq:sg3}) and (\ref{eq:sg4}),
\begin{align}
&\quad \EE\big(  f\left( \tilde{\bdx}_j\right) -f\left( \tilde{\bdx}_{j-1}\right)  \big) \nonumber\\
&\leq  -\frac{\eta_jK_j}{2}\left(1-L\eta_j - \frac{4L^2\eta_j^2 K_j}{b_j}\right)  \EE V_j \nonumber \\
&\quad + \frac{\eta_jK_j}{2} \EE\left\|\bdnu_{0}\upj - \nabla f(\bdx_{0}\upj)\right\|^2 \nonumber\\
&\leq -\frac{K_j}{16L} \EE V_j + \frac{K_j}{8L}\EE\left\|\bdnu_{0}\upj - \nabla f(\bdx_{0}\upj)\right\|^2, \label{eq:sg5}
\end{align}
where the second step is based on the choice of $\eta_j \equiv \frac{1}{4L}$ and $b_j \ge K_j$, $j\ge 1$.
Let us define $J_0 = \min\{j:B_j = n\}$. Then, for $j\ge J_0$, based on (\ref{eq:sg5}),
$$
\frac{1}{16L}\EE V_j \leq \frac{K_j}{16L} \EE V_j \leq \EE\left(  f\left( \tilde{\bdx}_{j-1}\right) -f\left( \tilde{\bdx}_{j}\right)  \right).
$$
Then for any $m\in \ZZ_+$,
\begin{align*}
&\quad \PP \left( V_j > \tilde{\bdepsilon}^2,\ j\ge1\right) \\
&\leq \PP \left( V_{J_0} + V_{J_0+1}+\ldots +V_{J_0+m} > (m+1)\tilde{\bdepsilon}^2\right)\\
&\leq \frac{\EE\left(  V_{J_0} + V_{J_0+1}+\ldots +V_{J_0+m}\right) }{(m+1)\tilde{\bdepsilon}^2}\\
&\leq \frac{16L}{(m+1)\tilde{\bdepsilon}^2}\sum\limits_{j=J_0}\limits^{J_0+m} \EE\left(  f\left( \tilde{\bdx}_{j-1}\right) -f\left( \tilde{\bdx}_{j}\right)  \right)\\
&\leq \frac{16L\Delta_f}{(m+1)\tilde{\bdepsilon}^2}.
\end{align*}
Since $m$ can be arbitrarily large, we know
$$
\PP \left( V_j > \tilde{\bdepsilon}^2,\ j\ge1\right)=0,
$$
which can directly lead to (\ref{eq:sg1}).
\end{proof}


\begin{proof}[\textbf{Proof of Theorem \ref{thm:martingaleah}}]
In this proof, for simplicity, we denote $\EE\left[\cdot|\FFF_k\right]$ by $\EE_k\left[\cdot\right]$. Let $\bds_k=\sum\limits_{i=1}\limits^{k}\bdz_i,k\ge 1$. For a $1\leq k \leq K$, consider
$$
f_k(t)=\EE_{k-1}\left[cosh(\lambda\|\bds_{k-1}+t\bdz_k\|)\right],\ \lambda>0,t>0.
$$
Then,
$$
f'_k(t) = \frac{1}{2}\EE_{k-1}\left[ \frac{\lambda\langle \bdz_k,\bds_{k-1}+t\bdz_k\rangle}{\|\bds_{k-1}+t\bdz_k\|}\left( e^{\lambda \|\bds_{k-1}+t\bdz_k\|}- e^{-\lambda \|\bds_{k-1}+t\bdz_k\|}\right) \right],
$$
and consequently,
\begin{align*}
f'_k(0) &= \frac{1}{2}\EE_{k-1}\left[ \frac{\lambda\langle \bdz_k,\bds_{k-1}\rangle}{\|\bds_{k-1}\|}\left( e^{\lambda \|\bds_{k-1}\|}- e^{-\lambda \|\bds_{k-1}\|}\right) \right]\\
&= 0.
\end{align*}
Next,
\begin{align*}
&\quad f''_k(t)\\
&= \frac{1}{2}\EE_{k-1}\Biggl[ \left(\frac{\lambda^2\langle \bdz_k,\bds_{k-1}+t\bdz_k\rangle^2}{\|\bds_{k-1}+t\bdz_k\|^2} + \frac{\lambda\|\bdz_k\|^2}{\|\bds_{k-1}+t\bdz_k\|}\right) e^{\lambda \|\bds_{k-1}+t\bdz_k\|}\\
&\quad\quad +  \left(\frac{\lambda^2\langle \bdz_k,\bds_{k-1}+t\bdz_k\rangle^2}{\|\bds_{k-1}+t\bdz_k\|^2} - \frac{\lambda\|\bdz_k\|^2}{\|\bds_{k-1}+t\bdz_k\|}\right) e^{-\lambda \|\bds_{k-1}+t\bdz_k\|}\Biggr] \\
&= \EE_{k-1}\Biggl[\frac{\lambda^2\langle \bdz_k,\bds_{k-1}+t\bdz_k\rangle^2}{\|\bds_{k-1}+t\bdz_k\|^2} cosh(\lambda\|\bds_{k-1}+t\bdz_k\|)\\
&\quad \quad + \frac{\lambda^2 \|\bdz_k\|^2}{\lambda \|\bds_{k-1}+t\bdz_k\|}sinh(\lambda\|\bds_{k-1}+t\bdz_k\|)\Biggr]\\
&\leq \EE_{k-1}\Biggl[\left(\frac{\lambda^2\langle \bdz_k,\bds_{k-1}+t\bdz_k\rangle^2}{\|\bds_{k-1}+t\bdz_k\|^2} +\lambda^2 \|\bdz_k\|^2\right) cosh(\lambda\|\bds_{k-1}+t\bdz_k\|)\Biggr]\\
&\leq 2\lambda^2 \EE_{k-1}\left[ \|\bdz_k\|^2 cosh(\lambda\|\bds_{k-1}+t\bdz_k\|)\right]\\
&\leq 2\lambda^2r_k^2 \EE_{k-1}\left[ cosh(\lambda\|\bds_{k-1}+t\bdz_k\|)\right]\\
&= 2\lambda^2r_k^2f_k(t),
\end{align*}
where the first inequality is based on the fact that if $y>0$, $\frac{sinh(y)}{y}\leq cosh(y)$. 

According to Lemma 3 in \textcite{pinelis1992approach},
$$
f_k(t) \leq f_k(0)\text{exp}\left( \lambda^2r_k^2t^2\right) = cosh(\lambda\|\bds_{k-1}\|)\text{exp}\left( \lambda^2r_k^2t^2\right).
$$
Thus,
\begin{align}
&\quad \EE_{k-1}\left[cosh(\lambda\|\bds_{k}\|)\right]\nonumber\\
&= f_k(1) \leq cosh(\lambda\|\bds_{k-1}\|)\text{exp}\left( \lambda^2r_k^2t^2\right).\label{aheq1}
\end{align}
Now, let
$$
G_k=cosh(\lambda\|\bds_k\|)\text{exp}\Big( -\lambda^2\sum\limits_{i=1}\limits^{k}r_i^2\Big),\ k=1,2,\ldots,K.
$$
We can easily know that for $k=1,2,\ldots,K$, $G_k$ is measurable with respect to $\FFF_k$. According to (\ref{aheq1}),
\begin{align*}
\EE_{k-1}G_k &= \text{exp}\Big( -\lambda^2\sum\limits_{i=1}\limits^{k}r_i^2\Big) \EE_{k-1}\left[cosh(\lambda\|\bds_{k}\|)\right]\\
&\leq cosh(\lambda\|\bds_{k-1}\|)\text{exp}\Big( -\lambda^2\sum\limits_{i=1}\limits^{k-1}r_i^2\Big)\\
&=G_{k-1},
\end{align*}
which implies that $\{G_k\}_{k=1}^{K}$ is a non-negative super-martingale adapted to $\FFF_0,\FFF_1,\ldots,\FFF_K$. 

For any constant $m>0$, if we define stopping time $T_m = \inf\left\{ t:\|\bds_t\|\ge \lambda \sum\limits_{i=1}\limits^t r_i^2 +m \right\}$, we immediately know that $G_{T_m\wedge k}$, $k\ge0$, is a supermartingale and 
\begin{align*}
&\quad \PP\left( \exists 1\leq t\leq k, \|\bds_t\|\ge \lambda \sum\limits_{i=1}\limits^t r_i^2 +m\right)\\
&= \PP \left( \|\bds_{T_m}\|\ge \lambda \sum\limits_{i=1}\limits^{T_m} r_i^2 +m, 1\leq T_m\leq k\right)\\
&= \PP \left( \|\bds_{T_m\wedge k}\|\ge \lambda \sum\limits_{i=1}\limits^{T_m \wedge k} r_i^2 +m, 1\leq T_m\leq k\right)\\
&\leq \PP\left( G_{T_m \wedge k} \ge \text{exp}\Big( -\lambda^2\sum\limits_{i=1}\limits^{{T_m \wedge k}}r_i^2\Big) cosh\Big( \lambda^2 \sum\limits_{i=1}\limits^{T_m \wedge k} r_i^2 +m\lambda\Big) \right)\\
&\leq \PP\Bigg( G_{T_m \wedge k} \ge \frac{1}{2}\text{exp}\Big( -\lambda^2\sum\limits_{i=1}\limits^{{T_m \wedge k}}r_i^2+ \big( \lambda^2\sum\limits_{i=1}\limits^{{T_m \wedge k}}r_i^2+m\lambda\big)\Big) \Bigg)\\
&=\PP\left( 2G_{T_m \wedge k}\ge e^{\lambda m}\right)\\
&\leq \frac{2\EE G_{T_m \wedge k}}{e^{\lambda m}}\\
&\leq 2e^{-\lambda m}\EE G_0\\
&= 2e^{-\lambda m},
\end{align*}
where the 2nd step is based on the fact that $cosh(y)\ge \frac{1}{2}e^y, \forall y\in \RR$, the 4th step is by Chebyshev's inequality and the 5th step is based on the supermartingale property.

Therefore, if we let $\lambda m =\log \frac{2}{\delta}$,
$$
\PP\left( \exists 1\leq t\leq k, \|\bds_t\|\ge \lambda \sum\limits_{i=1}\limits^t r_i^2 + \frac{1}{\lambda}\log \frac{2}{\delta}\right) \leq \delta.
$$
Since $k$ can be up to $K$,
$$
\PP\left( \exists 1\leq t\leq K, \|\bds_t\|\ge \lambda \sum\limits_{i=1}\limits^t r_i^2 + \frac{1}{\lambda}\log \frac{2}{\delta}\right) \leq \delta.
$$
The final conclusion can be obtained immediately by following similar steps given in the proof of Corollary 8 from \textcite{jin2019short}.
\end{proof}



\begin{proof}[\textbf{Proof of Proposition \ref{prop:nudiff}}]
Recall that
$$
\bdepsilon_0\upj= \bdnu_0\upj-\nabla f\left(\bdx_0\upj\right) = \frac{1}{B_j}\sum\limits_{i\in \III_j} \nabla f_i\left(\bdx_0\upj\right)-\nabla f\left(\bdx_0\upj\right),
$$
where $\III_j$ is sampled without replacement. Since $\Big\|\nabla f_i\big(\bdx_0\upj\big)\Big\| \leq \alpha_M$, $i=1,2,\ldots,n$, based on Proposition \ref{prop:normhoeffding},
\begin{equation}
\PP\left(\|\bdepsilon_0\upj\| \ge t | \FFF_{j,-1}\right) \leq 3\text{exp}\left(-\frac{B_j t^2}{64 \alpha_M^2}\right) \mathbf{1}\left\{B_j < n\right\}.
\label{eq:prop1_1}
\end{equation}

Next, if we suppose $\III_m\upj = \left\{ i_{m,1}\upj,i_{m,2}\upj,\ldots,i_{m,b_j}\upj\right\}$, where $i_{m,t_1}\upj \ne i_{m,t_2}\upj$ for any $1\leq t_1 < t_2 \leq b_j$, we have
\begin{align*}
&\quad\bdepsilon_m\upj\\
&= \frac{1}{b_j}\sum\limits_{i\in \III_m\upj} \Biggl[ \nabla f_i(\bdx_m\upj) - \nabla f(\bdx_m\upj) +\nabla f(\bdx_{m-1}\upj)- \nabla f_i(\bdx_{m-1}\upj)\Biggr]\\
&= \sum\limits_{r=1}\limits^{b_j} \frac{1}{b_j}\Biggl[ \nabla f_{i_{m,r}\upj}(\bdx_m\upj) - \nabla f(\bdx_m\upj) +\nabla f(\bdx_{m-1}\upj)- \nabla f_{i_{m,r}\upj}(\bdx_{m-1}\upj)\Biggr]\\
&\triangleq \sum\limits_{r=1}\limits^{b_j} \bdrho_{(m-1)b_j+r}\upj.
\end{align*}

Let 
$$
\tilde{\FFF}_0\upj = \FFF_{j,0}
$$
and 
$$
\tilde{\FFF}_{a_1b_j+a_2}\upj=\sigma\left( \tilde{\FFF}_{a_1b_j+a_2-1}\upj \bigcup \sigma\big( i_{a_1+1,a_2}\upj\big)\right)
$$
for $a_1=0,1,2,\ldots$ and $a_2 = 1,2,\ldots,b_j$. Then, we can see that $\big\{\bdrho_s\upj\big\}_{s=1}^{kb_j}$ is a martingale difference sequence adapted to $\big\{\tilde{\FFF}_s\upj\big\}_{s=0}^{kb_j}$.

Notice that for $m=1,2,\ldots,k$ and $r=1,2,\ldots,b_j$,
\begin{align*}
&\quad \big\| \bdrho_{(m-1)b_j+r}\upj\big\|\\
&= \left\|\frac{1}{b_j}\big[ \nabla f_{i_{m,r}\upj}(\bdx_m\upj) - \nabla f(\bdx_m\upj) +\nabla f(\bdx_{m-1}\upj)- \nabla f_{i_{m,r}\upj}(\bdx_{m-1}\upj)\big] \right\|\\
&\leq \frac{2L}{b_j}\left\| \bdx_m\upj - \bdx_{m-1}\upj\right\|.
\end{align*}

Therefore, based on Theorem \ref{thm:martingaleah}, for any fixed $\delta'>0$, $B>b>0$, with probability at least $1-\delta'$, either
\begin{align*}
\left( \sigma_k\upj\right)^2 &\triangleq \sum\limits_{m=1}\limits^k \sum\limits_{r=1}\limits^{b_j}\left( \frac{2L}{bj} \left\| \bdx_m\upj - \bdx_{m-1}\upj\right\| \right)^2\\
&= \frac{4L^2}{b_j}\sum\limits_{m=1}\limits^k \left\| \bdx_m\upj - \bdx_{m-1}\upj\right\|^2\\
&\ge B,
\end{align*}
or
\begin{align*}
&\quad\left\| \bdnu_k\upj -\nabla f\left( \bdx_k\upj \right) -\bdepsilon_0\upj \right\|^2\\
&=\Big\| \sum\limits_{s=1}\limits^{kb_j}\bdrho_s\upj\Big\|^2\\
&\leq 9\max \left\{ \big( \sigma_k\upj\big)^2 ,b\right\} \left( \log \frac{2}{\delta'}+\log\log \frac{B}{b}\right).
\end{align*}

Under the compact constraint,
$$
\big( \sigma_k\upj\big)^2 \leq \frac{4L^2d_1^2k}{b_j}.
$$
Thus, if we let $B=\frac{8L^2d_1^2k}{b_j}$ and $b=\frac{4L^2\tau k}{b_j}$ for some $\tau\in (0,1)$, it would be of probability 0 to have $\big( \sigma_k\upj\big)^2 \ge B$. Thus, with probability at least $1-\delta'$,
\begin{align}
&\quad\left\| \bdnu_k\upj -\nabla f\left( \bdx_k\upj \right) -\bdepsilon_0\upj \right\|^2\nonumber\\
&\leq 9\left( \big( \sigma_k\upj\big)^2 + \frac{4L^2\tau k}{b_j} \right) \left( \log \frac{2}{\delta'} + \log \log \frac{2d_1^2}{\tau}\right)\label{eq:prop1_2}\\
&\leq 9\left( \big( \tilde{\sigma}_k\upj\big)^2 + \frac{4L^2\tau k}{b_j} \right) \left( \log \frac{2}{\delta'} + \log \log \frac{2d_1^2}{\tau}\right).\nonumber
\end{align}
According to (\ref{eq:prop1_1}), with probability at least $1-\delta'$,
\begin{equation}
\big\| \bdepsilon_0\upj\big\|^2 \leq \frac{64\alpha_M^2}{B_j}\log\frac{3}{\delta'}\mathbf{1}\left\{B_j<n\right\}.
\label{eq:prop1_3}
\end{equation}
Thus, combining (\ref{eq:prop1_2}) and (\ref{eq:prop1_3}), with probability at least $1-2\delta'$,
\begin{align*}
&\quad \left\| \bdnu_k\upj -\nabla f\left( \bdx_k\upj\right)\right\|^2\\
&\leq 18\left( \big( \tilde{\sigma}_k\upj\big)^2 + \frac{4L^2\tau k}{b_j} \right) \left( \log \frac{2}{\delta'} + \log \log \frac{2d_1^2}{\tau}\right)\\
&\quad+ \frac{128\alpha_M^2}{B_j}\log\frac{3}{\delta'}\mathbf{1}\left\{B_j<n\right\}.
\end{align*}
\end{proof}

\begin{proposition}[Inner Loop Analysis]
Given Assumptions \ref{assump:smooth}, \ref{assump:extent} and \ref{assump:technical}, under the parameter setting given in Theorem \ref{thm:main1}, let $\Omega=\bigcup\limits_{j=1}\limits^{\infty} \Omega_j$, where the definition of $\Omega_j$ is given in (\ref{eq:omegaj}). On $\Omega$,
\begin{align*}
&\quad f\left( \bdx_{K_j}\upj\right) -f\left( \bdx_0\upj\right)\\
&\leq -\frac{1}{16L}\sum\limits_{k=0}\limits^{K_j-1} \left\| \bdnu_k\upj\right\|^2+ \frac{L^2\eta_j \tau_j l_j K_j^2}{b_j}+\frac{\eta_jK_jq_j}{2},
\end{align*}
for all $j\in \ZZ_+$. Such event $\Omega$ occurs with probability at least $1-\delta$.
\label{prop:innerloop}
\end{proposition}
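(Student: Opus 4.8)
The plan is to combine the one-step descent guarantee coming from the $L$-smooth extension (Assumption \ref{assump:extent}) with the gradient-approximation bound of Proposition \ref{prop:nudiff}, and then to absorb the resulting self-referential term back into the left-hand side by exploiting the precise choice of hyperparameters. Concretely, I would start from the summed descent inequality already derived in the proof of Proposition \ref{prop:stopgua}, namely (\ref{eq:sg2}), which after dropping the non-positive terms $-\tfrac{\eta_j}{2}\|\nabla f(\bdx_k\upj)\|^2$ reads
\begin{equation*}
f\big(\bdx_{K_j}\upj\big) - f\big(\bdx_0\upj\big) \leq \frac{\eta_j}{2}\sum_{k=0}^{K_j-1}\big\|\bdnu_k\upj - \nabla f(\bdx_k\upj)\big\|^2 - \frac{\eta_j}{2}(1-L\eta_j)\sum_{k=0}^{K_j-1}\big\|\bdnu_k\upj\big\|^2 .
\end{equation*}
The only stochastic ingredient left is the error sum $\sum_k \|\bdnu_k\upj - \nabla f(\bdx_k\upj)\|^2$, which I would control on the event $\Omega$.

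On $\Omega_j$ the defining bound (\ref{eq:omegaj}) gives $\|\bdnu_k\upj-\nabla f(\bdx_k\upj)\|^2 \le l_j\big((\tilde\sigma_k\upj)^2 + 4L^2\tau_j k/b_j\big) + q_j$ for every $0\le k\le K_j$, with $(\tilde\sigma_k\upj)^2 = \tfrac{4L^2\eta_j^2}{b_j}\sum_{m=1}^{k}\|\bdnu_{m-1}\upj\|^2$. Summing over $k$, the three contributions are treated separately: $\sum_{k=0}^{K_j-1} q_j = K_j q_j$; the linear term gives $\sum_{k=0}^{K_j-1} k = K_j(K_j-1)/2 \le K_j^2/2$; and the self-referential term is rearranged by switching the order of summation,
\begin{equation*}
\sum_{k=0}^{K_j-1}\sum_{m=1}^{k}\big\|\bdnu_{m-1}\upj\big\|^2 = \sum_{m=1}^{K_j-1}(K_j-m)\big\|\bdnu_{m-1}\upj\big\|^2 \le K_j\sum_{k=0}^{K_j-1}\big\|\bdnu_k\upj\big\|^2 .
\end{equation*}
Hence $\sum_k \|\bdnu_k\upj-\nabla f(\bdx_k\upj)\|^2 \le \tfrac{4L^2\eta_j^2 K_j l_j}{b_j}\sum_k\|\bdnu_k\upj\|^2 + \tfrac{2L^2\tau_j l_j K_j^2}{b_j} + K_j q_j$.

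Substituting this into the descent inequality, the coefficient of $\sum_k\|\bdnu_k\upj\|^2$ becomes $-\tfrac{\eta_j}{2}\big(1 - L\eta_j - \tfrac{4L^2\eta_j^2 K_j l_j}{b_j}\big)$. The crux is that the parameter choices $\eta_j = 1/(4L)$ and $b_j = l_j K_j$ make $L\eta_j = 1/4$ and $\tfrac{4L^2\eta_j^2 K_j l_j}{b_j} = 1/4$ simultaneously, so the bracket equals exactly $1/2$ and the coefficient collapses to $-\tfrac{\eta_j}{4} = -\tfrac{1}{16L}$; the two remaining terms are $\tfrac{\eta_j}{2}\cdot\tfrac{2L^2\tau_j l_j K_j^2}{b_j} = \tfrac{L^2\eta_j\tau_j l_j K_j^2}{b_j}$ and $\tfrac{\eta_j}{2}K_j q_j$, which is exactly the asserted bound. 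The main obstacle is not any single estimate but verifying that this self-bounding absorption leaves the net coefficient strictly negative — i.e. that the two fractions really cancel into the positive fraction $1/2$ — which is precisely where the tuning $b_j = l_j K_j$ is needed.

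Finally, for the probability statement I would note that the displayed bound holds for any fixed $j$ whenever the event $\Omega_j$ of (\ref{eq:omegaj}) occurs, hence it holds for all $j\in\ZZ_+$ on the event $\Omega$ on which every such bound is in force. By Proposition \ref{prop:nudiff} each index $k$ fails with probability at most $2\delta'_j$, so $\PP(\Omega_j^c)\le 2(K_j+1)\delta'_j$; with $\delta'_j = \delta/(4C_e j^4)$ and $K_j\le j$ a union bound over $j$ yields $\sum_j \PP(\Omega_j^c)\le \delta$ — exactly the estimate already carried out in Appendix \ref{append:sketch_output} — so that the event on which the inequality holds for all $j$ has probability at least $1-\delta$.
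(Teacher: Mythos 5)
Your proposal is correct and follows essentially the same route as the paper's own proof: start from the summed descent inequality (\ref{eq:sg2}), insert the $\Omega_j$ bound from (\ref{eq:omegaj}), exchange the order of summation to absorb the self-referential term with coefficient $\tfrac{4L^2\eta_j^2 l_j K_j}{b_j}=\tfrac14$ alongside $L\eta_j=\tfrac14$, and finish with the same union bound over $j$ and $k$. No gaps.
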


\begin{proof}[\textbf{Proof of Proposition \ref{prop:innerloop}}]
Firstly, as what we have shown in section \ref{sec:theory}, $\bigcup\limits_{j=0}\limits^{\infty}\Omega_j$ occurs with probability at least $1-\delta$. 

Based on (\ref{eq:sg2}), on $\bigcup\limits_{j=0}\limits^{\infty}\Omega_j$,
\begin{align*}
    &\quad f\left( \bdx_{K_j}\upj\right) -f\left( \bdx_0\upj\right)\\
    &\leq \frac{\eta_j}{2}\sum\limits_{k=0}\limits^{K_j-1}\left[l_j\left( \left( \tilde{\sigma}_k\upj\right)^2+\frac{4L^2\tau_jk}{b_j}\right) +q_j\right]\\
    &\quad - \frac{\eta_j}{2}(1-L\eta_j)\sum\limits_{k=0}\limits^{K_j-1} \left\| \bdnu_k\upj\right\|^2\\
    &\leq \frac{\eta_j}{2}\sum\limits_{k=0}\limits^{K_j-1}\left( \frac{4L^2\eta_j^2l_j}{b_j}\sum\limits_{m=1}\limits^k\left\| \bdnu_{m-1}\upj\right\|^2\right) + \frac{2L^2\eta_j\tau_j l_j}{b_j}\frac{K_j^2}{2}\\
    &\quad +\frac{\eta_jK_jq_j}{2}- \frac{\eta_j}{2}(1-L\eta_j)\sum\limits_{k=0}\limits^{K_j-1} \left\| \bdnu_k\upj\right\|^2\\
    &\leq \frac{2L^2\eta_j^3l_jK_j}{b_j}\sum\limits_{k=0}\limits^{K_j-1} \left\| \bdnu_k\upj\right\|^2 + \frac{L^2\eta_j \tau_j l_j K_j^2}{b_j}+\frac{\eta_jK_jq_j}{2}\\
    &\quad- \frac{\eta_j}{2}(1-L\eta_j)\sum\limits_{k=0}\limits^{K_j-1} \left\| \bdnu_k\upj\right\|^2\\
    &=-\frac{\eta_j}{2}\left( 1-L\eta_j - \frac{4L^2\eta_j^2l_jK_j}{b_j}\right) \sum\limits_{k=0}\limits^{K_j-1} \left\| \bdnu_k\upj\right\|^2\\
    &\quad+ \frac{L^2\eta_j \tau_j l_j K_j^2}{b_j}+\frac{\eta_jK_jq_j}{2}\\
    &= -\frac{1}{8L}\left( 1-\frac{1}{4}-\frac{1}{4}\right)\sum\limits_{k=0}\limits^{K_j-1} \left\| \bdnu_k\upj\right\|^2+ \frac{L^2\eta_j \tau_j l_j K_j^2}{b_j}+\frac{\eta_jK_jq_j}{2}\\
    &=-\frac{1}{16L}\sum\limits_{k=0}\limits^{K_j-1} \left\| \bdnu_k\upj\right\|^2+ \frac{L^2\eta_j \tau_j l_j K_j^2}{b_j}+\frac{\eta_jK_jq_j}{2},
\end{align*}
where the 5th step is based on our choices of $\eta_j=\frac{1}{4L}$ and $b_j=l_jK_j$, $j=1,2,\ldots$.
\end{proof}


\begin{proof}[\textbf{Proof of Proposition \ref{prop:stopyespart1}}]

Firstly,
\begin{align}
    &\quad -\Delta_f \leq f\left( \tilde{\bdx}_{{2T}}\right) - f\left( \tilde{\bdx}_{{T}}\right) = f\left( \bdx_{K_{2T}}^{(2T)}\right) -f\left( \bdx_0^{(T+1)}\right) \nonumber\\
    &\leq \sum\limits_{j=T+1}\limits^{2T}\left[\frac{-1}{16L}\sum\limits_{k=0}\limits^{K_j-1} \left\| \bdnu_k\upj\right\|^2+ \frac{L^2\eta_j \tau_j l_j K_j^2}{b_j}+\frac{\eta_jK_jq_j}{2} \right]\label{eq:fvaluebound}\\
    &= \sum\limits_{j=T+1}\limits^{2T}\left[\frac{L^2\eta_j \tau_j l_j K_j^2}{b_j}+\frac{\eta_jK_jq_j}{2} \right] - \frac{1}{16L}\sum\limits_{j=T+1}\limits^{2T}\sum\limits_{k=0}\limits^{K_j-1} \left\| \bdnu_k\upj\right\|^2,\nonumber
\end{align}
where the 3rd step is based on Proposition \ref{prop:innerloop}.

For simplifying notations, we denote
$$
A_T \triangleq \sum\limits_{j=T+1}\limits^{2T}\left[\frac{L^2\eta_j\tau_jl_jK_j^2}{b_j} + \frac{\eta_jK_jq_j}{2} \right].
$$
Then,
\begin{align}
    &\quad A_T\nonumber\\
    &= \sum\limits_{j=T+1}\limits^{2T} \left(\frac{L\tau_jK_j}{4}+\frac{K_jq_j}{8L}\right)\nonumber\\
    &= \sum\limits_{j=T+1}\limits^{2T} \left( \frac{L\sqrt{j^2\wedge n}}{4j^3}+\frac{16\alpha_M^2}{L\sqrt{j^2\wedge n}}\log\frac{12C_ej^4}{\delta} \mathbf{1}\left\{j^2<n\right\}\right)\nonumber\\
    &\leq \sum\limits_{j=T+1}\limits^{2T} \left( \frac{L}{4j^2} +\frac{16\alpha_M^2}{L\sqrt{j^2\wedge n}}\log\frac{12C_ej^4}{\delta} \mathbf{1}\left\{j^2<n\right\}\right)\nonumber\\
    &\leq \sum\limits_{j=T+1}\limits^{2T} \left( \frac{L}{4j^2} +\frac{16\alpha_M^2}{Lj}\log\frac{12C_ej^4}{\delta} \right)\label{eq:longbound1}\\
    &\leq \frac{C_eL}{4} + \sum\limits_{j=T+1}\limits^{2T}\frac{16\alpha_M^2}{Lj}\log\frac{12C_ej^4}{\delta}\nonumber\\
    &\leq \frac{C_eL}{4} + \sum\limits_{j=T+1}\limits^{2T}\frac{16\alpha_M^2}{LT}\log\frac{12C_e(2T)^4}{\delta}\nonumber\\
    &= \frac{C_eL}{4} + \frac{16\alpha_M^2}{L}\log \frac{192C_eT^4}{\delta}\nonumber\\
    &= \frac{C_eL}{4} + \frac{16\alpha_M^2}{L}\log \frac{192C_e}{\delta} + \frac{64\alpha_M^2}{L}\log T\nonumber,
\end{align}
where the 1st step is based on the choices that $\eta_j=\frac{1}{4L}$ and $b_j=l_jK_j$, the second step is bases on the choices of $K_j=\sqrt{B_j}=\sqrt{j^2\wedge n}$, $\delta'_j=\frac{\delta}{4C_ej^4}$. According to Lemma \ref{lemma:iteration1}, as $T\ge T_1$,
\begin{equation}
\frac{1}{T^2\wedge (\sqrt{n}T)} \leq \frac{\varepsilon^2}{320 L (c_1+\Delta_f)}.
\label{eq:stopbound1}
\end{equation}
According to Lemma \ref{lemma:iteration2}, as $T\ge T_2$,
\begin{equation}
\frac{\log T}{T^2\wedge (\sqrt{n}T)} \leq \frac{\varepsilon^2}{320 L c_2}.
\label{eq:stopbound2}
\end{equation}
If we suppose to the contrary that
$$
\frac{1}{K_j} \sum\limits_{k=0}\limits^{K_j-1}\left\| \bdnu_k\upj\right\|^2 > \tilde{\varepsilon}^2
$$
holds for all $T+1\leq j \leq 2T$, then we have
{\small
\begin{align*}
&\quad \frac{1}{16L}\sum\limits_{j=T+1}\limits^{2T} \sum\limits_{k=0}\limits^{K_j-1} \left\| \bdnu_k\upj\right\|^2 \ge \frac{\tilde{\varepsilon}^2}{16L} \sum\limits_{j=T+1}\limits^{2T}K_j\nonumber  \ge \frac{\tilde{\varepsilon}^2}{16L} \sum\limits_{j=T+1}\limits^{2T} \left( T\wedge \sqrt{n} \right) = \frac{\tilde{\varepsilon}^2}{16L}T^2 \wedge (\sqrt{n}T).
\end{align*}
}
By (\ref{eq:longbound1}), (\ref{eq:stopbound1}), (\ref{eq:stopbound2}) and the above results,
\begin{align*}
&\quad\frac{80L}{T^2\wedge(\sqrt{n}T)} \left\{ \Delta_f + A_T -\frac{1}{16L} \sum\limits_{j=T+1}\limits^{2T} \sum\limits_{k=0}\limits^{K_j-1} \left\| \bdnu_k\upj\right\|^2 \right\}\\
&\leq \frac{80L}{T^2\wedge(\sqrt{n}T)} (\Delta_f + A_T) - 5\tilde{\varepsilon}^2\\
&=  \frac{80L}{T^2\wedge(\sqrt{n}T)} (\Delta_f + A_T) -\varepsilon^2\\
&\leq \frac{80L}{T^2\wedge(\sqrt{n}T)}(\Delta_f + c_1) + \frac{80Lc_2\log T}{T^2\wedge(\sqrt{n}T)} - \varepsilon^2\\
&\leq \frac{\varepsilon^2}{4} + \frac{\varepsilon^2}{4} - \varepsilon^2\\
&= -\frac{\varepsilon^2}{2},
\end{align*}
which contradicts (\ref{eq:fvaluebound}). 
\end{proof}


\begin{proof}[\textbf{Proof of Proposition \ref{prop:stopyespart2}}]
\begin{align}
&\quad \varepsilon_T\nonumber\\
&= 8L^2\tau_T+2q_T\nonumber\\
&= \frac{8L^2}{T^3} + \frac{256\alpha_M^2}{B_T}\log\frac{3}{\delta'_T}\mathbf{1}\left\{B_T<n\right\}\nonumber\\
&\leq \frac{8L^2}{T^3} + \frac{256\alpha_M^2}{T^2}\log\frac{3}{\delta'_T}\label{eq:longstopbound2}\\
&= \frac{8L^2}{T^3} + \frac{256\alpha_M^2}{T^2}\log\frac{12C_eT^4}{\delta}\nonumber\\
&\leq \left( 8L^2 + 256\alpha_M^2\log \frac{12C_e}{\delta}\right) \frac{1}{T^2} + \frac{1024\alpha_M^2}{T^2}\log T\nonumber\\
&=c_3\frac{1}{T^2}+c_4\frac{\log T}{T^2},
\end{align}
where the 2nd step is based on our choice of $\tau_T=\frac{1}{T^3}$ and the 4th step is based on the choice of $\delta'_T=\frac{\delta}{4C_eT^4}$. According to Lemma \ref{lemma:iteration1}, where we can simply let $n=\infty$, as $T\ge T_3$,
\begin{equation}
\frac{1}{T^2} \leq \frac{\varepsilon^2}{4c_3}.
\label{eq:stopbound3}
\end{equation}
Similarly, according to Lemma \ref{lemma:iteration2} and Assumption \ref{assump:technical}, as $T\ge T_4$,
\begin{equation}
\frac{\log T}{T^2} \leq \frac{\varepsilon^2}{4 c_4}.
\label{eq:stopbound4}
\end{equation}
Combining (\ref{eq:longstopbound2}), (\ref{eq:stopbound3}) and (\ref{eq:stopbound4}),
$$
\varepsilon_T \leq \frac{\varepsilon^2}{2}.
$$
\end{proof}


\begin{proof}[\textbf{Proof of Corollary \ref{cor:setting1_append}}]
This part follows a similar way as the complexity analysis in \textcite{horvath2020adaptivity}. It is easy to know that if Algorithm \ref{algo1} stops in $T$ outer iterations, the first order computational complexity is
$$
\tilde{\OOO}_{L,\Delta_f,\alpha_M} \left( T^3\wedge (nT)\right).
$$
Thus, it is sufficient to show 
$$
T_i^3\wedge (nT_i) = \tilde{\OOO}_{L,\Delta_f,\alpha_M}\left(\frac{1}{\varepsilon^3}\wedge \frac{\sqrt{n}}{\varepsilon^2}\right),\ i=1,2,3,4.
$$

\noindent \textbf{$\bullet\ T_1^3\wedge (nT_1)$}

For simplicity, we let $\tilde{c}_1 = \sqrt{320L(c_1+\Delta_f)}$ and consequently $T_1 = \Big\lceil \frac{\tilde{c}_1}{\varepsilon} + \frac{\tilde{c}_1^2}{\sqrt{n}\varepsilon^2}\Big\rceil$.

When $\sqrt{n}\varepsilon \leq \tilde{c}_1$, $\frac{\tilde{c}_1}{\varepsilon} \leq \frac{\tilde{c}_1^2}{\sqrt{n}\varepsilon^2}$ and consequently $T=\OOO \left( \frac{\tilde{c}_1^2}{\sqrt{n}\varepsilon^2}\right)$. Hence,
$$
T_1^3\wedge (nT_1) = \OOO(nT_1) = \OOO \left( \frac{\sqrt{n}\tilde{c}_1^2}{\varepsilon^2}\right)= \OOO \left( \frac{\sqrt{n}\tilde{c}_1^2}{\varepsilon^2} \wedge \frac{\tilde{c}_1^3}{\varepsilon^3}\right),
$$
where the last step is due to $\sqrt{n} \leq \frac{\tilde{c}_1}{\varepsilon}$.

When $\sqrt{n}\varepsilon \ge \tilde{c}_1$, $\frac{\tilde{c}_1}{\varepsilon} \ge \frac{\tilde{c}_1^2}{\sqrt{n}\varepsilon^2}$ and consequently $T=\OOO \left( \frac{\tilde{c}_1}{\varepsilon}\right)$. Hence,
$$
T_1^3\wedge (nT_1) = \OOO(T_1^3) = \OOO \left( \frac{\tilde{c}_1^3}{\varepsilon^3}\right)= \OOO \left( \frac{\sqrt{n}\tilde{c}_1^2}{\varepsilon^2} \wedge \frac{\tilde{c}_1^3}{\varepsilon^3}\right),
$$
where the last step is due to $\tilde{c}_1 \leq \sqrt{n}\varepsilon$.

To sum up,
$$
T_1^3\wedge (nT_1) = \OOO \left( \frac{\sqrt{n}\tilde{c}_1^2}{\varepsilon^2} \wedge \frac{\tilde{c}_1^3}{\varepsilon^3}\right) = \tilde{\OOO}_{L,\Delta_f,\alpha_M}\left(\frac{1}{\varepsilon^3}\wedge \frac{\sqrt{n}}{\varepsilon^2}\right).
$$
\noindent \textbf{$\bullet\ T_2^3\wedge (nT_2)$}

Secondly, if we let $\tilde{c}_2=\sqrt{320Lc_2}$, we have $\tilde{c}_2\ge 4$ based on Assumption \ref{assump:technical}. As a result, $T_2 =\Theta\left( \frac{3\tilde{c}_2}{\varepsilon} \log \frac{3\tilde{c}_2}{\varepsilon} + \frac{2\tilde{c}_2^2}{\sqrt{n}\varepsilon^2} \log \frac{\tilde{c}_2^2}{\varepsilon^2}\right) $. Therefore, it is equivalent to study $\bar{T}_2^3\wedge (n\bar{T}_2)$ where $\bar{T}_2=\frac{3\tilde{c}_2}{\varepsilon} \log \frac{3\tilde{c}_2}{\varepsilon} + \frac{2\tilde{c}_2^2}{\sqrt{n}\varepsilon^2} \log \frac{\tilde{c}_2^2}{\varepsilon^2}$.

When $\tilde{c}_2 \ge 1.5\sqrt{n}\varepsilon$,
$$
\frac{3\tilde{c}_2}{\varepsilon} \log \frac{3\tilde{c}_2}{\varepsilon} \leq \frac{3\tilde{c}_2}{\varepsilon} \log \frac{\tilde{c}_2^2}{\varepsilon^2}  \leq \frac{2\tilde{c}_2^2}{\sqrt{n}\varepsilon^2} \log \frac{\tilde{c}_2^2}{\varepsilon^2}.
$$
Thus, $\bar{T}_2 = \OOO \left( \frac{2\tilde{c}_2^2}{\sqrt{n}\varepsilon^2} \log \frac{\tilde{c}_2^2}{\varepsilon^2} \right)$. Then,
\begin{align*}
&\quad\bar{T}_2^3 \wedge(n\bar{T}_2) = \OOO\left( n\bar{T}_2\right)\\
&= \OOO\left( \frac{2\sqrt{n}\tilde{c}_2^2}{\varepsilon^2} \log \frac{\tilde{c}_2^2}{\varepsilon^2} \right)  = \OOO\left( \frac{2\sqrt{n}\tilde{c}_2^2}{\varepsilon^2} \log \frac{\tilde{c}_2^2}{\varepsilon^2} \wedge \frac{4\tilde{c}_2^3}{3\varepsilon^3}\log \frac{\tilde{c}_2^2}{\varepsilon^2}\right).
\end{align*}

When $\tilde{c}_2 \leq 1.5\sqrt{n}\varepsilon$,
\begin{align*}
&\quad \frac{2\tilde{c}_2^2}{\sqrt{n}\varepsilon^2} \log \frac{\tilde{c}_2^2}{\varepsilon^2} = \frac{4\tilde{c}_2^2}{\sqrt{n}\varepsilon^2} \log \frac{\tilde{c}_2}{\varepsilon} \leq \frac{4\tilde{c}_2^2}{\sqrt{n}\varepsilon^2} \log \frac{3\tilde{c}_2}{\varepsilon}\\
&\leq \frac{1.5\varepsilon}{\tilde{c}_2}\cdot \frac{4\tilde{c}_2^2}{\varepsilon^2} \log \frac{3\tilde{c}_2}{\varepsilon} = \frac{6 \tilde{c}_2}{\varepsilon}\log \frac{3\tilde{c}_2}{\varepsilon}.
\end{align*}
Thus, $\bar{T}_2 = \OOO \left( \frac{ \tilde{c}_2}{\varepsilon}\log \frac{3\tilde{c}_2}{\varepsilon} \right) $. Then
\begin{align*}
&\quad\bar{T}_2^3 \wedge(n\bar{T}_2) = \OOO \left( \bar{T}_2^3\right) = \OOO \left( \frac{ \tilde{c}_2^3}{\varepsilon^3}\left(\log \frac{3\tilde{c}_2}{\varepsilon}\right)^3 \right)\\
&= \OOO \left( \frac{ \tilde{c}_2^3}{\varepsilon^3}\left(\log \frac{3\tilde{c}_2}{\varepsilon}\right)^3 \wedge \frac{1.5\sqrt{n} \tilde{c}_2^2}{\varepsilon^2}\left(\log \frac{3\tilde{c}_2}{\varepsilon}\right)^3 \right).
\end{align*}
To sum up,
$$
{T}_2^3 \wedge(n{T}_2) = \tilde{\OOO}_{L,\Delta_f,\alpha_M}\left(\frac{1}{\varepsilon^3}\wedge \frac{\sqrt{n}}{\varepsilon^2}\right).
$$

\noindent \textbf{$\bullet\ T_3^3\wedge (nT_3)$}

Since $T_3 = \tilde{\Theta}_{L,\Delta_f,\alpha_M}\left( \frac{1}{\varepsilon}\right)$, we can directly know that
$$
{T}_3^3 \wedge(n{T}_3) = \tilde{\OOO}_{L,\Delta_f,\alpha_M}\left(\frac{1}{\varepsilon^3}\wedge \frac{{n}}{\varepsilon}\right) = \tilde{\OOO}_{L,\Delta_f,\alpha_M}\left(\frac{1}{\varepsilon^3}\wedge \frac{\sqrt{n}}{\varepsilon^2}\right).
$$

\noindent \textbf{$\bullet\ T_4^3\wedge (nT_4)$}

Similar to the previous case,
$$
T_4^3\wedge (nT_4) = \tilde{\OOO}_{L,\Delta_f,\alpha_M}\left(\frac{1}{\varepsilon^3}\wedge \frac{\sqrt{n}}{\varepsilon^2}\right).
$$
\end{proof}


\begin{proof}[\textbf{Proof of Theorem \ref{thm:main2_append}}]
We can see that many results given under the setting of Theorem \ref{thm:main1} can still apply under the current setting. If we still define $\Omega_j$ as (\ref{eq:omegaj}), $\Omega = \bigcup\limits_{j=1}\limits^{\infty}\Omega_j$ occurs with probability at least $1-\delta$.

Under the current setting, Proposition \ref{prop:innerloop} is still valid. Thus, on $\Omega$, for any $j\in \ZZ_+$,
\begin{align*}
    &\quad f\left( \bdx_{K_j}\upj\right) -f\left( \bdx_0\upj\right)\\
    &\leq  - \frac{1}{16L}\sum\limits_{k=0}\limits^{K_j-1} \left\| \bdnu_k\upj\right\|^2 +\frac{L^2\eta_j\tau_jl_jK_j^2}{b_j}+\frac{\eta_jK_jq_j}{2}\\
    &=- \frac{1}{16L}\sum\limits_{k=0}\limits^{K_j-1} \left\| \bdnu_k\upj\right\|^2 +\frac{L^2\eta_j\tau_jl_jK_j^2}{b_j}\\
    &=- \frac{1}{16L}\sum\limits_{k=0}\limits^{K_j-1} \left\| \bdnu_k\upj\right\|^2 +\frac{L\tau_jK_j}{4}\\
    &=- \frac{1}{16L}\sum\limits_{k=0}\limits^{K_j-1} \left\| \bdnu_k\upj\right\|^2 + \frac{\sqrt{n}L}{4}\tau_j,
\end{align*}
where the 2nd step is due to our choice of $B_j\equiv n$ and consequently $q_j\equiv 0$, the 3rd step is based on our choices of $\eta_j=\frac{1}{4L}$ and $b_j=l_jK_j$, the 4th step is based on our choice of $K_j=n$. Summing the above inequality from $j=1$ to $T$,
\begin{align}
&\quad -\Delta_f^0\nonumber\\
&= f\left(\bdx^*\right) - f\left(\bdx_0^{(1)}\right)\nonumber\\
&\leq f\left(\bdx_{K_T}^{(T)}\right) - f\left(\bdx_0^{(1)}\right)\nonumber\\
&= \sum\limits_{j=1}\limits^T \left( \frac{\sqrt{n}L}{4}\tau_j - \frac{1}{16L}\sum\limits_{k=0}\limits^{K_j-1} \left\| \bdnu_k\upj\right\|^2 \right)\nonumber\\
&= \frac{\sqrt{n}T\tilde{\varepsilon}^2}{32L} - \frac{1}{16L}\sum\limits_{j=1}\limits^T \sum\limits_{k=0}\limits^{K_j-1} \left\| \bdnu_k\upj\right\|^2,\label{eq:delta0bound}
\end{align}
where the 2nd step is according to Assumption \ref{assump:minimumavailable}, the 4th step is based on our choice of $\tau_j\equiv \frac{\tilde{\varepsilon}^2}{8L^2}$. We assert that when $T\ge T_5$, there must exist a $1\leq j\leq T$ such that
$$
\frac{1}{K_j}\sum\limits_{k=0}\limits^{K_j-1}\left\|\bdnu_k^{(j)}\right\|^2 \leq \tilde{\varepsilon}^2.
$$
If not, 
\begin{align*}
&\quad \frac{\sqrt{n}T\tilde{\varepsilon}^2}{32L} - \frac{1}{16L}\sum\limits_{j=1}\limits^T \sum\limits_{k=0}\limits^{K_j-1} \left\| \bdnu_k\upj\right\|^2\\
&= \frac{\sqrt{n}T\tilde{\varepsilon}^2}{32L}- \frac{1}{16L}\sum\limits_{j=1}\limits^T \frac{K_j}{K_j} \sum\limits_{k=0}\limits^{K_j-1} \left\| \bdnu_k\upj\right\|^2\\
&\leq \frac{\sqrt{n}T\tilde{\varepsilon}^2}{32L}- \frac{1}{16L}\sum\limits_{j=1}\limits^T \tilde{\varepsilon}^2{K_j}\\
&= \frac{\sqrt{n}T\tilde{\varepsilon}^2}{32L} - \frac{\sqrt{n}\tilde{\varepsilon}^2T}{16L}\\
&= -\frac{\sqrt{n}T\tilde{\varepsilon}^2}{32L}\\
&=-\frac{\sqrt{n}\varepsilon^2T}{160L}\\
&\leq -(\Delta_f^0+1),
\end{align*}
which is in conflict with (\ref{eq:delta0bound}). Thus, on $\Omega$, the first stopping rule will be met in at most $T$ outer iterations while the second stopping rule is always satisfied. When both stopping rules are met, we can show that the output is of desirable property. Let $1\leq j\leq T$ and $0\leq k \leq K_j$ such that
$$
\frac{1}{K_j}\sum\limits_{k=0}\limits^{K_j-1}\left\|\bdnu_k^{(j)}\right\|^2 \leq \tilde{\varepsilon}^2
$$
and
$$
\left\|\bdnu_{k}^{(j)}\right\|^2 \leq \tilde{\varepsilon}^2.
$$
Then, on $\Omega$,
\begin{align*}
    &\quad \left\| \nabla f\left( \hat{\bdx}\right) \right\|^2\\
    &= \left\| \nabla f\big( \bdx_{k}\upj\big) \right\|^2\\
    &\leq 2\left\|\bdnu_{k}^{(j)}\right\|^2 + 2\left\|\bdnu_{k}^{(j)}- \nabla f\big( \bdx_{k}\upj\big)\right\|^2\\
    &\leq 2\tilde{\varepsilon}^2 + 2\left\|\bdnu_{k}^{(j)}- \nabla f\big( \bdx_{k}\upj\big)\right\|^2\\
    &\leq  2\tilde{\varepsilon}^2 + 2l_j\left( \frac{4L^2\eta_j^2}{b_j}\sum\limits_{m=1}\limits^k\left\|\bdnu_{m-1}\upj\right\|^2 + \frac{4L^2\tau_j k}{b_j}\right)\\
    &\leq 2\tilde{\varepsilon}^2 + 2l_j\left( \frac{4L^2\eta_j^2}{b_j}\sum\limits_{m=1}\limits^{K_j}\left\|\bdnu_{m-1}\upj\right\|^2 + \frac{4L^2\tau_j K_j}{b_j}\right)\\
    &\leq 2\tilde{\varepsilon}^2 + 2l_j\left( \frac{4L^2\eta_j^2K_j\tilde{\varepsilon}^2}{b_j} + \frac{4L^2\tau_j K_j}{b_j}\right)\\
    &= 2\tilde{\varepsilon}^2 + 0.5\tilde{\varepsilon}^2 +8L^2\tau_j\\
    &=3.5\tilde{\varepsilon}^2\\
    &\leq \varepsilon^2,
\end{align*}
where the 4th step is based on Proposition \ref{prop:nudiff}, the 7th step is based on our choices of $\eta_j=\frac{1}{4L}$ and $b_j=l_jK_j$, the 8th step is based on our choice of $\tau_j\equiv \frac{\tilde{\varepsilon}^2}{8L^2}$.
\end{proof}

\newpage
\section{Supplementary Figures}
\label{append:fig}
\begin{figure*}[htbp!]
     \centering
     \begin{subfigure}[b]{0.24\textwidth}
         \centering
         \includegraphics[width=\textwidth]{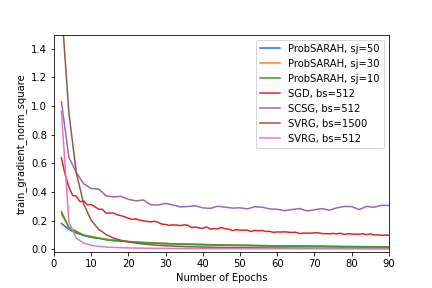}
         \label{fig:r1}
     \end{subfigure}
     \hfill
     \begin{subfigure}[b]{0.24\textwidth}
         \centering
         \includegraphics[width=\textwidth]{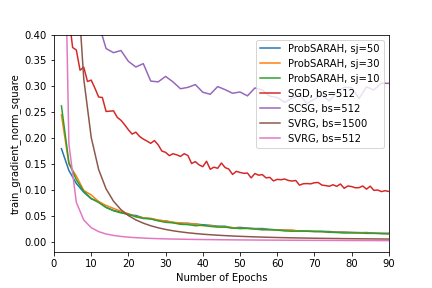}
         \label{fig:r2}
     \end{subfigure}
     \hfill
     \begin{subfigure}[b]{0.24\textwidth}
         \centering
         \includegraphics[width=\textwidth]{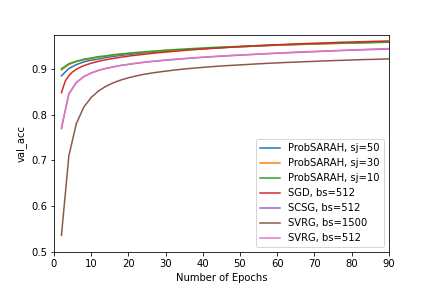}
         \label{fig:r3}
     \end{subfigure}
     \hfill
     \begin{subfigure}[b]{0.24\textwidth}
         \centering
         \includegraphics[width=\textwidth]{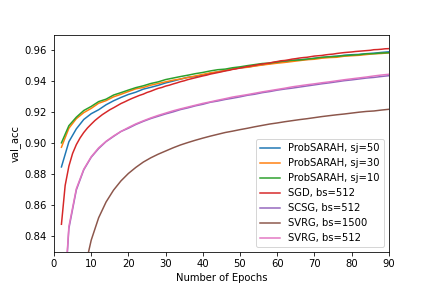}
         \label{fig:r4}
     \end{subfigure}
     
     \begin{subfigure}[b]{0.24\textwidth}
         \centering
         \includegraphics[width=\textwidth]{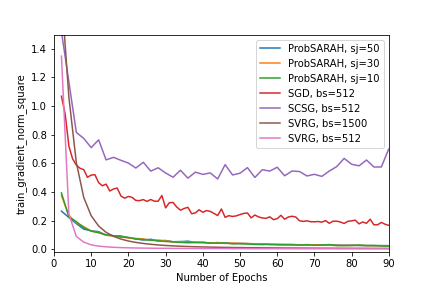}
         \label{fig:r5}
     \end{subfigure}
     \hfill
     \begin{subfigure}[b]{0.24\textwidth}
         \centering
         \includegraphics[width=\textwidth]{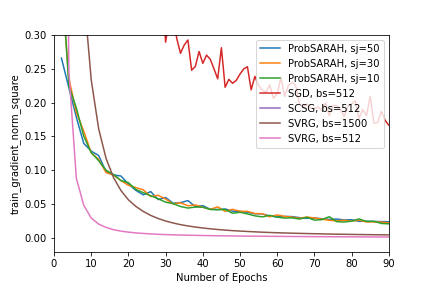}
         \label{fig:r6}
     \end{subfigure}
     \hfill
     \begin{subfigure}[b]{0.24\textwidth}
         \centering
         \includegraphics[width=\textwidth]{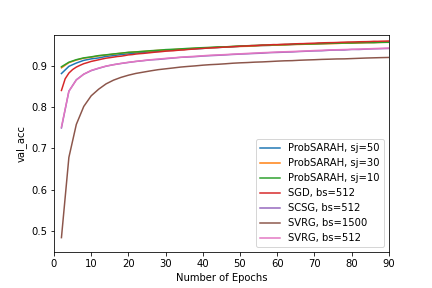}
         \label{fig:r7}
     \end{subfigure}
     \hfill
     \begin{subfigure}[b]{0.24\textwidth}
         \centering
         \includegraphics[width=\textwidth]{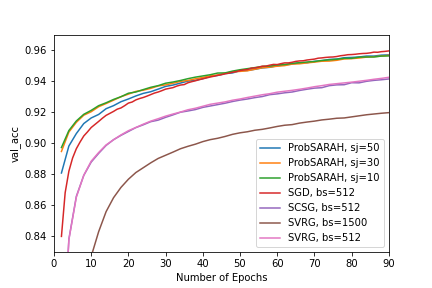}
         \label{fig:r8}
     \end{subfigure}
     \caption{Comparison of convergence with respect to $(1-\delta)$-quantile of square of gradient norm $\left( \|\nabla f\|^2\right)$ and $\delta$-quantile of validation accuracy on the \textbf{MNIST} dataset for $\delta=0.1$ and $\delta=0.01$. The second (fourth) column presents zoom-in figures of those in the first (third) column. Top: $\delta=0.1$. Bottom: $\delta=0.01$. 'bs' stands for batch size. 'sj=x' means that the smallest batch size $\approx x\log x$.}
     \label{fig:mnist2}
\end{figure*}



\end{appendix}

\printbibliography

\end{document}